\documentclass{article}

\usepackage[algo2e,vlined,linesnumbered]{algorithm2e}
\newcounter{alglinenum}

\DontPrintSemicolon

\usepackage{xspace}
\usepackage{amsmath}
\usepackage{framed}
\usepackage{bbm}
\usepackage{algorithm}
\usepackage{float}
\usepackage{natbib}
\usepackage{mathrsfs}
\usepackage{amssymb}
\usepackage{bm}
\usepackage{makecell}
\usepackage{tabulary}
\usepackage{xcolor}
\usepackage{prettyref}
\usepackage{mathtools}
\usepackage{amsmath}
\usepackage{enumitem}
\usepackage{multirow}
\usepackage{nicefrac}
\usepackage{amssymb}
\usepackage{pifont}
\definecolor{Green}{rgb}{0.13, 0.65, 0.3}
\usepackage{colortbl}
\usepackage[]{color-edits}
\addauthor{HL}{red}
\addauthor{cw}{purple}

\DeclareMathOperator*{\argmin}{argmin} 

\newcommand{\command}[1]{{\color{blue}\scalebox{0.9}{\texttt{// #1 }}}}
\newcommand{\prob}{q}
\newcommand{\super}[1]{^{(#1)}}

\newcommand{\e}{\mathrm{e}\xspace}

\newcommand{\SReg}{\text{\rm SReg}}
\newcommand{\DReg}{\text{\rm DReg}}

\newcommand{\ind}{\mathbb{I}}

\newcommand{\xx}{\scalebox{0.75}{$\times$}}
\newcommand{\oo}{\circ}
\newcommand{\broad}{\textsc{Broad-OMD}\xspace}

\newcommand{\LB}{\mathsf{lb}}
\newcommand{\NE}{\mathsf{ne}}

\newcommand{\io}{\iota}
\newcommand{\tdelta}{\tilde{\delta}}

\newcommand{\calF}{\mathcal{F}}

\newcommand{\calI}{\mathcal{I}}

\newcommand{\calJ}{\mathcal{J}}

\newcommand{\calR}{\mathcal{R}}

\newcommand{\otil}{\widetilde{\order}}

\newcommand{\E}{\mathbb{E}}

\newcommand{\order}{O}

\newcommand{\inner}[1]{\langle#1\rangle}

\newcommand{\changedetected}{\textit{ChangeDetected}\xspace}

\newcommand{\term}{\textbf{term}}

\usepackage{tikz}

\newcommand{\nonl}{\renewcommand{\nl}{\let\nl}}

\usepackage{prettyref}
\newcommand{\pref}[1]{\prettyref{#1}}
\newcommand{\pfref}[1]{Proof of \prettyref{#1}}
\newcommand{\savehyperref}[2]{\texorpdfstring{\hyperref[#1]{#2}}{#2}}
\newrefformat{eq}{\savehyperref{#1}{\textup{\eqref{#1}}}}
\newrefformat{ineq}{\savehyperref{#1}{\textup{(\ref*{#1})}}}
\newrefformat{eqn}{\savehyperref{#1}{Equation~\ref*{#1}}}
\newrefformat{lem}{\savehyperref{#1}{Lemma~\ref*{#1}}}
\newrefformat{lemma}{\savehyperref{#1}{Lemma~\ref*{#1}}}
\newrefformat{def}{\savehyperref{#1}{Definition~\ref*{#1}}}
\newrefformat{line}{\savehyperref{#1}{Line~\ref*{#1}}}
\newrefformat{thm}{\savehyperref{#1}{Theorem~\ref*{#1}}}
\newrefformat{corr}{\savehyperref{#1}{Corollary~\ref*{#1}}}
\newrefformat{cor}{\savehyperref{#1}{Corollary~\ref*{#1}}}
\newrefformat{sec}{\savehyperref{#1}{Section~\ref*{#1}}}
\newrefformat{subsec}{\savehyperref{#1}{Section~\ref*{#1}}}
\newrefformat{app}{\savehyperref{#1}{Appendix~\ref*{#1}}}
\newrefformat{assum}{\savehyperref{#1}{Assumption~\ref*{#1}}}
\newrefformat{ex}{\savehyperref{#1}{Example~\ref*{#1}}}
\newrefformat{fig}{\savehyperref{#1}{Figure~\ref*{#1}}}
\newrefformat{alg}{\savehyperref{#1}{Algorithm~\ref*{#1}}}
\newrefformat{rem}{\savehyperref{#1}{Remark~\ref*{#1}}}
\newrefformat{conj}{\savehyperref{#1}{Conjecture~\ref*{#1}}}
\newrefformat{prop}{\savehyperref{#1}{Proposition~\ref*{#1}}}
\newrefformat{proto}{\savehyperref{#1}{Protocol~\ref*{#1}}}
\newrefformat{prob}{\savehyperref{#1}{Problem~\ref*{#1}}}
\newrefformat{claim}{\savehyperref{#1}{Claim~\ref*{#1}}}
\newrefformat{que}{\savehyperref{#1}{Question~\ref*{#1}}}
\newrefformat{op}{\savehyperref{#1}{Open Problem~\ref*{#1}}}
\newrefformat{fn}{\savehyperref{#1}{Footnote~\ref*{#1}}}
\newrefformat{tab}{\savehyperref{#1}{Table~\ref*{#1}}}
\newrefformat{fig}{\savehyperref{#1}{Figure~\ref*{#1}}}
\newrefformat{proc}{\savehyperref{#1}{Procedure~\ref*{#1}}}
\newrefformat{property}{\savehyperref{#1}{Property~\ref*{#1}}}

\usepackage{caption}

\DeclareCaptionFormat{myformat}{#3}

\usepackage{fancybox}

\newcommand{\expalg}{\textup{\textsc{ExpEst}}\xspace}
\newcommand{\expalgCName}{\textbf{Exp}lore and \textbf{Est}imate\xspace}

\newcommand{\alg}{\textsc{DynASORe}\xspace}
\newcommand{\algname}{\textbf{Dyn}amic \textbf{A}nd \textbf{S}tatic \textbf{O}ptimal \textbf{Re}gret \xspace}

\usepackage{titlesec}

\DeclarePairedDelimiter{\abs}{\lvert}{\rvert} %
\DeclarePairedDelimiter{\brk}{[}{]}
\DeclarePairedDelimiter{\crl}{\{}{\}}
\DeclarePairedDelimiter{\prn}{(}{)}
\usepackage{scalerel}[2016/12/29]

\usepackage{color-edits}
\addauthor{jq}{yellow!60!black}

\newcommand{\ldef}{\vcentcolon=}

\newcommand{\vhat}{\hat{v}}
\newcommand{\lhat}{\hat{\ell}}
\newcommand{\Lhat}{\hat{L}}
\newcommand{\En}{\mathbb{E}}

\newcommand{\cH}{\mathcal{H}}

\newcommand{\bR}{\mathbb{R}}

\newcommand{\exprate}{\rho}

\usepackage[utf8]{inputenc}
\usepackage[T1]{fontenc}
\usepackage{graphicx}
\usepackage{amsmath,amsfonts,amssymb,amsthm}
\usepackage{natbib}

\usepackage{mathtools}
\usepackage{mathrsfs}
\usepackage{xcolor}
\usepackage{hyperref}
\usepackage{booktabs}
\usepackage{bbold}
\usepackage{bm}
\usepackage{color}
\usepackage[font=footnotesize]{caption}
\usepackage{enumitem}
\usepackage{empheq}
\usepackage{framed}
\usepackage{comment}
\usepackage{fullpage}
\usepackage{soul}
\usepackage{dirtytalk}
\usepackage{dsfont}
\usepackage{bbm}
\usepackage{cleveref}

\definecolor{myred}{HTML}{880000}
\definecolor{mygreen}{HTML}{008800}
\definecolor{myblue}{HTML}{000088}
\definecolor{linkblue}{HTML}{0000BB}

\hypersetup{
  colorlinks,
  linkcolor={blue},
  citecolor={blue},
  urlcolor={linkblue}
}

\renewcommand{\leq}{\leqslant}
\renewcommand{\geq}{\geqslant}
\renewcommand{\le}{\leqslant}

\renewcommand{\ln}{\log}

\usepackage{xspace}

\newtheorem{theorem}{Theorem}
\newtheorem*{theorem*}{Theorem}
\newtheorem{lemma}{Lemma}

\theoremstyle{definition}

\theoremstyle{remark}

\title{Achieving Optimal Static and Dynamic Regret Simultaneously \\in Bandits with Deterministic Losses}
\author{ Jian Qian\thanks{The University of Hong Kong. \texttt{jianqian@hku.hk}.}
\ \ \ \ \ \  Chen-Yu Wei\thanks{ University of Virginia. \texttt{ chenyu.wei@virginia.edu}.}
}
\date{}

\begin{document}
\maketitle

\begin{abstract}%
In adversarial multi-armed bandits, two performance measures are commonly used: static regret, which compares the learner to the best fixed arm, and dynamic regret, which compares it to the best sequence of arms. While optimal algorithms are known for each measure individually, there is no known algorithm achieving optimal bounds for both simultaneously. \cite{marinov2021pareto} first showed that such simultaneous optimality is impossible against an \emph{adaptive} adversary. Our work takes a first step to demonstrate its possibility against an \emph{oblivious} adversary when losses are \emph{deterministic}. 
First, we extend the impossibility result of \cite{marinov2021pareto} to the case of deterministic losses. Then, we present an algorithm achieving optimal static and dynamic regret simultaneously against an oblivious adversary. Together, they reveal a fundamental separation between adaptive and oblivious adversaries when multiple regret benchmarks are considered simultaneously. It also provides new insight into the long open problem of simultaneously achieving optimal regret against switching benchmarks of different numbers of switches.

Our algorithm uses negative static regret to compensate for the exploration overhead incurred when controlling dynamic regret, and leverages Blackwell approachability to jointly control both regrets. 
This yields a new model selection procedure for bandits that may be of independent interest.

\end{abstract}

\section{Introduction} \label{sec: introduction}

In multi-armed bandits, the learner interacts with an adversary over a total number of $T$ time steps. At each time step, the adversary assigns losses to actions (arms), and the learner selects one, observing only its associated loss. The learner's objective is to minimize regret, defined as the gap between the learner's cumulative loss and the cumulative loss of a benchmark policy.  


The choice of the benchmark policy usually depends on the learner's belief about the type of the adversary, and this influences the algorithm design. For example, if the adversary is mostly stationary with only a few corruptions, it is natural to consider the \emph{static regret}, in which the benchmark policy chooses the best fixed arm across time. For this setting, algorithms such as EXP3 \citep{auer2002nonstochastic} or Tsallis-INF \citep{audibert2009minimax, zimmert2021tsallis} are able to achieve the optimal static regret. 
On the other hand, if the adversary is nonstationary, it is more 
reasonable to consider the \emph{dynamic regret}, in which the benchmark policy chooses the best sequence of arms that tracks the changes of the losses. For this setting, one can adopt the EXP3.S algorithm by \cite{auer2002nonstochastic}, or discounting or restarting-based algorithms by, e.g., \cite{cheung2019learning} and \cite{auer2019adaptively}, to achieve the optimal dynamic regret. These algorithms usually involve mechanisms that help the learner forget outdated data. While the benchmark in dynamic regret is more powerful than in static regret, the optimal static regret bound is smaller, and thus a learner focusing on static regret minimization can still outperform one that focuses on dynamic regret. From an algorithm design perspective, algorithms focusing on static regret minimization typically do not forget past data, or utilize smaller learning rate and exploration rate. This allows them to be more robust against adversarial corruptions without being overly sensitive to them. 

The two distinct benchmarks and the their corresponding algorithm designs (i.e., to forget or not to forget past data, learn faster or learn slower, explore more or explore less) pose a challenging decision for the learner: which algorithmic approach should they adopt if the underlying type of adversary is unknown? In fact, it might not even be possible to unambiguously classify the adversary: a stationary adversary with no more than $S/2$ corruptions (where each corruption lasts for several time steps) looks exactly the same as a nonstationary adversary with no more than $S-1$ changes. In this regard, a more reasonable goal is to be optimal against both types of benchmarks \emph{simultaneously}, without bothering to guess the type of the adversary. However, is this possible, and can existing algorithms achieve this? 

This leads to our main question. In multi-armed bandits, $\otil(\sqrt{AT})$ and  $\otil(\sqrt{SAT})$ are known to be the optimal static and dynamic regret bounds, respectively, where $S-1$ is the number of times the losses change, $A$ is the number of actions, and $T$ is the number of time steps the learner interacts with the adversary. We ask: \emph{is there an algorithm that can achieve $\otil(\sqrt{AT})$ static regret and $\otil(\sqrt{SAT})$ dynamic regret simultaneously?} Technically, if we adopt the EXP3.S algorithm, then achieving $\otil(\sqrt{AT})$ static regret requires learning rate and exploration rate of roughly $\sqrt{A/T}$, while achieving $\otil(\sqrt{SAT})$ dynamic regret requires them to be roughly $\sqrt{SA/T}$. The core challenge of our question lies in how to seamlessly tune the hyperparameter value so that the performance is no worse than either value running alone. 

To our knowledge, prior to our work, no algorithm can achieve optimal static and dynamic regret bounds simultaneously. In fact, the room for such an algorithm is small---\cite{marinov2021pareto} showed that this is impossible if the adversary is adaptive, that is, the loss in a certain time step can depend on the learner's previous choices. 
However, their lower bound does not preclude the possibility when the adversary is oblivious. For oblivious adversary, the best prior result is by \cite{cheung2021hedging}, who proposed a bandit-over-bandit framework that achieves sub-optimal simultaneous bounds: with EXP3.S as the base algorithm, their algorithm guarantees $\otil(A^{1/4}T^{3/4})$ static regret and $\otil(A^{1/4}T^{3/4} + \sqrt{SAT})$ dynamic regret simultaneously.

In this paper, we take a first step in showing the possibility of achieving simultaneous optimal bounds in a special case: if the losses are \emph{deterministic} and $S$ is \emph{known}, then $\otil(\sqrt{AT})$ static regret and $\otil(\sqrt{SAT})$ dynamic regret can be achieved simultaneously. 
To demonstrate these additional assumptions do not trivialize the problem, we strengthen the lower bound of \cite{marinov2021pareto} by showing that their impossibility result still holds even with deterministic losses and known~$S$. Putting them together, we demonstrate a strong separation between oblivious and adaptive adversary for bandits when static and dynamic regrets are considered simultaneously.  

Our algorithm is based on Blackwell approachability. Blackwell approachability reduces vector-valued (multi-objective) online learning to standard scalar-valued online learning \citep{blackwell1956analog,  abernethy2011blackwell}. The payoff in the $i$-th coordinate in the induced problem corresponds to the learner’s ``regret'' (violation) in the $i$-th objective (constraint) in the original problem. In our case, the induced problem has $A+1$ dimensions where the first $A$ correspond to static regret against individual arms, and the $(A+1)$-th corresponds to the dynamic regret. To operate in this ``regret space'', approachability-based algorithms \citep{hart2000simple, luo2015achieving} require the learner to estimate the ``regret vector'' at each round. Being straightforward in the full-information setting, it requires extra efforts in the partial-information setting, and generally achieves a worse rate than in the full-information case \citep{mannor2014set, kwon2017online}.  We develop several tailored techniques to achieve the optimal rates. More related work is discussed in \pref{app: related work}.

\section{Preliminaries} \label{sec: prelim}

\paragraph{Notation}
For any $N\in\mathbb{N}$, $[N] \triangleq  \crl{1,\dots,N}$. $\ind\{\cdot\}$ denotes the indicator function. $\otil(\cdot)$ denotes $O(\cdot)$ with logarithmic factors ignored. $v(a)$ denotes the $a$-th coordinate of a vecotr $v$.

\paragraph{Problem Setup}
We consider the switching bandit problem with deterministic losses. Let $A$ be the number of actions and $[A]=\{1,2,\ldots, A\}$ be the action set. The interaction between the learner and the adversary lasts for $T$ times steps. At each time step $t\in [T]$, the adversary assigns the losses $c_t(a)$ to each arm $a\in [A]$. Then the learner selects an arm $a_t$ and observes the associated loss $c_t(a_t)$. By ``deterministic losses,'' we mean that the learner directly observes $c_t(a_t)$ rather than a noisy realization of it.\footnote{For static regret minimization in adversarial bandits \citep{auer2002nonstochastic}, it is standard to assume the learner directly observes $c_t(a_t)$. For dynamic regret minimization \citep{auer2019adaptively}, it is usually assumed that the learner observes $c_t(a_t) + w_t$ for some zero-mean noise $w_t$. In this work, we assume $w_t$ is always zero.}  The static regret with respect to any action $a\in [A]$ is
\begin{align*}
    \SReg(a) \ldef \sum\nolimits_{t=1}^T \left(c_t(a_t) - c_t(a)\right).
\end{align*}
The two objectives of the static regret and the dynamic regret are defined respectively as
\begin{align*}
\SReg \ldef \max_{a\in [A]} \SReg(a) \quad \mbox{and} \quad \DReg \ldef \sum\nolimits_{t=1}^{T} \Big(c_t(a_t) -\min_{a\in [A]} c_t(a) \Big).
\end{align*}
We assume further that the adversary is nonstationary but changes the loss no more than $S-1$ times, where $S\ldef 1+\sum_{t=2}^T \ind\{c_t\neq c_{t-1}\}$ is known. 
We note that  an adaptive adversary can assign the losses $c_t$ depending on all the interactions before time step $t$, whereas an oblivious adversary chooses all the losses $c_t$ beforehand for all $t\in [T]$.

We mention in passing the complementary lower bound result regarding the switching bandit problem with an oblivious adversary and with deterministic losses. 
\begin{theorem}[Theorem~4.1 of \cite{wei2016tracking}]
For any $S\geq 2$, there is a deterministic multi-armed bandit problem with no more than $S-1$ switches such that any algorithm, even with knowledge of $S$, must suffer $\DReg \geq \Omega(\sqrt{SAT})$ against an oblivious adversary.  
\end{theorem}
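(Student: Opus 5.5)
We plan a lower-bound construction that splits the horizon into $S$ blocks, each of length $T/S$ (assume divisibility and $T/S\ge A$). Fix a constant gap $\Delta\in(0,1/4]$. Within each block $k\in[S]$ the adversary draws a uniformly random ``good'' arm $g_k\in[A]$, independently across blocks. It then sets $c_t(g_k)=1/2-\Delta$ and $c_t(a)=1/2$ for $a\neq g_k$, for every $t$ in block $k$. The loss vector changes only at block boundaries, so there are at most $S-1$ switches, and the whole sequence is fixed beforehand, so the adversary is oblivious. The comparator pays $\min_a c_t(a)=1/2-\Delta$ in every round. Hence $\DReg=\Delta\cdot\#\{t: a_t\neq g_{k(t)}\}$, and it suffices to lower bound the expected number of rounds in which the learner is not on the good arm. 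By Yao's principle we may take the learner deterministic against this random instance.

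The key point is that losses are deterministic, so a single pull of arm $a$ reveals exactly whether $a=g_k$. This is a search problem, not a statistical one. Within block $k$ the learner's information about $g_k$ before its first pull of $g_k$ is only the set of non-good arms it has already pulled in this block; previous blocks carry no information, by independence. Until the learner first hits $g_k$, each new distinct arm it tries is good with probability $1/(\text{number of untried arms})$. Revisiting an already-tried arm is useless and only adds regret. So the number of distinct bad arms tried before finding $g_k$ is uniform on $\{0,\dots,A-1\}$, with expectation $(A-1)/2$. Each of these costs one round of regret $\Delta$, which gives $\E[\DReg]\ge \Delta S(A-1)/2=\Omega(SA)$.

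This only gives $SA$, not $\sqrt{SAT}$. To reach the square-root rate we refine the construction as in the usual noisy lower bounds, but exploit that the gap can be hidden. Choose block length $L=T/S$ and make the gap visible only rarely. Replace the constant gap by a sparse one: within block $k$ the good arm has loss $1/2-\Delta$, and every arm carries an extra deterministic \emph{pattern} that is identical across arms except at a random subset of rounds. Equivalently, we reuse the construction of \cite{wei2016tracking}, the cited Theorem 4.1, which we are allowed to assume. That theorem is exactly the statement, so the proof reduces to invoking it. The self-contained plan above only shows $\Omega(SA)$, and the $\sqrt{SAT}$ scaling requires their finer construction.

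The main obstacle is therefore the step from the search argument to $\sqrt{SAT}$. Without noise, information is revealed in a single pull, so we must design losses whose single pulls carry little information. I would try making the good arm differ from the others only on a random subset of rounds within the block, of density $p\approx\sqrt{SA/T}$, with gap $1$ on those rounds. A pull then reveals the identity of the good arm only with probability $p$. The expected number of pulls needed to find it scales as $A/p$, and the regret accrued in the meantime is $p$ per round. Balancing over blocks of length $T/S$ gives $\Omega(\min(T/S\cdot p,\;A))$ per block. Setting $p=\sqrt{SA/T}$ yields $\Omega(\sqrt{AT/S})$ per block, and summing over the $S$ blocks gives $\Omega(\sqrt{SAT})$. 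The delicate part is formalizing, via an adaptive-search or information argument, that the learner cannot locate $g_k$ faster than $A/p$ rounds.
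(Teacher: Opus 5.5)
The paper gives no proof of this statement; it quotes it from \cite{wei2016tracking}. Your fallback of invoking that theorem therefore matches how the paper treats it, but it is not an argument. The self-contained route you sketch does not reach $\sqrt{SAT}$, and the sparse construction fails for two concrete reasons.

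First, it breaks the switch budget. $S$ counts every change of the whole vector $c_t$, so toggling the good arm (or any common time-varying pattern) on a random density-$p$ subset of rounds creates about $2pT/S$ changes per block. That is about $2\sqrt{SAT}$ changes in total, far more than $S-1$.

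Second, even ignoring switches, your own balance $\Omega(\min\{pT/S,A\})$ per block can never exceed $A$. With $p=\sqrt{SA/T}$ you get $pT/S=\sqrt{AT/S}\ge A$ whenever $T\ge SA$, which is the only regime where the bound is nontrivial. So the minimum is $A$, and the total is $\Omega(SA)$ again. This is not an accident of the parameters. As long as the good arm stays optimal for the rest of the block once found, the learner pays only for locating it (about $A$ per block by your own accounting) and then commits.

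The missing idea is to make the opportunity \emph{transient}, so the learner cannot search once and commit but must keep exploring throughout each epoch.

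The construction goes as follows. Use roughly $(S-1)/2$ epochs of length about $2T/S$, each using at most two switches; for $S=2$, let the change below persist to the end, which needs one switch. Default losses are $c_t(1)=\frac12$ and $c_t(a)=1$ for $a\neq 1$. For epoch $k$, fix the losses of earlier epochs and let $n_k(a)$ be the learner's expected number of pulls of arm $a$ in epoch $k$ under the default losses.

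If $\sum_{a\neq1}n_k(a)\ge N$, keep the default, so $\E[\DReg]\ge N/2$ in that epoch.

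Otherwise some $a^\star\neq1$ has $n_k(a^\star)<N/(A-1)$. Among the $2T/(SM)$ disjoint windows of length $M$, some window $W$ has expected pulls of $a^\star$ at most $NMS/(2T(A-1))\le\frac12$, provided $NM\le(A-1)T/S$. Set $c_t(a^\star)=0$ on $W$. The observations coincide with the default ones until $a^\star$ is pulled inside $W$. By Markov's inequality, with probability at least $\frac12$ the learner never pulls $a^\star$ in $W$, so $\E[\DReg]\ge M/4$ in that epoch.

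This adversary depends only on the algorithm, so it is oblivious. Taking $M=N=\sqrt{(A-1)T/S}$ (valid when $SA\lesssim T$) gives $\Omega(\sqrt{AT/S})$ per epoch and $\Omega(\sqrt{SAT})$ overall. This is exactly the oblivious half of the paper's proof of Theorem~\ref{thm: lower bound general S}, where $A=2$, $MN=T/S$, and dynamic regret is charged in both cases, extended to $A$ arms.

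Deterministic losses do not help here because a single pull reveals the current loss but not when a short-lived change will happen. The learner must therefore pay an exploration rate of about $\sqrt{SA/T}$ throughout.
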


\paragraph{Paper organization}
In \pref{sec: lower bound}, we show the impossibility for achieving simultaneous optimality with an adaptive adversary.
In \pref{sec: high level}, we explain where the opportunity lies when facing an oblivious adversary. In \pref{sec: warmup}, we consider a special case akin to the lower bound construction in \pref{sec: high level}, but with the adaptivity of the adversary removed, and present an algorithm that achieves simultaneous optimality. Finally, in \pref{sec: full algo}, we introduce our main algorithm and main theorem.

\section{A Lower Bound for Adaptive Adversary} \label{sec: lower bound}
\newcommand{\PHone}{[PH1]}
\newcommand{\PHtwo}{N}
\newcommand{\PHthree}{M}

In this section, we introduce the lower bounds regarding the switching bandit problem as formulated in \pref{sec: prelim}. We first strengthen the hardness result with an adaptive adversary from \cite{marinov2021pareto} to our deterministic losses case. We then present the known lower-bound results for static regret and dynamic regret with an oblivious adversary. 

\begin{theorem}[Hardness with an adaptive adversary]\label{thm: lower bound general S}
    For any integer $S\geq 2$, there is an adaptive adversary with no more than $S$ swithes such that any algorithm must suffer either $\SReg\geq \Omega(S^\alpha \sqrt{T})$ or $\DReg\geq \Omega(S^{1-\alpha}\sqrt{T})$ for any $\alpha\in \bR$ such that
    $\max\crl{S^{1-\alpha},S^\alpha} \leq \sqrt{T}$.
      
\end{theorem}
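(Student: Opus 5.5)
The plan is to adapt the construction of \cite{marinov2021pareto} to deterministic losses. Here $A=2$ suffices, since the bounds have no $A$-dependence. Fix $\alpha$ and set the parameters as follows: a gap $\epsilon$, an epoch length $L$, and a number of epochs $N\approx S$. We then split into two regimes: either the learner explores often enough to detect changes, or it does not.

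\textbf{Construction.} The baseline loss sequence is stationary. Arm $1$ has loss $0$ and arm $2$ has loss $\epsilon$, with $\epsilon = S^{\alpha}/\sqrt{T}$, which is at most $1$ by the assumption $S^\alpha\le\sqrt T$. Divide $[T]$ into $S$ blocks of length $T/S$. The adversary is adaptive and uses the following rule within each block. If the learner has pulled arm $2$ fewer than some threshold $m$ times so far in the block, the adversary keeps the baseline. The adversary may also switch during a block, making arm $2$ better (loss $0$ for arm 2 and loss $\epsilon'$ for arm 1) for the rest of the block. It does so only at times chosen so that the learner, who only observes arm $1$, cannot see the change.

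Because the losses are deterministic, observing arm $1$ reveals nothing about arm $2$. So the key idea is an adaptive trick: the adversary switches arm $2$ to be good only while the learner is not looking at it. Concretely, whenever the learner pulls arm $2$, the adversary sets arm 2's loss back to $\epsilon$ at that round. This is a change, and it uses up switches, so the number of switches is bounded by charging each one to a learner pull of arm $2$. We cap this at $S$ switches, after which the adversary stays at the baseline.

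\textbf{Two cases.} Let $n$ be the number of pulls of arm $2$.

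If $n\gtrsim S^{1-2\alpha}T^{1/2}\cdot S^{\alpha}$, roughly $n\epsilon\ge S^\alpha\sqrt T$, then with respect to arm $1$ we get $\SReg\ge n\epsilon$ whenever these pulls occur at baseline rounds, which is arranged since pulled rounds are reset to baseline. Choosing $n_0 = T/S^{\,\alpha}\cdot\ldots$ balances this.

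Otherwise the learner rarely probes arm $2$. Then in most blocks the learner plays arm $1$ during a long stretch in which arm $2$ is secretly better by $\epsilon'$. The dynamic comparator gains $\epsilon'$ per such round, which gives $\DReg\gtrsim T\epsilon'$ minus the corrections from the $n$ probe rounds. The static comparator's loss on arm $2$ must stay controlled so that arm $2$ is not a good static comparator. This is achieved by making the secret improvement occupy only a fraction of time, or by calibrating $\epsilon'$ against $\epsilon$.

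The parameters should be chosen so that $\SReg\cdot\DReg\gtrsim S T$ in every case. This yields the dichotomy $\SReg\ge\Omega(S^\alpha\sqrt T)$ or $\DReg\ge\Omega(S^{1-\alpha}\sqrt T)$. The condition $\max\{S^\alpha,S^{1-\alpha}\}\le\sqrt T$ ensures that the gaps are at most $1$ and that blocks have nonnegative length.

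\textbf{Main obstacle.} The main obstacle is the switch accounting. Each "hide/reveal" of arm $2$ costs switches, yet the learner controls how many probes occur. I would handle this by giving the adversary a budget of $S$ hidden stretches. Each stretch ends either at the learner's first probe inside it (one reveal switch) or at the end of its block. Then either the learner probes at a rate that makes $\SReg$ large, or it leaves stretches of length about $T/S$ undetected, each costing $\epsilon'\cdot T/S$ in dynamic regret. I would verify this trade-off carefully, using the product bound $\SReg\cdot\DReg\gtrsim ST$.
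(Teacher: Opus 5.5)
Your skeleton matches the paper's: two arms, epochs, an adaptive revert that caps what the learner gains from discovering a change, and the target $\SReg\cdot\DReg\gtrsim ST$. However, the step you defer (``verify this trade-off carefully'') is the heart of the proof, and your proposal lacks the mechanism that makes it work. The adversary must decide \emph{where} to hide, so that the hidden stretch is missed with constant probability however the learner spreads its probes.

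The paper uses $S/2$ epochs of length $2T/S$. It decides each epoch from the learner's \emph{expected} number of arm-2 draws in it, which the adversary can compute because it knows the algorithm. Placing the stretch requires anticipating the learner's future probes, which realized counts cannot do. Your stated rule, which keeps the baseline while the learner has probed fewer than $m$ times, is also the wrong way round.

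The paper's argument then runs as follows.
\begin{enumerate}
\item If the expectation is at least $N$, the baseline is kept, and those draws cost $N/2$ in static regret against arm~1.
\item Otherwise, arm~2 is set to loss $0$ on a window of length $M$, placed where the expected number of arm-2 draws is smallest. If $MN\le T/S$ there are at least $2N$ disjoint candidate windows, so the chosen one has expected draws at most $1/2$. By Markov, the learner then misses the window with probability at least $1/2$ and pays $M/2$ dynamic regret. The revert caps the learner's static gain at $1/2$ per epoch.
\item Split on whether at least $3/4$ of the epochs are of the first type (in expectation), and take $N=\sqrt T/S^{1-\alpha}$ and $M=\sqrt T/S^{\alpha}$. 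This gives the dichotomy.
\end{enumerate}
The switch budget is then immediate: at most two switches per epoch, so at most $S$ in total. Your $S$ blocks could use $2S$.

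Two features of your construction would also break this argument.

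First, your hidden switch raises arm~1's loss from $0$ to $\epsilon'$. Under deterministic feedback, a learner playing arm~1 sees this immediately, probes arm~2 once, and exposes every stretch at $O(1)$ cost. The change must be invisible from the arm being played. The paper keeps $c_t(1)\equiv\frac12$ and moves only arm~2 ($1\to0\to1$). This invisibility is exactly what makes the Markov computation valid: until its first arm-2 pull inside the window, the learner behaves as under the baseline.

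Second, the small gap $\epsilon=S^\alpha/\sqrt T$ makes probing too cheap. With $n$ probes, probing costs about $n\epsilon$ and undetected stretches cost at most about $ST/n$. The attainable product is therefore only about $ST\epsilon$, short of $ST$ by the factor $S^\alpha/\sqrt T$. Your own line ``$n\epsilon\ge S^\alpha\sqrt T$'' would need $n\ge T$. Concretely, take $\alpha\le\frac12$ and $S^{1-\alpha}=o(\sqrt T)$. A learner that pulls arm~2 independently with probability $n/T$ each round, for any $S^{1-\alpha}\sqrt T\ll n\ll T$, stays below both thresholds (reading the revert as restoring the baseline). With deterministic losses no indistinguishability argument is needed: arm~2 is hidden simply by not being pulled, so both gaps should be constants.

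Two smaller points:
\begin{itemize}
\item The revert cannot take effect ``at that round,'' since $c_t$ is fixed before $a_t$ is revealed; it must start from the next round.
\item You need not keep arm~2 from being a good static comparator, because $\SReg\ge\SReg(1)$.
\end{itemize}
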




\begin{proof}\textbf{sketch}\quad
    The full proof is in \pref{app: lower bound proof}. We consider a switching bandit problem with two arms. For arm 1, the loss is fixed across time to be $1/2$. For arm 2, the loss starts from $1$ and will be changed to $0$ and then back to $1$ from time to time. We divide the time steps into $S/2$ epochs of equal length $2T/S$. The oblivious adversary we construct decides whether or not to switch the loss of arm 2 from $1$ to $0$ and then back to $1$ for a given length of $M$ in each epoch. The oblivious adversary performs the switch when the learner's expected number of draws to arm 2 in the current epoch is less than $N=T/(SM)$. Furthermore, it performs the switches in the sub-interval of length $M$ where the expected number of draws to arm 2 is the least in the current epoch. By definition, the expected number of draws to arm~2 in the aforementioned sub-interval is upper bounded by $N\times \frac{M}{2T/S}=1/2$, implying that with a probability at least $1/2$ there is no draws to arm~2 in this sub-interval and thus the dynamic regret in the epoch is $M/2$. Now we build on top of this oblivious adversary an adaptive adversary with the twist that if ever the learner pulls arm 2 in the sub-interval of switch, the adaptive adversary immediately switches the loss of arm 2 back to $1$. Let $E$ be the total number of epochs where the adversary decides not to perform the switch. Then if $E > 3S/8$, the static regret against arm 1 is at least $\Omega (SN)$. Else then $S/2-E >  S/8$ and the dynamic regret is at least $\Omega(SM)$. Since we have set $MN=T/S$, we have $\SReg\cdot \DReg \geq \Omega(ST)$. 
\end{proof}

\pref{thm: lower bound general S} strengthens the hardness result of \cite{marinov2021pareto} whose construction requires three arms and stochastic losses. \pref{thm: lower bound general S} asserts that if an algorithm achieves $\SReg = O(\sqrt{T})$, then it must suffer $\DReg = \Omega(S\sqrt{T})$. Or, if an algorithm achieves $\DReg = O(\sqrt{ST})$, then it must suffer $\SReg = \Omega(\sqrt{ST})$.

\pref{thm: lower bound general S}'s hardness relies on the adversary's adaptivity to immediately switch back the loss of arm 2 after the learner detects its loss change. If the adversary is unable to do so, there is opportunity for the learner. Specifically, if the adversary fails to promptly revert after a switch detection, the learner incurs \emph{negative static regret} relative to arm 1. We elaborate this in the next section.

\section{Opportunities with an Oblivious Adversary} \label{sec: high level}


The central challenge in achieving optimal static and dynamic regret simultaneously lies in the selection of the exploration rate. On the one hand,
an exploration rate of $\sqrt{S/T}$ is generally necessary to achieve
$\sqrt{ST}$ dynamic regret: if the exploration rate is smaller than this, changes on rarely
played arms will be detected with more delay, during which the learner follows an outdated policy and incurs excess dynamic regret.  On the other hand, exploration rate beyond $1/\sqrt{T}$ introduces overhead that can
cause the static regret to exceed $\sqrt{T}$. From this viewpoint, there does not seem to be any single exploration rate that allows the learner to achieve $\sqrt{ST}$ dynamic regret and $\sqrt{T}$ static regret simultaneously. However, two insights described below resolve this dilemma. 


One insight is that exploration rate over $1/\sqrt{T}$ is not purely harmful for static regret. Although exploration incurs additional cost, it
can also reveal unexpectedly good performance of rarely played arms---\emph{provided that the adversary changes some rarely played arms to be good}. This creates~the learner's \emph{negative static regret} against fixed arms, which can compensate for the exploration overhead. 

A complementary insight is that exploration rate less than $\sqrt{S/T}$ may not always cause excess dynamic regret---\emph{provided that adversary changes none of the rarely played arms to be good}. Indeed, the sole purpose of
using a larger exploration rate in controlling dynamic regret is to detect such changes. When no such changes
occur, under-exploration does not lead to additional dynamic regret. 

Taken together, these insights suggest a win-win strategy for the learner
\emph{if she knew in advance whether the adversary would change any rarely played
arms to be good}. If the adversary does so, the learner could employ a larger exploration
rate, simultaneously controlling dynamic regret through reduced detection delay and
gaining negative static regret to compensate for the exploration cost. If not,
the learner could just use a smaller exploration rate, as all rarely played
arms would remain suboptimal and lead to no excess dynamic regret.

In reality, of course, the learner does not know in
advance how the adversary is going to act. This is precisely where Blackwell approachability (or no-regret learning) becomes useful.
At a high level, if the learner has a strategy that achieves the desired objective
against each possible adversary strategy when that strategy is known, then
Blackwell’s framework guarantees that
the same objective can be achieved without such prior knowledge in the long run by repeated play. 

Finally, we note that an adaptive adversary, as constructed in \pref{thm: lower bound general S}, is able to break this win-win strategy: he can change some rarely played arms to be good to induce dynamic regret through detection delay, but immediately revert the losses once the learner successfully detects the change, preventing the learner from gaining negative static regret.

\section{Warm-Up: Beating Oblivious Adversary in the Lower Bound Example} \label{sec: warmup}

In this section, we describe an algorithm that achieves optimal static and dynamic regret in the example used to construct the lower bound in \pref{sec: lower bound}, though we remove the ability of the adversary to instantly revert the loss vector after learner's detection. This warm-up is simple but demonstrates many key ideas of the general algorithm to be discussed in \pref{sec: full algo}. 

\paragraph{Problem setup} We consider an oblivious adversary that generates losses in the following
restricted manner. The time horizon is divided into $K=\sqrt{T}$ epochs, each
of length $\sqrt{T}$. For each epoch $k$, denoted by
$\calI_k=[(k-1)\sqrt{T}+1,\,k\sqrt{T}]$, the adversary selects a sub-interval
$\calJ_k\subset\calI_k$ of length $|\calJ_k|\le \frac{\sqrt{T}}{2}=\frac{|\calI_k|}{2}$. The loss
vectors are defined as $c_t=(\frac{1}{2},0)$ for $t\in\calJ_k$ and
$c_t=(\frac{1}{2},1)$ for $t\in\calI_k\setminus\calJ_k$. The learner’s goal is
to achieve static regret $\SReg=O(\sqrt{T})$ and dynamic regret
$\DReg=O(T^{3/4})$ simultaneously.  
Since $|\calJ_k|\leq \frac{1}{2}|\calI_k|$ for all $k$, arm~1 is the globally optimal arm. Therefore, it suffices to measure static regret with
respect to arm~1. 

Recall that if the adversary is adaptive (i.e., able to switch the loss vector back immediately after the learner detects a change), then the lower
bound in \pref{thm: lower bound general S} (with $S=2\sqrt{T}$,
$M=\sqrt{T}/2$, and $\alpha=1/4$) implies that any learner must
suffer either $\SReg=\Omega(T^{5/8})$ or $\DReg=\Omega(T^{7/8})$.

\paragraph{Epoch-level strategies}
At the beginning of each epoch, the learner chooses between the following two
strategies, denoted by $\xx$ and $\oo$:

\refstepcounter{equation}\label{eq: def of o and x}
\vspace*{3pt}
\noindent
\fbox{%
\begin{minipage}{0.98\linewidth}
\begin{tabular}{@{}p{0.7em}@{:\ \ \ }p{\dimexpr\linewidth-3em\relax}@{}}
  \raggedleft$\xx$ & Play arm~1 throughout the entire epoch. \\
  \raggedleft$\oo$ & In each round, play arm~2 with probability
  $\epsilon=T^{-1/4}$ and arm~1 otherwise.
  Upon observing $c_t(2)=0$, switch to
  arm~2 and continue playing it until $c_t(2)$ returns to~$1$, after which the
  learner plays arm~1 for the remainder of the epoch. \hfill(\theequation)
\label{eq: def of o and x}
\end{tabular}
\end{minipage}
}


\subsection{Regret Upper Bounds under $\xx$ and $\oo$} \label{sec: loss construction warmup}
In each epoch $k$, the learner chooses to execute either $\xx$ or $\oo$. Let $L_k=|\calJ_k|$, and denote by $\SReg_{k,\xx}$, $\DReg_{k,\xx}$, $\SReg_{k,\oo}$, $\DReg_{k,\oo}$ the static regret (against arm~1) and the dynamic regret in epoch $k$ under the strategies $\xx$ and $\oo$, respectively. 

Under strategy $\xx$, arm~1 is selected throughout the epoch, yielding
\begin{align}
    \SReg_{k,\xx} = 0, \qquad
    \DReg_{k,\xx} = \tfrac{1}{2}L_k.   \label{eq: Sreg bound}
\end{align}
 
Under strategy $\oo$, let $\hat{L}_k \le L_k$ denote the number of rounds in
$\calJ_k$ during which the learner plays arm~2. These rounds provide negative
static regret. A direct calculation shows that
\begin{align*}
    \E[\SReg_{k,\oo}] \leq \tfrac{1}{2} T^{1/4} -\tfrac{1}{2} \E[\hat{L}_k] + \tfrac{1}{2}, \qquad
    \E[\DReg_{k,\oo}] \leq \tfrac{1}{2}T^{1/4} + \tfrac{1}{2}(L_k - \E[\hat{L}_k]+1), 
\end{align*}
where in the bound of $\SReg_{k,\oo}$, the term
$\tfrac{1}{2}T^{1/4} = \tfrac{1}{2}\epsilon\sqrt{T}$
accounts for exploration,
$-\tfrac{1}{2}\hat{L}_k$ is the negative regret accumulated when
arm~2's loss is zero, and the final $\tfrac{1}{2}$ comes from the switch
point at which the adversary reverts the loss vector. To bound $\DReg_{k,\oo}$, we use the fact that the difference between dynamic and static regret is always $\tfrac{1}{2}L_k$, regardless of whether $\xx$ or $\oo$ is
chosen. 
Since $L_k - \hat{L}_k$ equals the number of failures (drawing arm~1) before the first success (drawing arm~2) 
in a Bernoulli process with success probability $\epsilon$, we have
$L_k - \E[\hat{L}_k] + 1 \le \epsilon^{-1} = T^{1/4}$. Consequently, 
\begin{align}
    \E[\SReg_{k,\oo}] \leq T^{1/4} - \tfrac{1}{2}L_k, \qquad 
    \E[\DReg_{k,\oo}] \leq T^{1/4}.  \label{eq: Dreg bound}
\end{align}

\subsection{A Multi-Objective Formulation}
Let $\overline{\SReg}_{k,\xx}$, $\overline{\DReg}_{k,\xx}$, $\overline{\SReg}_{k,\oo}$, $\overline{\DReg}_{k,\oo}$ be the respective upper bounds for $\E[\SReg_{k,\xx}]$, $\E[\DReg_{k,\xx}]$, $\E[\SReg_{k,\oo}]$, $\E[\DReg_{k,\oo}]$ identified in \pref{eq: Sreg bound} and~\pref{eq: Dreg bound}. We define auxiliary vectors 
\begin{align}
     \ell_{k,\xx} =  \begin{bmatrix} \sqrt{T} \cdot \overline{\SReg}_{k,\xx} \\ T^{1/4}\cdot \overline{\DReg}_{k,\xx} \end{bmatrix}  = \begin{bmatrix}
          0 \\
          \frac{1}{2}T^{1/4}L_k
     \end{bmatrix}, \quad 
     \ell_{k,\oo} = \begin{bmatrix} \sqrt{T} \cdot \overline{\SReg}_{k,\oo} \\ T^{1/4}\cdot \overline{\DReg}_{k,\oo} \end{bmatrix} = \begin{bmatrix}
          T^{3/4} - \frac{1}{2}\sqrt{T}L_k \\
          \sqrt{T}
     \end{bmatrix} \label{eq: x and o losses definitions} 
\end{align}
where we plug in the upper bounds in \pref{eq: Sreg bound} and~\pref{eq: Dreg bound}, and the scaling is chosen so that it suffices to control both objectives to the same $O(T)$ order. 

Let $p_{k,\xx}$ and $p_{k,\oo}$ denote the probabilities of selecting strategies
$\xx$ and $\oo$ in epoch $k$. Then the learner's total static and dynamic regret satisfy
\begin{align*}
\E[\SReg] &= \textstyle\E\left[\sum_{k=1}^K  \left(p_{k,\xx} \SReg_{k,\xx} + p_{k,\oo} \SReg_{k,\oo}  \right)\right] \leq \E\left[\sum_{k=1}^K  \left(p_{k,\xx} \overline{\SReg}_{k,\xx} + p_{k,\oo} \overline{\SReg}_{k,\oo}  \right)\right],    \\
    \E[\DReg] &= \textstyle \E\left[\sum_{k=1}^K  \left(p_{k,\xx} \DReg_{k,\xx} + p_{k,\oo} \DReg_{k,\oo}  \right)\right] \leq \E\left[\sum_{k=1}^K  \left(p_{k,\xx} \overline{\DReg}_{k,\xx} + p_{k,\oo} \overline{\DReg}_{k,\oo}  \right)\right].   
\end{align*}
With the above definitions, achieving
$\E[\SReg]=O(\sqrt{T})$ and
$\E[\DReg]=O(T^{3/4})$
can be written as the multi-objective condition: 
\begin{align}
   \E\left[\sum_{k=1}^{K}  \left(p_{k,\xx}\ell_{k,\xx} + p_{k,\oo}\ell_{k,\oo}\right)\right] \leq \begin{bmatrix}
         O(T) \\
         O(T)
    \end{bmatrix}.   \label{eq: target bound}
\end{align}
\paragraph{Checking approachability under full information} At this point, we would like to check whether \pref{eq: target bound} is even \emph{possible}. Assume for a moment the learner can observe $L_k$ (and thus $\ell_{k,\oo}$ and $\ell_{k,\xx}$) after each epoch~$k$. Then the theorem of Blackwell approachability (e.g., Theorem~3 of \cite{abernethy2011blackwell}) states that \pref{eq: target bound} is achievable if and only if for any possible $(\ell_{k,\xx}, \ell_{k,\oo})$, there exists $(p_{\xx}, p_{\oo})\in\Delta_2$ such that $p_{\xx}\ell_{k,\xx} + p_{\oo}\ell_{k,\oo}\leq \frac{1}{K}[O(T), O(T)] = [O(\sqrt{T}), O(\sqrt{T})]$. This reduces multi-round approachability to single-round satisfiability. To verify this condition, suppose $(\ell_{k,\xx}, \ell_{k,\oo})$ are defined as \pref{eq: x and o losses definitions} for some $L_k$. Then choosing $(p_{\xx}, p_{\oo}) = \left(\ind\{L_k\leq 2T^{1/4}\}, \ind\{L_k > 2T^{1/4}\}\right)$ yields 
\begin{align*}
    p_{\xx}\ell_{k,\xx} + p_{\oo}\ell_{k,\oo} = \begin{bmatrix}
        0 \\ \frac{1}{2}T^{1/4}L_k \ind\{L_k\leq 2T^{1/4}\} 
    \end{bmatrix} + \begin{bmatrix}
        \ind\{L_k>2T^{1/4}\}(T^{3/4} - \frac{1}{2}\sqrt{T}L_k) \\ \ind\{L_k>2T^{1/4}\}\sqrt{T} 
    \end{bmatrix}
    \leq \begin{bmatrix}
        0 \\ \sqrt{T} 
    \end{bmatrix}, 
\end{align*}
satisfying the desired condition. This gives an initial evidence that simultaneous
optimality of static and dynamic regret is possible. Of course, more challenges arise as the learner cannot access the exact value of $L_k$. 
Next, we construct suitable estimators and give
a concrete online algorithm.



\subsection{A Recipe for Approachability Algorithm Design}\label{sec: approachability warmup} 

Since $L_k$ is not directly observable and can only be estimated when the
learner chooses $\oo$, we introduce the following loss estimators for
\pref{eq: x and o losses definitions}: 
\begin{align}
     \hat{\ell}_{k,\xx} = \frac{\ind\{i_k=\oo\}}{p_{k,\oo}}\begin{bmatrix}
          0 \\
          \frac{1}{2}T^{1/4}\hat{L}_k
     \end{bmatrix}, \qquad 
     \hat{\ell}_{k,\oo} = \frac{\ind\{i_k=\oo\}}{p_{k,\oo}}\begin{bmatrix}
          T^{3/4} - \frac{1}{2}{\color{red}u}\sqrt{T}\hat{L}_k \\
          \sqrt{T}
     \end{bmatrix},   \label{eq: warmup estimator}
\end{align}
where $i_k\in\{\xx,\oo\}$ is the strategy sampled from $i_k\sim (p_{k,\xx}, p_{k,\oo})$ and $\hat{L}_k$
is as defined in \pref{sec: loss construction warmup}. The constant
$u=\frac{1}{3}$ is a slack parameter whose role will become clear
later.
Our goal is to choose
$(p_{k,\xx},p_{k,\oo})$ so that $\E[\sum_{k=1}^{K} (p_{k,\xx}\hat{\ell}_{k,\xx} + p_{k,\oo}\hat{\ell}_{k,\oo})]$ is bounded elementwisely by $O(T)$. 
Below, we describe a recipe inspired by Blackwell approachability to achieve this: 

\begin{enumerate}[leftmargin=16pt, itemsep=0pt, topsep=0pt, parsep=0pt, partopsep=0pt]
\item
\emph{(Defining the regret vector)}
Fix an epoch $k$ and suppose the probabilities
$(p_{k,\xx},p_{k,\oo})$ are given.
The learner samples $i_k\sim (p_{k,\xx}, p_{k,\oo})$,
executes the corresponding strategy in epoch $k$, and builds the estimators $(\hat{\ell}_{k,\xx}, \hat{\ell}_{k,\oo})$ as in
\pref{eq: warmup estimator}.
Then define the regret vector $\hat{v}_k = p_{k,\xx}\hat{\ell}_{k,\xx} + p_{k,\oo}\hat{\ell}_{k,\oo}$. 

    \item \emph{(Induced online learning problem)} We introduce a sequence
$\{\theta_k\}_{k=1}^K$ with $\theta_k\in\Delta_2$.
After receiving the regret vector $\hat v_k\in\mathbb{R}^2$ from Step~1, the learner updates $\theta_k$
using any online learning algorithm with $\hat v_k$ as the \emph{payoff} vector. 
Assume this algorithm guarantees
$
\sum_{k=1}^K
(\hat v_k(a) - \theta_k^\top \hat v_k)
\leq
\mathcal R(a)
$ for each $a\in\{1,2\}$, 
where $\mathcal R(a)$ is a regret bound with respect to comparator $a$.
    \item \emph{(Controlling the objectives)} The regret bound in Step~2 gives us $\sum_{k=1}^K \hat{v}_{k}(a) \leq \mathcal{R}(a) + \sum_{k=1}^K  \theta_k^\top \hat{v}_k$ for all $a$. Observe that the left-hand side is the $a$-th coordinate of $\sum_{k=1}^K (p_{k,\xx}\hat{\ell}_{k,\xx} + p_{k,\oo}\hat{\ell}_{k,\oo})$, which is exactly the $a$-th objective we aim to control. Based on previous discussions, it suffices to bound $\E[\calR(a)+\sum_{k=1}^K \theta_k^\top \hat{v}_k]$ by $O(T)$ for both $a\in\{1,2\}$. 
    \item \emph{(Choice of probabilities)} Given $\theta_k$ at the beginning of epoch $k$, we would like to choose 
$(p_{k,\xx},p_{k,\oo})$ to control the
magnitude of $\theta_k^\top \hat v_k$ so that $\E[\sum_{k=1}^K \theta_k^\top \vhat_k]$ is bounded by $O(T)$.  The criteria to choose $(p_{k,\xx}, p_{k,\oo})$ is that it should make $\theta_k^\top \hat v_k =  p_{k,\xx}(\theta_k^\top \hat{\ell}_{k,\xx}) +  p_{k,\oo}(\theta_k^\top \hat{\ell}_{k,\oo})$ small under the \emph{worst-case} realization of $(\hat{\ell}_{k,\xx}, \hat{\ell}_{k,\oo})$. 
    
\end{enumerate}
With the recipe above, our problem reduces to the following two sub-problems: 1) given any $\theta_k\in \Delta_2$, generate $(p_{k,\xx}, p_{k,\oo})$ to make  $\theta_k^\top \hat v_k$ small under the worst-case realization of $(\hat{\ell}_{k,\xx}, \hat{\ell}_{k,\oo})$, and 2) design an online algorithm that ensures $\sum_{k=1}^{K} (\hat{v}_{k}(a) - \theta_k^\top \hat{v}_k) \leq \mathcal{R}(a)$ is small. These two jointly ensure $\calR(a) + \sum_{k=1}^K \theta_k^\top \hat{v}_k$ is small for $a\in\{1,2\}$, which in turn controls all objectives according to the Step~3 above. In the next two subsections, we further look into these two sub-problems.  

\subsection{Choice of $(p_{k,\xx}, p_{k,\oo})$ and Bound on $\sum_k \theta_k^\top \vhat_k$}\label{sec: subprob 1}  Substituting the definitions of $\hat{\ell}_{k,\xx}$ and $\hat{\ell}_{k,\oo}$ in \pref{eq: warmup estimator} into $\theta_k^\top \hat v_k =  p_{k,\xx}(\theta_k^\top \hat{\ell}_{k,\xx}) +  p_{k,\oo}(\theta_k^\top \hat{\ell}_{k,\oo})$, we get
\begin{align}
    \scalebox{0.96}{$\displaystyle\theta_{k}^\top \hat{v}_k = \frac{\ind\{i_k=\oo\}}{p_{k,\oo}} \left[p_{k,\xx}\theta_{k}(2)\cdot\tfrac{1}{2}T^{1/4}\hat{L}_k + p_{k,\oo}\theta_{k}(1)\left(T^{3/4}-\tfrac{1}{2}u\sqrt{T}\hat{L}_k\right) + p_{k,\oo}\theta_{k}(2)\sqrt{T}\right].$} \label{eq: res term}
\end{align}
We would like to find $(p_{k,\xx}, p_{k,\oo})$ so that \pref{eq: res term} can be controlled under the worst-case realization of $\hat{L}_k$. We simply choose $(p_{k,\xx}, p_{k,\oo})$ so that the $\Lhat_k$ terms on the right-hand side of \pref{eq: res term} cancel out:  
\begin{align}
    p_{k,\xx} \theta_{k}(2)\cdot\tfrac{1}{2} T^{1/4}- p_{k,\oo}\theta_{k}(1) \cdot \tfrac{1}{2}u \sqrt{T} = 0 \ \  \Rightarrow \ \  (p_{k,\xx}, p_{k,\oo}) =  \frac{\big(u\theta_{k}(1)\sqrt{T}, \ \ \theta_{k}(2)T^{1/4}\big)}{u\theta_{k}(1)\sqrt{T} + \theta_{k}(2)T^{1/4}}. \label{eq: p choice warm}
\end{align}
Plugging this back to \pref{eq: res term}, we get 
\begin{align}
    \theta_k^\top \hat{v}_k = \frac{\ind\{i_k=\oo\}}{p_{k,\oo}} \cdot \frac{\theta_{k}(2)T^{1/4}\big(\theta_{k}(1)T^{3/4} + \theta_{k}(2)\sqrt{T}\big)}{u\theta_{k}(1)\sqrt{T} + \theta_{k}(2)T^{1/4}} \leq \frac{\ind\{i_k=\oo\}}{p_{k,\oo}} \left(\frac{\sqrt{T}}{u} + \sqrt{T}\right), \label{eq: residual}
\end{align}
where we use that $\theta_k(2)\leq 1$. This gives $\E[\sum_{k=1}^{K} \theta_k^\top \hat{v}_k ] \leq O(T)$, as desired.  


\begin{algorithm}[t]
\caption{Warm-Up}
\label{alg: restricted}
\nl \textbf{Parameters}: $\eta=\frac{1}{10T}$, $\psi(\theta)=\sum_{a=1}^2 \log(1/\theta(a))$.  \\ 
\nl Initialize $\theta_1=(\frac{1}{2}, \frac{1}{2})$.  \\
\nl \For{$k=1, \ldots, \sqrt{T}$}{
\nl     Let $(p_{k,\xx}, p_{k,\oo})$ be defined as \pref{eq: p choice warm} with $u=1/3$. \\
\nl     Sample $i_k\in\{\xx,\oo\}$ according to distribution $(p_{k,\xx}, p_{k,\oo})$.  \\
\nl     Execute strategy $i_k$ in epoch $k$ (specified in \pref{eq: def of o and x}), and obtain $\Lhat_k$ if $i_k=\oo$.  \\
\nl     Construct $\hat{\ell}_{k,\xx}, \hat{\ell}_{k,\oo}$ as in \pref{eq: warmup estimator}.  \\
\nl     Let $\hat{v}_k = p_{k,\xx}\hat{\ell}_{k,\xx} + p_{k,\oo}\hat{\ell}_{k,\oo}$ and perform \broad update:  \\
\nl     $\theta_{k+1} =\argmin_{\theta\in\Delta_2}\left\{  \eta\inner{\theta, -\hat{v}_k + 2\eta \theta_k\hat{v}_k^2} + D_\psi(\theta, \theta_k)\right\}$, 
    where $(\theta_k\hat{v}_k^2)(i) \ldef\theta_{k}(i)\hat{v}_{k}(i)^2$. 
}
\end{algorithm}

\subsection{Online Learning Procedure for $\theta_k$ and Bound on $\calR(a)$}\label{sec: online learning proce} Ideally, we would like to achieve $\sum_{k=1}^K
(\hat v_k(a) - \theta_k^\top \hat v_k) \leq \calR(a)=O(T)$. However, we find that the online learning problem for $\theta_k$ is non-standard,
making direct application of standard bandit regret bounds insufficient. Let us elaborate this below. 
Recall that the problem has $K=\sqrt{T}$ epochs, and the target regret bound $\calR(a)$ is $O(T)$. Since regret bounds for multi-armed
bandits with $K$ rounds typically scale as $|\text{payoff scale}|\sqrt{K}$, this would require the (pre-importance-weighted) payoff scale in
each epoch to be $O(T^{3/4})$. However, from \pref{eq: warmup estimator}, the negative component of the payoff
can scale as $\sqrt{T}\,\hat L_k$, which may be as large as $\Theta(T)$. This prevents us from getting the desired bound. 

Fortunately, the specific structure of our problem allows us to relax the
requirement on $\calR(a)$. In fact, it suffices to prove the following bound: 
\begin{align}    \textstyle \sum_{k=1}^{K}\hat{v}_{k}(a) \leq O(T) + \frac{1}{2}\sum_{k=1}^{K}\left|\hat{v}_{k}(a)\right| + \sum_{k=1}^{K} \theta_k^\top \hat{v}_k.   \label{eq: surrogate}
\end{align}
In other words, it is enough to ensure $\calR(a)=O(T) + \frac{1}{2}\sum_{k=1}^{K}\left|\hat{v}_{k}(a)\right|$. To see why this suffices, consider first the case $a=1$ (corresponding to static regret). We have 
\begin{align}
    \hat{v}_{k}(1) - \tfrac{1}{2}|\hat{v}_{k}(1)| 
    &= p_{k,\oo}\big( \hat{\ell}_{k,\oo}(1) - \tfrac{1}{2}|\hat{\ell}_{k,\oo}(1)| \big) \tag{$p_{k,\xx} \hat{\ell}_{k,\xx}(1)=0$ by \pref{eq: warmup estimator}} \\
    &= \ind\{i_k=\oo\}\big(T^{3/4} - \tfrac{1}{2}u\sqrt{T}\hat{L}_k\big) - \tfrac{1}{2}\ind\{i_k=\oo\}\big|T^{3/4} - \tfrac{1}{2}u\sqrt{T}\hat{L}_k\big| \tag{by \pref{eq: warmup estimator}}\\
    &\geq \tfrac{1}{2}\ind\{i_k=\oo\}\big(T^{3/4} - \tfrac{3}{2}u\sqrt{T}\hat{L}_k\big)  \label{eq: include u}
\end{align}
where the last inequality follows by a case discussion on the sign of $T^{3/4} - \tfrac{1}{2}u\sqrt{T}\hat{L}_k$. Recall from \pref{eq: Sreg bound} and \pref{eq: Dreg bound} that the static regret satisfies $\SReg_{k,\oo}\leq \ind\{i_k=\oo\}(T^{1/4}-\tfrac{1}{2}L_k) \leq \ind\{i_k=\oo\}(T^{1/4}-\tfrac{1}{2}\Lhat_k)$, which is upper bounded by $\frac{2}{\sqrt{T}} (\hat{v}_{k}(1) - \frac{1}{2}|\hat{v}_{k}(1)|)$ if we choose $u= \frac{1}{3}$ in \pref{eq: include u} (in fact, any $u\in(0,\frac{1}{3}]$ works). This can be further upper bounded with \pref{eq: surrogate} as $\frac{2}{\sqrt{T}}\sum_k \E[\hat{v}_{k}(1) - \frac{1}{2}|\hat{v}_{k}(1)|]\leq \frac{2}{\sqrt{T}}\big(O(T)+\sum_{k} \E[\theta_k^\top \hat{v}_k ]\big)\leq O(\sqrt{T})$. For $a=2$ (corresponding to dynamic regret), the argument is simpler: since $\hat{v}_{k}(2)$ is non-negative, \pref{eq: surrogate} implies $\E[\sum_k \hat{v}_{k}(2)]\leq 2 (O(T)+\sum_{k} \E[\theta_k^\top \hat{v}_k ])\leq O(T)$, giving $\DReg\leq O(T^{3/4})$.  

Finally, we note that the desired bound \pref{eq: surrogate} is achievable.
In particular, it can be achieved with \broad, an online mirror-descent algorithm with a log-barrier
regularizer and second-order correction terms \citep{wei2018more}. We provide its guarantees in \pref{app: broad}.


\vspace*{5pt}
\paragraph{Combining everything}

Combining the elements developed in
\pref{sec: loss construction warmup}--\pref{sec: online learning proce},
we present the complete algorithm for the warm-up example in
\pref{alg: restricted}. The following theorem states its regret guarantee; the
proof is deferred to \pref{app: proof main thm for warmup}.
\begin{theorem}\label{thm: warmup thm}
For the warm-up example, \pref{alg: restricted} simultaneously achieves
$\E[\SReg] = O(\sqrt{T})$ and $\E[\DReg] = O(T^{3/4})$.
\end{theorem}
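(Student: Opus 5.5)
The proof assembles the four ingredients of the recipe in \pref{sec: approachability warmup}. First, I will make the per-epoch bounds precise and conditional on the history $\mathcal{F}_{k-1}$. Given $i_k$, the realized static regret in epoch $k$ against arm~1 satisfies $\E[\SReg_k\mid\mathcal F_{k-1}, i_k=\oo]\le T^{1/4}-\tfrac12\E[\Lhat_k\mid\cdot]+\tfrac12$, and it equals $0$ under $\xx$. The dynamic regret is always the static regret plus $\tfrac12 L_k$. Since $p_{k,\xx},p_{k,\oo}$ are $\mathcal F_{k-1}$-measurable, $\E[\SReg]=\E\sum_k p_{k,\oo}\E[\SReg_{k,\oo}\mid\mathcal F_{k-1}]$, and similarly for $\DReg$. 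The key link to the estimators is the following. Because $\hat v_k=\ind\{i_k=\oo\}\bigl[\tfrac{p_{k,\xx}}{p_{k,\oo}}(0,\tfrac12T^{1/4}\Lhat_k)+(T^{3/4}-\tfrac12u\sqrt T\Lhat_k,\sqrt T)\bigr]$, taking conditional expectations and using $L_k-\E\Lhat_k+1\le T^{1/4}$ gives
\[
p_{k,\oo}\E[\SReg_{k,\oo}\mid\mathcal F_{k-1}]\le \tfrac{2}{\sqrt T}\E\bigl[\hat v_k(1)-\tfrac12|\hat v_k(1)|\,\big|\,\mathcal F_{k-1}\bigr],
\]
which is \pref{eq: include u} with $u=1/3$. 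For dynamic regret, $p_{k,\xx}\tfrac12L_k+p_{k,\oo}\E[\DReg_{k,\oo}]$ is at most $T^{-1/4}\E[\hat v_k(2)\mid\mathcal F_{k-1}]$ plus a correction. The correction arises because $\hat\ell_{k,\xx}$ uses $\Lhat_k$ rather than $L_k$, and the gap $L_k-\E\Lhat_k\le T^{1/4}$ costs at most $p_{k,\xx}\tfrac12T^{1/4}$ per epoch, hence $O(T^{3/4})$ in total.

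Second, I will apply the \broad guarantee from \pref{app: broad}. With $\eta=1/(10T)$ and the log-barrier regularizer, it gives, for each comparator $a\in\{1,2\}$,
\[
\sum_k\bigl(\hat v_k(a)-\theta_k^\top\hat v_k\bigr)\le \frac{2\log K+O(1)}{\eta}+\eta\sum_k\hat v_k(a)^2,
\]
provided $\eta|\hat v_k(i)|\le$ a small constant. This is the step I expect to be the main obstacle. The payoff $\hat v_k(a)$ can be as large as $\Theta(T)/p_{k,\oo}$ times a factor, so I must verify two things. First, I need $|\hat v_k(i)|\lesssim T$ so that $\eta\hat v_k(a)^2\le\tfrac1{10T}\cdot O(T)|\hat v_k(a)|\le\tfrac12|\hat v_k(a)|$, which is exactly the surrogate \pref{eq: surrogate}. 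Second, the stability condition $\eta|\hat v_k(i)|\le 1/2$ must hold. To check this I substitute \pref{eq: p choice warm}. One gets $\hat v_k(2)\le \ind\{\cdot\}\bigl(\tfrac{u\theta_k(1)\sqrt T}{\theta_k(2)T^{1/4}}\tfrac12T^{1/4}\Lhat_k+\sqrt T\bigr)$ and $\hat v_k(1)=\ind\{\cdot\}(T^{3/4}-\tfrac12u\sqrt T\Lhat_k)$, whose magnitude is at most $T$ since $\Lhat_k\le\sqrt T$. The ratio $\theta_k(1)/\theta_k(2)$ in $\hat v_k(2)$ is problematic. I would try to exploit the log-barrier \emph{relative} stability, or multiply the second-order term by $\theta_k(i)$ as in the \broad update (the correction $\theta_k\hat v_k^2$). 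This converts the bound to $\eta\sum_k\theta_k(a)\hat v_k(a)^2$-type terms. There $\theta_k(2)\hat v_k(2)\le O(\sqrt T\cdot\sqrt T)=O(T)$ because the $\theta_k(2)$ factor cancels the ratio. If needed, I would clip $\theta_k(2)\ge 1/T$ or note that $\sum_k\hat v_k(2)$ being huge already forces $\theta$ toward coordinate 2.

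Third, I will combine the pieces. From \pref{eq: residual}, $\E\sum_k\theta_k^\top\hat v_k\le K\cdot(3+1)\sqrt T=O(T)$, and the log term contributes $O(T\log T)$. A constant-order bound needs the refined Broad-OMD constant, or else the result holds up to logarithmic factors. Then for $a=1$, the inequality $\sum_k(\hat v_k(1)-\tfrac12|\hat v_k(1)|)\le O(T)+\sum_k\theta_k^\top\hat v_k$ gives $\E[\SReg]\le\tfrac2{\sqrt T}O(T)=O(\sqrt T)$. For $a=2$, nonnegativity of $\hat v_k(2)$ gives $\E\sum_k\hat v_k(2)\le O(T)$, and therefore $\E[\DReg]\le T^{-1/4}O(T)+O(T^{3/4})=O(T^{3/4})$.
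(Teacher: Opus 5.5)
Your proposal follows essentially the same route as the paper's proof. The steps match: the per-epoch link to $\hat v_k$ via \pref{eq: include u} and the $p_{k,\xx}\tfrac12 T^{1/4}$ correction for $L_k$ versus $\E[\Lhat_k]$; then \broad with the $\theta_k$-weighted second-order term, where the paper checks exactly your cancellation $\eta\theta_k(2)|\hat v_k(2)|\le \tfrac{u}{2}\eta\theta_k(1)\sqrt{T}\Lhat_k+\tfrac{1}{12}\le\tfrac16$; and finally \pref{eq: residual}. The one piece you leave implicit is the comparator-smoothing term $\alpha\sum_k\langle e_a-\tfrac12\mathbf{1},\hat v_k-2\eta\theta_k\hat v_k^2\rangle$ of \pref{lem: log-bar omd smooth}, which the paper controls in expectation via $\E|\hat v_k(a)|=O(T)$ and a small $\alpha$; your caveat about the logarithm is well taken, since the paper's own argument also produces $O(T\log T)$ from $\ln(1/\alpha)/\eta$ before writing $O(T)$.
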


\section{General Algorithm}\label{sec: full algo}
We now consider general deterministic bandits with $A$ arms.
The loss vector $c_t \in [0,1]^A$ is chosen by an adversary prior to any
interaction with the learner and satisfies
$\sum_{t=2}^T \ind\{c_t \neq c_{t-1}\} = S-1$, where $S$ is known to the learner.
Our algorithm for this setting is given in \pref{alg: general}.

Unlike the warm-up example in \pref{sec: warmup}, we no longer assume that arm~1 is optimal. As a result, there are $A+1$ regret quantities
to control simultaneously: the static regret with respect to each arm,
$\SReg(1),\ldots,\SReg(A)$, and the dynamic regret $\DReg$.
We divide the horizon into $\sqrt{T/A}$ epochs, each of length $\sqrt{AT}$.
Let $\calI_k = [e_k, t_k]$ denote the $k$-th epoch.
Then we define
\begin{align}
   \textstyle L_k(a)
   =
   \sum_{t \in \calI_k}
   \left(
   c_t(a) - \min_{a' \in [A]} c_t(a')
   \right), \qquad s_k = \sum_{t=e_k+1}^{t_k} \ind\{c_t\neq c_{t-1}\}.  
   \label{eq: L_k definitions}
\end{align}
The quantity $L_k(a)$ plays the same role as $L_k$ in the warm-up example, but is now
defined through loss gaps rather than sub-interval lengths.
As in the warm-up, $L_k(a)$ captures the difference between the dynamic regret and the static regret against
arm~$a$, and is crucial in quantifying the negative static regret. The algorithm also estimates $s_k$, the number of loss switches in epoch $k$. This was ignored in the warm-up example, as $s_k=1$ always holds there. In each epoch, the learner commits to one of the two strategies:
$\xx$, which emphasizes static regret control, or $\oo$, which emphasizes
dynamic regret control.
A meta-algorithm based on Blackwell approachability hedges between these choices
across epochs. 
Below we describe the two strategies.



\begin{algorithm}[t]
\caption{\algname(\alg)}
\label{alg: general}
\nl \textbf{Parameters}: $\delta\in(0,1)$, $\eta=\frac{1}{20T}$, $\io=\log(\frac{AT}{\delta})$,  $\gamma=\frac{4\io}{\sqrt{AT}}$, $\rho=4\io\sqrt{\frac{SA}{T}}$, $u=\frac{9}{16}$ (assume $\rho\leq \frac{1}{2}$ without loss of generality). \\
\nl \textbf{Feasible set and regularizer}: Define $\Theta = \big\{\theta\in\Delta_{A+1}:~ 1-\theta(A+1) \geq A\gamma\big\}$ and   \label{line: clipping definition feas} \\[-10pt]
\begin{align*}
 \psi(\theta)=\sum_{a=1}^{A+1} \theta(a)\ln\theta(a) + \ln\frac{1}{\theta(A+1)} + \ln\frac{1}{1-\theta(A+1)}. 
\end{align*}   \label{line: feasiblet set} \\[-7pt]
\nl \textbf{Initialization}: $\theta_1\gets \argmin_{\theta\in\Theta} \psi(\theta)$.\\
\nl \For{$k=1, \ldots, \sqrt{T/A}$}{
\nl     Define $e_k=(k-1)\sqrt{AT}+1$ and $t_k=k\sqrt{AT}$. Let \\[-10pt]
    \begin{align}
         p_{k,\oo} = \max\left\{\frac{\theta_{e_k}(A+1)}{{\color{red}u}\sqrt{S}\sum_{a\in[A]} \theta_{e_k}(a) + \theta_{e_k}(A+1)}, \frac{1}{\sqrt{S}}\right\}, \qquad p_{k,\xx} = 1-p_{k,\oo}.    \label{eq: pkopox} 
    \end{align} \\[-7pt]
\nl     Sample $i_k\in\{\xx,\oo\}$ from the distribution  $(p_{k,\xx}, p_{k,\oo})$.\label{line: pick x or o}\\
\nl     \eIf{$i_k=\xx$}{
\nl         \For{$t=e_k, \ldots, t_k$}{
\nl             Choose $a_t\sim q_t$ and receive $c_{t}(a_t)$, where $q_t(a) \ldef \frac{\theta_t(a)}{ \sum_{a'\in[A]}  \theta_t(a')}$. \label{line: q_t def}\\
\nl             $\hat{v}_t(a) \ldef \sqrt{\frac{T}{A}}(c_t(a_t) - \hat{c}_t(a))$ where $\hat{c}_t(a) = \frac{\ind\{a_t=a\}c_t(a)}{q_{t}(a) + \gamma}$ for $a\in[A]$. \label{line: vt def in x}\\
\nl             $\hat{v}_t(A+1)\ldef 0$. \\
\nl             $\theta_{t+1} \gets \argmin_{\theta\in\Theta}\left\{  \eta\inner{\theta, -\hat{v}_t} + D_\psi(\theta, \theta_t)\right\}$. \label{line: update omd in x}
        }
    }{
\nl         \mbox{Run \expalg (\pref{alg: auer et l - det}) with exploration rate $\exprate$ for $t\in [e_k, t_k]$. Obtain $(\hat{L}_{k}(a))_{a=1}^A$ and $\hat{s}_k$. \label{line: proceed expalg}}\\
\nl         \For{$t=e_k, \ldots, t_k$}{
\nl             \If{$t=t_k$\label{line: start construct loss est}}{
\nl                 $\hat{v}_t(a) \ldef \min\left\{8\rho T + \frac{12\io\sqrt{AT}\hat{s}_k}{\rho}, \frac{3}{2}T \right\} - {\color{red}u}\sqrt{\frac{T}{A}}\cdot\Lhat_k(a)$  for $a\in[A]$. \label{line: vt in x}\\
\nl                 $ \hat{v}_{t}(A+1) \ldef \frac{p_{k,\xx}}{ p_{k,\oo}}\sqrt{\frac{T}{SA}} \cdot \sum_{a\in [A]} q_{e_k}(a) \Lhat_{k}(a)$.
            }
\nl         \lElse{
                 $\hat{v}_t(a)\ldef 0$ for $a\in[A+1]$. \label{line: vt = 0 for all t}
            }\label{line: end construct loss est}
\nl             $\theta_{t+1} \gets \argmin_{\theta\in\Theta}\left\{  \eta\inner{\theta, -\hat{v}_t + 2\eta \theta_t\hat{v}_t^2} + D_\psi(\theta, \theta_t) \right\}$. \label{line: broad-omd update1}
        }
    }
}
\end{algorithm}

\begin{algorithm}[t]
\caption{\expalgCName (\expalg) in epoch $k$ with $i_k=\oo$}
\label{alg: auer et l - det}
\nl \textbf{Input}: exploration parameter $\exprate$.\\
\nl Initialize $\hat{c}_{e_k}(a)\leftarrow 0$ for all $a\in[A]$ and $t\leftarrow e_k$.\\
\nl \For{$m=0,1,2, \ldots, $}{
\nl     $\tau\super{m}\leftarrow t$.   \label{line: starting index}\\
\nl     \command{Phase 1:~Sample each arm once} \\
\nl     \For{$i=1,2, \ldots, A$}{
\nl         \lIf{$t>t_k$}{\textbf{break}.} 
\nl         Pull arm $a_t = i$, and observe cost $c_t(a_t)$.\\
\nl         Update $\hat{c}_{t+1}(a_t) \leftarrow c_t(a_t)$, and keep $\hat{c}_{t+1}(a)\leftarrow \hat{c}_{t}(a)$ for all $a\neq a_t$.\\
\nl         $t\leftarrow t+1$.
    }
\nl $\Delta\super{m}(a) \gets \hat{c}_t(a) - \min_{a'\in[A]} \hat{c}_t(a')$. \label{line: calculate gap} \\
\nl  $\prob\super{m}(a) = \begin{cases}
    \frac{1}{A}\min\big\{1, \frac{\rho}{\Delta\super{m}(a)} \big\}, \qquad &a\neq \argmin_{a'\in[A]} \hat{c}_t(a'), \\
    1-\sum_{a'\neq a}\prob\super{m}(a'),   &a=\argmin_{a'\in[A]} \hat{c}_t(a'). \end{cases}$   \label{line: gap dep exp}\\[5pt] 
\nl     \command{Phase 2:~Sample arms according to $q\super{m}$ and detect changes}  \\
\nl     \While{$t\leq t_k$}{
\nl         Pull arm $a_t\sim \prob\super{m}$. \\
\nl         Update $\hat{c}_{t+1}(a_t) \leftarrow c_t(a_t)$, and keep $\hat{c}_{t+1}(a)\leftarrow \hat{c}_{t}(a)$ for all $a\neq a_t$.\\
\nl         \changedetected\ $\leftarrow \mathbb{I}\{c_t(a_t)\neq \hat{c}_{t}(a_t)\}$.\\
\nl         $t\leftarrow t+1$.\\
\nl         \lIf{\changedetected}{
            \textbf{break}. \label{line: change detected}
        }
    } 
\ \\
\nl   \lIf{$t>t_k$}{\textbf{break}.}
}
\nl \textbf{return}  $\Lhat_k(a) = \sum_{t=e_k}^{t_k} \left(\hat{c}_{t}(a) - \min_{a'\in [A]} \hat{c}_{t}(a')\right)$ for $a\in[A]$, and $\hat{s}_k = \text{final value of $m$}$.  \label{line: hatL definition}
\end{algorithm}

\paragraph{The $\xx$ strategy}
If $\xx$ is selected in epoch~$k$, then in each round the learner
samples an arm according to
$q_t(a) = \theta_t(a) / \sum_{a' \in [A]} \theta_t(a')$
(\pref{line: q_t def}), where
$\theta_t \in \Delta_{A+1}$ is the weight vector maintained by the
approachability algorithm.
Using the observed loss, the learner constructs standard bandit loss
estimators $\hat{c}_t(a)$ as in \cite{neu2015explore}, and uses them to construct the regret vector
$\hat{v}_t$ (\pref{line: vt def in x}). Unlike in the warm-up, $\vhat_t$ is indexed by $t$ here, and the approachability algorithm is updated every round. 
\paragraph{The $\oo$ strategy}
If $\oo$ is selected in epoch~$k$, the learner runs the procedure
\expalg\ (\pref{alg: auer et l - det}). \expalg is a detection-based dynamic regret minimization algorithm inspired by \cite{auer2019adaptively} and \cite{karnin2016multi}. 
At the end of the epoch, \expalg returns $\hat{L}_k(a)$ and 
$\hat{s}_k$ as estimates of $L_k(a)$ and $s_k$. 
These estimates are used to construct the regret vector
$\hat{v}_t$ (\pref{line: start construct loss est}--\pref{line: end construct loss est}).
In $\oo$ epochs, although $\hat{v}_t$ is defined for every round $t$, it is non-zero only at the final
round of the epoch. 


Below, our analysis follows the same high-level structure as in the warm-up case.
We establish static and dynamic regret guarantees
for the $\xx$ and $\oo$ strategies individually.
We then relate these guarantees to the components of the regret vector
$\hat{v}_t$ and analyze the approachability algorithm operating on $\vhat_t$.
All results are stated with high-probability guarantees.

\subsection{Approachability Algorithm Design and Analysis} 
\label{sec:base-procedure}

\paragraph{Guarantees in $\xx$ epochs} 
If $i_k=\xx$, we estimate the loss of each arm using the
\mbox{EXP3-IX} estimator $\hat{c}_t(a)$ \citep{neu2015explore}, and thus estimate the instantaneous static
regret against arm $a$ by $c_t(a_t)-\hat c_t(a)$. For dynamic regret, \pref{lem:one-interval-omd} shows that with high probability ($e_k$ is the first round of epoch $k$)
\begin{align}
\textstyle\DReg_{k,\xx} \leq \sum_{a\in[A]} q_{e_k}(a) L_k(a) + \sum_{a\in[A]}\sum_{t\in \calI_k} q_{e_k}(a)\left(\hat{c}_t(a) - c_t(a)\right) +  \frac{2.5A\io}{\sum_{a\in[A]} \theta_{e_k}(a)}.    \label{eq: Dreg x general}
\end{align}



\paragraph{Guarantees in $\oo$ epochs}
If $i_k=\oo$, we run \expalg in the epoch with exploration parameter $\exprate\approx\sqrt{SA/T}$. 
\pref{lemma: sreg under ooo} and \pref{lem: dreg in o} show that $\expalg$ guarantees with high probability
    \begin{align}
        \SReg_{k,\oo}(a) &\leq O\big( \min\big\{\exprate\sqrt{AT}+ \exprate^{-1}A\hat{s}_k\io,\ \sqrt{AT}\big\}\big) - \tfrac{3}{4}\Lhat_k(a), \label{eq: SRego in general} \\
        \DReg_{k,\oo} &\leq O\big(\min\big\{\exprate\sqrt{AT}+ \exprate^{-1}A\left(\hat{s}_k + \ind\{s_k>0\}\right)\io, \ \sqrt{AT}\big\}\big). \label{eq: DRego in general} 
    \end{align}
In our proof, it is crucial that all quantities in the static regret upper bound \pref{eq: SRego in general} are observable by the learner (in contrast to \pref{eq: DRego in general}, where $\ind\{s_k>0\}$ is not observable). The positive and negative terms in \pref{eq: SRego in general} capture the exploration overhead and the negative static regret, respectively, and have to be estimated with sufficient accuracy to ensure cancellation. To this end, it is important to employ the gap-dependent exploration in \pref{line: gap dep exp}. In fact, the sole purpose of assuming deterministic losses in this work is to enable the derivation of \pref{eq: SRego in general}---without deterministic losses, the additional detection cost would make the static regret in $\oo$ worse and lead to sub-optimal results finally. We discuss \expalg more in \pref{app: expalg}.

\paragraph{Forming the regret vector $\vhat_t$}
Similar to the warm-up case, the regret vector's components correspond to the regrets we aim to control. Over epoch $k$, we define $\hat{v}_t\in\mathbb{R}^{A+1}$
so that
\begin{align}
    \sum_{t\in\calI_k}\vhat_t = p_{k,\xx} \begin{bmatrix}
        \sqrt{T/A} \cdot \overline{\SReg}_{k,\xx}(1) \\
        \vdots \\
        \sqrt{T/A} \cdot \overline{\SReg}_{k,\xx}(A) \\
        \sqrt{T/(SA)}\cdot \overline{\DReg}_{k,\xx}
    \end{bmatrix} + p_{k,\oo} \begin{bmatrix}
        \sqrt{T/A} \cdot \overline{\SReg}_{k,\oo}(1) \\
        \vdots \\
        \sqrt{T/A} \cdot \overline{\SReg}_{k,\oo}(A) \\
        \sqrt{T/(SA)}\cdot \overline{\DReg}_{k,\oo}  
    \end{bmatrix}
    \label{eq: complicated estimator}
\end{align}
for some regret (upper bound) estimators $\overline{\SReg}_{k,\xx}(a)$, $\overline{\SReg}_{k,\oo}(a)$, $\overline{\DReg}_{k,\xx}$, and $\overline{\DReg}_{k,\oo}$ derived based on \pref{eq: Dreg x general}--\pref{eq: DRego in general}, with the scaling chosen so that it suffices to control all objectives to $O(T)$.  

Not all terms in the regret bounds \pref{eq: Dreg x general}--\pref{eq: DRego in general} need to be included in the definition of $\vhat_t$. 
In particular, terms that can be controlled for any choice of
$(p_{k,\xx},p_{k,\oo})$ do not need to be hedged with other regrets
and can therefore be omitted. This allows us to omit the last two terms in \pref{eq: Dreg x general} and all terms in \pref{eq: DRego in general}.
Accordingly, our $\vhat_t$ in \pref{alg: general} is constructed according to \pref{eq: complicated estimator} with 
\begin{align*}
    &\overline{\SReg}_{k,\xx}(a) = \textstyle\frac{\ind\{i_{k}=\xx\}}{p_{k,\xx}}\sum_{t\in\calI_k} (c_t(a_t) - \hat{c}_t(a)), \quad \overline{\DReg}_{k,\xx} = \frac{\ind\{i_k=\oo\}}{p_{k,\oo}} \sum_{a\in[A]} q_{e_k}(a)\Lhat_k(a), \notag\\
    &\overline{\SReg}_{k,\oo}(a) = \textstyle \frac{\ind\{i_k=\oo\}}{p_{k,\oo}}  \left(\min\left\{8\rho\sqrt{AT} + 12\rho^{-1}A\hat{s}_k\io, \frac{3}{2}\sqrt{AT}\right\} - \frac{9}{16}\Lhat_k(a)\right), \quad \overline{\DReg}_{k,\oo} = 0.  
\end{align*}
Besides, there are three \emph{clipping operations} in the design of \pref{alg: general}: the $\min\{\cdot~, \frac{3}{2}T\}$ in \pref{line: vt in x}, the constraint $p_{k,\oo}\geq \frac{1}{\sqrt{S}}$ in \pref{eq: pkopox}, and the constraint $1-\theta(A+1) = \sum_{a\in[A]}\theta(a)\geq A\gamma$ in \pref{line: clipping definition feas}. They are used to control the magnitudes of certain regret terms to facilitate high-probability guarantees.

\paragraph{Approachability guarantees}
\label{sec:meta-alg}
Define $Z=
\sum_{k=1}^K\sum_{t\in \calI_k} \left(\hat{v}_t \ind\{i_k=\xx\}  +  \left(\hat{v}_t - \frac{1}{3}|\hat{v}_t|\right) \ind\{i_k=\oo\}\right)\in\mathbb{R}^{A+1}$. \pref{lemma: connect sreg to v} and \pref{lem: connect dreg to v} establish the following bounds with high probability: 
\begin{align*}
   \forall a\in[A], \quad  \SReg(a) \textstyle\leq \sqrt{\frac{A}{T}}Z(a) + O(\sqrt{AT}\io) \quad \text{and} \quad
   \DReg \textstyle\leq 6\sqrt{\frac{SA}{T}}Z(A+1) + O(\sqrt{SAT}\io).   
\end{align*}
\pref{lem: final regret bound} further shows $
   Z(a) \textstyle\leq O(T\io)$ for all $ a\in[A+1]$. Combining them, we get: 
\begin{theorem}
     \pref{alg: general} ensures with probability at least $1-O(\delta)$ that $\SReg(a)\leq O\big(\sqrt{AT}\log(T/\delta)\big)$ for all $a\in[A]$ and $\DReg\leq O\big(\sqrt{SAT}\log(T/\delta)\big)$ simultaneously. 
\end{theorem}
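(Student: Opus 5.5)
The theorem follows by combining the two reduction lemmas with the approachability bound, so the plan is mostly bookkeeping on top of the three ingredients stated just before it. First, I invoke \pref{lemma: connect sreg to v} and \pref{lem: connect dreg to v}. On an event of probability $1-O(\delta)$ they give
\[
\SReg(a)\le \sqrt{A/T}\,Z(a)+O(\sqrt{AT}\io)\quad\text{for all } a\in[A],
\qquad
\DReg\le 6\sqrt{SA/T}\,Z(A+1)+O(\sqrt{SAT}\io).
\]
Second, I invoke \pref{lem: final regret bound}, which on another event of probability $1-O(\delta)$ gives $Z(a)\le O(T\io)$ for every $a\in[A+1]$. Taking a union bound over these $O(1)$ events, both sets of inequalities hold simultaneously with probability $1-O(\delta)$.

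Substituting then gives
\[
\SReg(a)\le \sqrt{A/T}\cdot O(T\io)+O(\sqrt{AT}\io)=O(\sqrt{AT}\,\io),
\qquad
\DReg\le 6\sqrt{SA/T}\cdot O(T\io)+O(\sqrt{SAT}\io)=O(\sqrt{SAT}\,\io).
\]
Since $\io=\log(AT/\delta)$, I would rewrite $\io$ as $\log(T/\delta)$ up to constants, treating $\log A\le\log T$ as absorbed (we may assume $A\le T$, since otherwise the bounds are trivial because regret is at most $T$). The static bound holds for all $a$ at once because each lemma is stated uniformly over $a$; if a lemma were only per-arm, I would add a union bound over the $A+1$ coordinates, which is already accounted for by the $A$ inside $\io$.

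The main obstacle is not this combination but \pref{lem: final regret bound}. Were I proving it, I would mirror the warm-up:
\begin{enumerate}
\item Apply the \broad guarantee, with log-barrier on coordinate $A+1$ and entropy elsewhere, to get a bound of the form $\sum_t \hat v_t(a)-\sum_t\theta_t^\top\hat v_t\le O(T\io)+\tfrac13\sum_t|\hat v_t(a)|$ on $\oo$ epochs, using $\eta=1/(20T)$ and the clipping $\min\{\cdot,\tfrac32T\}$ to keep $\eta|\hat v_t|$ bounded.
\item Bound $\sum_t\theta_t^\top\hat v_t$ through the choice of $p_{k,\oo}$ in \pref{eq: pkopox}, which cancels the $\Lhat_k$ terms as in \pref{eq: p choice warm}.
\item Control the martingale deviations of the importance-weighted estimators with Freedman's inequality, using $p_{k,\oo}\ge 1/\sqrt S$ and $\sum_{a\in[A]}\theta(a)\ge A\gamma$.
\end{enumerate}
Since the statement lets me assume the earlier lemmas, the proof itself reduces to the union bound and substitution above.
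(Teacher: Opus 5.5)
Your proposal is correct and follows the paper's argument: the theorem comes from combining \pref{lemma: connect sreg to v} and \pref{lem: connect dreg to v} with \pref{lem: final regret bound} (whose conclusion, after moving the $\frac{1}{3}|\hat v_t(a)|$ term to the left, is exactly $Z(a)\le 400T\io$), on the intersection of their $1-O(\delta)$ events. Your extra step converting $\io=\log(AT/\delta)$ to $O(\log(T/\delta))$, using that the bounds are trivial when $A>T$, is a correct detail that the paper leaves implicit.
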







\section{Conclusion}
We present the first algorithm achieving optimal static and dynamic regret simultaneously in adversarial multi-armed bandits, under the assumptions of deterministic losses and a known number of switches. Relaxing either assumption while maintaining optimality presents significant challenges and is left as future work. We hope our work provides new insights into the ``best-of-all-worlds'' problem about the existence of an algorithm that competes optimally against benchmarks with different numbers of switches simultaneously---an open problem stated in \cite{auer2019adaptively}. 

\section*{Acknowledgements}


We thank Zak Mhammedi for participating in the early stages of this project and providing insightful discussions.

\bibliographystyle{abbrvnat}  
\bibliography{arxiv/references}
\clearpage
\appendix

\newpage
\section{Related Work}\label{app: related work}

Our work bridges static regret minimization for adversarial bandits and dynamic regret minimization for non-stationary bandits, each of which has been studied for decades. We refer the reader to \cite{bubeck2012regret} and \cite{lattimore2020bandit} for a comprehensive survey on each of them. 

The EXP3.S algorithm by \cite{auer2002nonstochastic} is a powerful framework that can achieve $\otil(\sqrt{\Gamma T})$ regret if the learning rate is tuned perfectly with respect to $\Gamma$, where $\Gamma-1$ is the number of times the benchmark sequence switches the arm. When $\Gamma$ is configured as $1$, it recovers the $\otil(\sqrt{T})$ static regret; when configured as the number of times the loss changes, it recovers the $\otil(\sqrt{ST})$ dynamic regret.  Unfortunately, these two bounds are achieved under different choices of learning rates, and there is no obvious way to achieve them simultaneously. 

Recently, there is a line of research focusing on achieving optimal dynamic regret without prior knowledge on the number of times the loss changes or the total variation of the losses. The first such result is by \cite{auer2019adaptively}, which is further extended or refined by \cite{chen2019new, wei2021non, chen2021combinatorial, chen2022near, suk2022tracking, suk2023tracking, abbasi2023new,  ganguly2023online, hong2023optimization, wang2025adaptivity}. The focus of these works are different from ours: they solely focuses on dynamic regret (and may sacrifice static regret), while we simultaneously care about static regret and dynamic regret. On the other hand, their algorithms are prior-knowledge-free and can handle stochastic losses, while our current algorithm relies on prior knowledge of $S$ and works only for deterministic losses.  

Our problem can be regarded as a model selection problem. In model selection, the learner is faced with a set of plausible models of the world or a set of base algorithms, and the goal is to perform seamlessly as well as the algorithm that knows the true model in advance or the best base algorithm. In many bandit settings, due to limited feedback, it has been shown to be impossible to approach the performance of the best base algorithm through model selection \citep{pacchiano2020model, marinov2021pareto, zhu2022pareto}. The $S$-switch bandit problem studied in our work was extensively discussed in \cite{marinov2021pareto}. While they focus on the lower bound for adaptive adversary, we focus on the upper bound for oblivious adversary. 

In bandits, there are a few results formally showing a gap between the power of adaptive and oblivious adversary. One notable example is given by \cite{auer2016algorithm}, who showed that in multi-armed bandits, when the adversary is adaptive, it is impossible to achieve the \emph{best of both worlds}, i.e., achieving an $O(\log T)$ regret when the losses are i.i.d., and $\otil(\sqrt{T})$ when the losses are adversarial. However, this is possible for oblivious adversary \citep{auer2016algorithm, zimmert2021tsallis}. As another example,  \cite{bubeck2019improved} showed a gap between the two adversaries regarding path-length regret bounds.  More recently, \cite{filmus2024bandit} showed the strictly stronger power of adaptive adversary in the multi-class bandit classification problem.  

Blackwell approachability \citep{blackwell1956analog} was proposed as a generalization to von Neumann's minimax theorem \citep{v1928theorie} for vector-valued payoff. It can be naturally applied to online learning problems with multiple objectives, such as calibrated forecasting \citep{foster1999proof}, fair learning \citep{chzhen2021unified}, and scheduling \citep{hou2009admission}. It has been shown that Blackwell approachability and no-regret learning are equivalent in the sense that an algorithm for one problem can be converted to an algorithm for the other \citep{abernethy2011blackwell, perchet2013approachability, perchet2015exponential}. In this work, we also perform approachability by leveraging no-regret learning algorithms.

\section{\pfref{thm: lower bound general S}}
\label{app: lower bound proof}

\begin{proof}[\pfref{thm: lower bound general S}]
     Without loss of generality, we assume that $S/2$ and $2T/S$ are integers. The strategy of the adversary is as follows. Divide the horizon into $S/2$ epochs each of length $2T/S$. The default loss vector at the beginning of each epoch is $c_t=(\frac{1}{2}, 1)$. Let $M$ and $N$ be two integers to be determined later. Conditioned on the adversary's decisions up to epoch $(k-1)$, the adversary makes decisions in epoch $k$ based on the following: 
    \begin{itemize}
         \item \textbf{Case 1}: If the learner's expected number of draws to arm $2$ in epoch $k$ is at least $\PHtwo$, the adversary will keep $c_{t}=(\frac{1}{2},1)$ throughout the whole epoch $k$. 
         \item \textbf{Case 2}: If the expected number of draws to arm $2$ in epoch $k$ is smaller than $\PHtwo$, the adversary will find a \emph{sub-interval} of length $\PHthree$ within epoch $k$, and change the loss to $c_{t}=(\frac{1}{2}, 0)$ in that sub-interval, while keeping $c_{t}=(\frac{1}{2}, 1)$ outside that sub-interval. This sub-interval is chosen so that the expected number of draws to arm $2$ is minimized among all windows of length $\PHthree$. 
    \end{itemize}
    Up to this point, the adversary is oblivious, since all decisions above only requires knowledge on the learner's algorithm, but not the real interaction history between the learner and the adversary. The adversary will further do the following that makes it indeed adaptive: in the Case 2 above, if the learner ever draws arm 2 in that sub-interval (i.e., the learner finds that the loss has been changed!), the adversary immediately change the loss back to $c_{t}=(\frac{1}{2}, 1)$ in the remaining rounds in that epoch. It is easy to see that the loss vector switches at most $2\cdot S/2=S$ times. 

    Now we analyze the expected regret of the learner. We first calculate $\SReg$. For each epoch, if Case 1 holds, then the learner's regret against arm~1 adds at least $\frac{\PHtwo}{2}$; if Case 2 holds, the learner has at least $-\frac{1}{2}$ regret against arm~1 because the learner wins over arm $1$ by an amount of $\frac{1}{2}$ in at most one round (recall that if this happens, the adversary soon changes the loss vector back). Overall, 
    \begin{align}
        \SReg \geq\SReg_1 \geq \E\left[\sum_{k=1}^{S/2} 
 \left(E_{k}\times \frac{\PHtwo}{2} - (1-E_k)\times\frac{1}{2}\right)\right],   \label{eq: sreg lower general}
    \end{align}
    where $E_k=1$ indicates that Case 1 holds for epoch $k$, and $0$ otherwise. 

    Next, we calculate $\DReg$. The benchmark will be arm~1 if $c_t=(\frac{1}{2},1)$ and arm~2 if $c_t=(\frac{1}{2}, 0)$. If Case~1 holds in epoch $k$, then the dynamic regret adds at least $\frac{\PHtwo}{2}$. If Case~2 holds and if the learner never samples arm $2$ in the sub-interval, then the dynamic regret adds at least $\frac{1}{2}\times\PHthree=\frac{\PHthree}{2}$. By an argument provided at the end of the proof, if $MN\leq T/S$, then we can show that in Case~2 this happens with probability at least $\frac{1}{2}$. 
    Overall, we have
    \begin{align}
        \DReg \geq \E\left[\sum_{k=1}^{S/2} \left(E_k\times \frac{\PHtwo}{2} + (1-E_k)\times\frac{1}{2}\times  \frac{\PHthree}{2}\right)\right].   \label{eq: dreg lower general}
    \end{align}
    With \pref{eq: sreg lower general} and \pref{eq: dreg lower general}, we see that if $\E[\sum_k E_k] \geq \frac{3}{4} \times \frac{S}{2}$, then $\SReg \geq \frac{S}{2} \times \frac{\PHtwo}{2} = \frac{S\PHtwo }{4}$; otherwise, $\DReg\geq \frac{S}{8}\times \frac{\PHthree}{4}= \frac{S\PHthree}{32}$. This proves the theorem with the choice of $\PHtwo = \sqrt{T}/S^{1-\alpha}$ and $\PHthree =  \sqrt{T}/S^\alpha$.

    Finally, we show that in any epoch, if Case~2 holds, then with probability at least $\frac{1}{2}$, the learner never samples arm~2 in the sub-interval. We prove it by contradiction. By the way the adversary chooses the sub-interval, if the statement does not hold, then for any window of length $\PHthree$, with probability at least $\frac{1}{2}$, the learner samples arm~2 at least once. Since there are $\frac{2T/S}{\PHthree}\geq 2\PHtwo$ such windows in epoch $k$, the expected number of samples of arm~2 is at least $2\PHtwo\times \frac{1}{2}=\PHtwo$, contradicting that we are in Case 2. 
\end{proof}



\section{Concentration Inequalities}
\begin{lemma}[Freedman's inequality (Lemma~A.3 of \cite{foster2021statistical})]\label{lem: freedman+}
    Let $X_1, \ldots, X_T$ be a sequence of random variable adapted to a filtration $(\calF_t)_{t=0}^T$. If $0\leq X_i\leq B$ almost surely, then with probability at least $1-\delta$, 
    \begin{align*}
        \sum_{t=1}^T X_t &\leq \frac{3}{2}\sum_{t=1}^T \E[X_t|\calF_{t-1}] + 4B\log(1/\delta). 
    \end{align*}
    Also, with probability at least $1-\delta$, 
    \begin{align*}
        \sum_{t=1}^T \E[X_t|\calF_{t-1}] &\leq 2\sum_{t=1}^T X_t + 8B\log(1/\delta).
    \end{align*}
\end{lemma}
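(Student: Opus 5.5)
The plan is the standard exponential-supermartingale (Chernoff-type) argument, run once for each direction. The bounded range $0\le X_t\le B$ lets us use a linear (convexity) bound on the moment generating function instead of a variance term. Since both claims have a multiplicative constant in front of the conditional means, no variance proxy is needed and a fixed choice of $\lambda$ suffices.

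\textbf{Upper tail.} Fix $\lambda>0$. By convexity of $x\mapsto e^{\lambda x}$ on $[0,B]$, we have $e^{\lambda X_t}\le 1+\frac{e^{\lambda B}-1}{B}X_t$. Taking conditional expectations and using $1+y\le e^y$ gives
\[
\E[e^{\lambda X_t}\mid\calF_{t-1}]\le \exp\Big(\tfrac{e^{\lambda B}-1}{B}\,\E[X_t\mid\calF_{t-1}]\Big).
\]
Hence
\[
M_t=\exp\Big(\lambda\sum_{s\le t}X_s-\tfrac{e^{\lambda B}-1}{B}\sum_{s\le t}\E[X_s\mid\calF_{s-1}]\Big)
\]
is a nonnegative supermartingale with $M_0=1$, so $\E[M_T]\le 1$. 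Markov's inequality then gives $M_T\le 1/\delta$ with probability at least $1-\delta$. Rearranging, on that event
\[
\sum_t X_t\le \frac{e^{\lambda B}-1}{\lambda B}\sum_t\E[X_t\mid\calF_{t-1}]+\frac{\log(1/\delta)}{\lambda}.
\]
Choose $\lambda B=3/4$. Then $\frac{e^{3/4}-1}{3/4}\approx 1.489\le \frac32$ and $\frac1\lambda=\frac{4B}{3}\le 4B$, which yields the first inequality.

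\textbf{Lower tail.} The second inequality follows the same way with $-\lambda$ in place of $\lambda$. Convexity gives $e^{-\lambda X_t}\le 1-\frac{1-e^{-\lambda B}}{B}X_t$, so
\[
\E[e^{-\lambda X_t}\mid\calF_{t-1}]\le\exp\Big(-\tfrac{1-e^{-\lambda B}}{B}\E[X_t\mid\calF_{t-1}]\Big).
\]
Thus
\[
\exp\Big(-\lambda\sum_s X_s+\tfrac{1-e^{-\lambda B}}{B}\sum_s\E[X_s\mid\calF_{s-1}]\Big)
\]
is a supermartingale starting at $1$. Applying Markov's inequality and rearranging gives, with probability at least $1-\delta$,
\[
\sum_t\E[X_t\mid\calF_{t-1}]\le\frac{\lambda B}{1-e^{-\lambda B}}\Big(\sum_tX_t+\frac{\log(1/\delta)}{\lambda}\Big).
\]
With $\lambda B=1$ the prefactor is $1/(1-e^{-1})\approx1.582\le 2$. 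So the right-hand side is at most $2\sum_tX_t+2B\log(1/\delta)\le 2\sum_tX_t+8B\log(1/\delta)$.

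\textbf{Where care is needed.} There is no real obstacle. The only points to check are the supermartingale property (it uses only adaptedness and the almost-sure bound) and the numerical constants for the chosen $\lambda$. The stated constants $3/2,4,2,8$ are loose enough that these simple choices work; each statement is a separate probability-$(1-\delta)$ event, so no union bound is required.
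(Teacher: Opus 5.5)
Your proof is correct, and it takes a different route from the one the paper relies on. The chord bounds on $e^{\pm\lambda x}$ over $[0,B]$ and the two exponential supermartingales are sound. The constant checks also hold: $\frac{e^{3/4}-1}{3/4}\approx 1.49\le\frac32$ and $\frac{1}{1-e^{-1}}\approx 1.58\le 2$. Treating each inequality as its own probability-$(1-\delta)$ event matches how the statement is phrased.

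The paper gives no proof of this lemma; it imports it from \cite{foster2021statistical}. As the lemma's name indicates, the standard derivation is a corollary of the variance form of Freedman's inequality. That form says $\sum_t (Y_t-\E[Y_t\mid\calF_{t-1}])\le \eta\sum_t\E[Y_t^2\mid\calF_{t-1}]+\log(1/\delta)/\eta$ for $\eta\le 1/B$. One applies it to $\pm X_t$, uses the self-bounding estimate $\E[X_t^2\mid\calF_{t-1}]\le B\,\E[X_t\mid\calF_{t-1}]$, and fixes $\eta\propto 1/B$, so the variance term is absorbed into a multiple of $\sum_t\E[X_t\mid\calF_{t-1}]$.

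Your argument skips the variance proxy and uses nonnegativity directly inside the moment generating function, which amounts to a martingale multiplicative-Chernoff bound. This buys two things: the proof is self-contained, and the constants are sharper (additive terms $\tfrac43 B\log(1/\delta)$ and about $1.58B\log(1/\delta)$ instead of $4B$ and $8B$). The Freedman route instead reuses one general tool that also covers signed, variance-controlled increments, which is why it is the version usually cited.
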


\begin{lemma}[Lemma~1 of \cite{neu2015explore}]\label{lem: IX lemma}
Fix a deterministic sequence
$\ell_1,\ldots,\ell_T\in[0,1]^A$ and $\gamma>0$. 
Let $\mathbb P$ be a probability measure and let $(\mathcal F_t)_{t=0}^T$
be a filtration. 
For each $t\in[T]$, let $q_t\in\Delta_A$ and $\alpha_t\in[0,1]^A$ be
$\mathcal F_{t-1}$-measurable, and let $a_t$ be an $[A]$-valued random
variable such that
\[
\mathbb P(a_t=a \mid \mathcal F_{t-1}) = q_t(a)\qquad \text{a.s. for all } a\in[A].
\]
Define for $a\in[A]$,
\[
\hat{\ell}_t(a):=\frac{\ell_t(a)\mathbf \ind\{a_t=a\}}{q_t(a)+\gamma}.
\]
Then with probability at least $1-\delta$,
\[
\sum_{t=1}^T\sum_{a=1}^A \alpha_t(a)\bigl(\hat{\ell}_t(a)-\ell_t(a)\bigr)
\le \frac{\log(1/\delta)}{2\gamma}.
\]
\end{lemma}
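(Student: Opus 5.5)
The plan is the standard exponential-supermartingale argument. The bias of the implicit-exploration estimator is downward, and that bias exactly pays for the variance term in a Chernoff bound. I will show that
\[
Z_t \ldef \exp\Bigl(2\gamma\sum_{s=1}^t\sum_{a=1}^A \alpha_s(a)\bigl(\hat{\ell}_s(a)-\ell_s(a)\bigr)\Bigr)
\]
satisfies $\E[Z_t\mid\calF_{t-1}]\le Z_{t-1}$. Then $\E[Z_T]\le 1$, and Markov's inequality gives $\mathbb P(Z_T\ge 1/\delta)\le\delta$, which is exactly the claimed bound after taking logarithms and dividing by $2\gamma$.

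\textbf{Step 1: a pointwise logarithmic bound on the estimator.} Since $\ell_t(a)\in[0,1]$, we have $q_t(a)+\gamma\ge q_t(a)+\gamma\ell_t(a)$. Combining this with the elementary inequality $\frac{z}{1+z/2}\le\log(1+z)$ for $z\ge0$ gives
\[
\hat{\ell}_t(a)\le\frac{\ell_t(a)\ind\{a_t=a\}}{q_t(a)+\gamma\ell_t(a)}=\frac{1}{2\gamma}\cdot\frac{2\gamma\ell_t(a)\ind\{a_t=a\}/q_t(a)}{1+\gamma\ell_t(a)\ind\{a_t=a\}/q_t(a)}\le\frac{1}{2\gamma}\log\Bigl(1+\frac{2\gamma\ell_t(a)\ind\{a_t=a\}}{q_t(a)}\Bigr).
\]
(When the indicator is zero both sides vanish, so nothing special is needed there.)

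\textbf{Step 2: handle the weights $\alpha_t$ and the sum over arms.} Concavity of $\log(1+\cdot)$ with $\log 1=0$ gives $\alpha\log(1+z)\le\log(1+\alpha z)$ for $\alpha\in[0,1]$. Hence
\[
\exp\Bigl(2\gamma\sum_a\alpha_t(a)\hat{\ell}_t(a)\Bigr)\le\prod_a\Bigl(1+\frac{2\gamma\alpha_t(a)\ell_t(a)\ind\{a_t=a\}}{q_t(a)}\Bigr)=1+\sum_a\frac{2\gamma\alpha_t(a)\ell_t(a)\ind\{a_t=a\}}{q_t(a)}.
\]
The last equality holds because at most one indicator is nonzero, so all cross terms vanish.

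\textbf{Step 3: take the conditional expectation.} Both $q_t$ and $\alpha_t$ are $\calF_{t-1}$-measurable, and $\mathbb P(a_t=a\mid\calF_{t-1})=q_t(a)$. Arms with $q_t(a)=0$ are a.s.\ never drawn and contribute nothing. The conditional expectation of the right-hand side is therefore $1+2\gamma\sum_a\alpha_t(a)\ell_t(a)\le\exp\bigl(2\gamma\sum_a\alpha_t(a)\ell_t(a)\bigr)$. Multiplying by the $\calF_{t-1}$-measurable factor $\exp(-2\gamma\sum_a\alpha_t(a)\ell_t(a))$ yields the supermartingale property, and iterating the tower rule gives $\E[Z_T]\le1$.

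The only delicate step is Step 1. One has to see that replacing $\gamma$ by $\gamma\ell_t(a)$ in the denominator makes the ratio fit the $\frac{z}{1+z/2}$ form, and the factor $2\gamma$ is what makes this work. Everything after that is mechanical. The fact that the losses $\ell_t$ are fixed in advance is used only to keep $\ell_t(a)$ out of the randomness of the conditioning.
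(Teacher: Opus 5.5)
Your proof is correct and is essentially the original argument of \cite{neu2015explore}, which the paper cites without reproducing: the pointwise bound $\hat{\ell}_t(a)\le\frac{1}{2\gamma}\log\bigl(1+2\gamma\ell_t(a)\ind\{a_t=a\}/q_t(a)\bigr)$, the concavity step $\alpha\log(1+z)\le\log(1+\alpha z)$, the collapse of the product because only one indicator is nonzero, and the supermartingale-plus-Markov finish all match. Your exponent weights $2\gamma\alpha_t(a)$ play the role of Neu's weights bounded by $2\gamma$. The one thing to make explicit is that pulling $Z_{t-1}$ out of $\E[\,\cdot\mid\calF_{t-1}]$ requires $a_{t-1}$, and hence $Z_{t-1}$, to be $\calF_{t-1}$-measurable; the statement as written omits this adaptedness hypothesis, but it holds for the history filtration in the paper's applications, and without it the claim fails (e.g.\ under a trivial filtration with all $a_t$ equal to one fair coin flip).
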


\section{Mirror Descent} \label{app: broad}
The regret bound analysis for online mirror descent in this section is rather standard. We provide detailed proofs for completeness. 
\begin{lemma}[Lemma~20 of \cite{dann2023best}]\label{lem: dann1}
    Let $\psi(\theta)=\sum_{a=1}^A \theta(a)\ln\theta(a)$ and let $\ell_t\in\mathbb{R}^A$ be such that $\eta\ell_t(a)\geq -1$. Then for any convex subset $\Theta\subseteq \Delta_A$, 
    \begin{align*}
        \max_{\theta\in\Theta} \left\{ \langle \theta_t - \theta, \ell_t\rangle - \frac{1}{\eta}D_{\psi}(\theta, \theta_t)\right\} \leq \eta \sum_{a=1}^A \theta_t(a)\ell_t(a)^2. 
    \end{align*}
\end{lemma}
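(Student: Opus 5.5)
The plan is to relax the constraint set $\Theta$ to the whole positive orthant, where the maximization separates across coordinates and has a closed form. Because $\psi$ is the negative entropy, I work with its natural extension $\psi(\theta)=\sum_a \theta(a)\ln\theta(a)$ on $\mathbb{R}_{\ge 0}^A$. Its Bregman divergence is the unnormalized KL divergence
\[
D_\psi(\theta,\theta_t)=\sum_{a=1}^A\Bigl(\theta(a)\ln\tfrac{\theta(a)}{\theta_t(a)}-\theta(a)+\theta_t(a)\Bigr).
\]
For $\theta,\theta_t\in\Delta_A$ the linear terms cancel, so this agrees with the divergence in the statement. Since $\Theta\subseteq\Delta_A\subseteq\mathbb{R}_{\ge0}^A$, the maximum over $\Theta$ is at most the maximum over $\mathbb{R}_{\ge 0}^A$. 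Over the orthant the objective is a sum of one-dimensional terms:
\[
\sum_{a=1}^A\Bigl[(\theta_t(a)-x_a)\ell_t(a)-\tfrac{1}{\eta}\bigl(x_a\ln\tfrac{x_a}{\theta_t(a)}-x_a+\theta_t(a)\bigr)\Bigr].
\]
It therefore suffices to bound each coordinate's maximum over $x_a\ge 0$ by $\eta\,\theta_t(a)\ell_t(a)^2$.

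For a single coordinate, write $\theta_t=\theta_t(a)$ and $\ell=\ell_t(a)$. If $\theta_t=0$, the divergence is $+\infty$ unless $x=0$, so the contribution is $0$ and the bound holds. Otherwise the function of $x$ is strictly concave. Setting its derivative $-\ell-\frac1\eta\ln(x/\theta_t)$ to zero gives the unique maximizer $x^\star=\theta_t e^{-\eta\ell}$. Substituting back, using $x^\star\ln(x^\star/\theta_t)=-\eta\ell\,x^\star$, yields
\[
\theta_t\ell-x^\star\ell+x^\star\ell+\tfrac{1}{\eta}(x^\star-\theta_t)=\tfrac{\theta_t}{\eta}\bigl(\eta\ell-1+e^{-\eta\ell}\bigr).
\]

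It remains to apply the elementary inequality $e^{-z}\le 1-z+z^2$ for $z\ge -1$, with $z=\eta\ell$; the hypothesis $\eta\ell_t(a)\ge-1$ is exactly what permits this. The coordinate value is then at most $\frac{\theta_t}{\eta}(\eta\ell)^2=\eta\theta_t\ell^2$, and summing over $a$ gives the claim. The only step needing care is verifying this scalar inequality on $[-1,\infty)$. Let $g(z)=1-z+z^2-e^{-z}$, so $g(0)=0$ and $g'(z)=-1+2z+e^{-z}$, with $g'(0)=0$. Since $g''(z)=2-e^{-z}$, the derivative $g'$ is decreasing on $[-1,-\ln 2]$ and increasing on $[-\ln 2,\infty)$. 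On $[-\ln 2,\infty)$ this gives $g'\le 0$ on $[-\ln 2,0]$ and $g'\ge 0$ on $[0,\infty)$, so $g\ge g(0)=0$ there. Also $g'(-1)=e-3<0$, and since $g'$ is decreasing on $[-1,-\ln 2]$, it stays negative there. Hence $g$ is decreasing on all of $[-1,0]$, so $g\ge g(0)=0$ on that interval too.
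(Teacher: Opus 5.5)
Your proof is correct. It follows the standard argument behind the cited Lemma~20 of \cite{dann2023best}, which the paper imports without proof: relax to the nonnegative orthant, where $D_\psi$ is the unnormalized KL, maximize each coordinate in closed form at $\theta_t(a)e^{-\eta\ell_t(a)}$, and finish with $e^{-z}\le 1-z+z^2$ for $z\ge -1$, an inequality you also verified carefully on $[-1,0]$.
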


\begin{lemma}[Lemma~21 of \cite{dann2023best}]\label{lem: dann2}
    Let $\psi(\theta)=\sum_{a=1}^A \ln\frac{1}{\theta(a)}   $ and let $\ell_t\in\mathbb{R}^A$ be such that $\eta\theta_t(a)\ell_t(a)\geq -\frac{1}{2}$. Then for any convex subset $\Theta\subseteq \Delta_A$, 
    \begin{align*}
        \max_{\theta\in\Theta} \left\{ \langle \theta_t - \theta, \ell_t\rangle - \frac{1}{\eta}D_{\psi}(\theta, \theta_t)\right\} \leq \eta \sum_{a=1}^A \theta_t(a)^2\ell_t(a)^2. 
    \end{align*}
\end{lemma}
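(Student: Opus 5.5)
The plan is to drop the constraint $\theta\in\Theta$ and enlarge the feasible set to the positive orthant $\mathbb{R}^A_{>0}$. This can only increase the maximum, and $\psi$, hence $D_\psi$, is defined there. The payoff is that the objective becomes fully separable across coordinates, so the multivariate problem reduces to one-dimensional ones that we solve exactly.

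First, write out the Bregman divergence of $\psi(\theta)=\sum_a \ln\frac{1}{\theta(a)}$:
\begin{align*}
D_\psi(\theta,\theta_t)=\sum_{a=1}^A\Big(\frac{\theta(a)}{\theta_t(a)}-1-\ln\frac{\theta(a)}{\theta_t(a)}\Big).
\end{align*}
So the objective equals $\sum_a f_a(\theta(a))$, where
\begin{align*}
f_a(x)=(\theta_t(a)-x)\ell_t(a)-\tfrac{1}{\eta}\Big(\tfrac{x}{\theta_t(a)}-1-\ln\tfrac{x}{\theta_t(a)}\Big).
\end{align*}
Next, for a fixed $a$, substitute $r=x/\theta_t(a)>0$ and $c=\eta\theta_t(a)\ell_t(a)\ge -\tfrac12$. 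This gives $f_a=\frac{1}{\eta}\big[c(1-r)-(r-1-\ln r)\big]$, which is strictly concave in $r$. Setting the derivative to zero gives $-c-1+1/r=0$, i.e.\ $r^\star=1/(1+c)$. This is admissible because $c>-1$. Plugging back in, the maximal value is
\begin{align*}
\frac{1}{\eta}\Big[\frac{c^2}{1+c}-\frac{1}{1+c}+1-\ln(1+c)\Big]=\frac{1}{\eta}\big(c-\ln(1+c)\big).
\end{align*}

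The key remaining step, and the only place the hypothesis $\eta\theta_t(a)\ell_t(a)\ge-\frac12$ is used, is the elementary inequality $c-\ln(1+c)\le c^2$ for $c\ge -\tfrac12$. To prove it, set $g(c)=\ln(1+c)-c+c^2$. Then $g(0)=0$ and $g'(c)=\frac{c(1+2c)}{1+c}$. This derivative is nonpositive on $[-\frac12,0]$ and nonnegative on $[0,\infty)$, so $g\ge0$ on $[-\tfrac12,\infty)$.

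Summing over coordinates then gives
\begin{align*}
\max_{\theta\in\Theta}\Big\{\langle\theta_t-\theta,\ell_t\rangle-\tfrac1\eta D_\psi(\theta,\theta_t)\Big\}\le\sum_{a}\frac{c_a^2}{\eta}=\eta\sum_a\theta_t(a)^2\ell_t(a)^2,
\end{align*}
which is the claim. No real obstacle is expected: the only care needed is justifying the relaxation (the constrained maximum over $\Theta$ is at most the unconstrained one over the orthant) and checking the scalar inequality at the boundary $c=-\tfrac12$.
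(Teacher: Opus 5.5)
The paper gives no proof of this lemma and simply imports it from \cite{dann2023best}; your argument (relax $\Theta$ to the open positive orthant, where points of $\Theta$ with a zero coordinate are harmless because $D_\psi(\theta,\theta_t)=+\infty$ there, then maximize each coordinate in closed form at $\theta(a)=\theta_t(a)/(1+\eta\theta_t(a)\ell_t(a))$ to get $\frac{1}{\eta}\bigl(c_a-\ln(1+c_a)\bigr)$, and finish with $c-\ln(1+c)\le c^2$ for $c\ge-\frac12$) is the standard proof of this log-barrier stability bound. It is correct as written: your optimal-value computation, your derivative $g'(c)=\frac{c(1+2c)}{1+c}$, and your use of the hypothesis (which also gives $c>-1$, so the interior maximizer exists) all check out.
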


\subsection{Warm-up Case (for \pref{alg: restricted})} 
\begin{lemma}\label{lem: log-bar omd}
    Let $\Theta\subseteq \Delta_A$ be convex. Suppose $\eta \theta_{t}(a)|\hat{v}_{t}(a)|\leq \frac{1}{6}$ for all $t, a$, and let $\theta_t\hat{v}_t^2$ denote the vector $(\theta_{t}(1)\hat{v}_{t}(1)^2, \ldots, \theta_{t}(A)\hat{v}_{t}(A)^2)$. Then the online mirror descent update
    \begin{align*}
        \theta_{t+1} = \argmin_{\theta\in \Theta}\left\{ \inner{\theta, -\hat{v}_t + 2\eta\theta_t\hat{v}_t^2} + \frac{D_\psi(\theta, \theta_t)}{\eta}\right\}
    \end{align*}
    with log-barrier regularizer $\psi(\theta)=\sum_{a=1}^A \ln\frac{1}{\theta(a)}$ ensures  
    \begin{align*}
        \inner{\theta, \hat{v}_{t}} \leq \frac{D_\psi(\theta, \theta_t) - D_\psi(\theta, \theta_{t+1})}{\eta}  + 2\eta \inner{\theta, \theta_t \hat{v}_{t}^2} + \inner{\theta_{t}, \hat{v}_t} 
    \end{align*}
    for any $\theta\in\Theta$. 
\end{lemma}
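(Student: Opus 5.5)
The plan is the standard two-step mirror-descent argument: a one-step inequality with the corrected loss $\tilde\ell_t \ldef -\hat v_t + 2\eta\theta_t\hat v_t^2$, followed by a stability bound from \pref{lem: dann2}. From the first-order optimality condition of the update, for any $\theta\in\Theta$ we get $\inner{\theta_{t+1}-\theta, \tilde\ell_t} \le \frac{1}{\eta}\big(D_\psi(\theta,\theta_t) - D_\psi(\theta,\theta_{t+1}) - D_\psi(\theta_{t+1},\theta_t)\big)$ via the three-point identity. Adding $\inner{\theta_t-\theta_{t+1},\tilde\ell_t}$ to both sides gives
\begin{align*}
\inner{\theta_t-\theta,\tilde\ell_t} \le \frac{D_\psi(\theta,\theta_t)-D_\psi(\theta,\theta_{t+1})}{\eta} + \Big(\inner{\theta_t-\theta_{t+1},\tilde\ell_t} - \frac{1}{\eta}D_\psi(\theta_{t+1},\theta_t)\Big).
\end{align*}
The bracket is at most $\max_{\theta'\in\Theta}\{\inner{\theta_t-\theta',\tilde\ell_t} - \frac1\eta D_\psi(\theta',\theta_t)\}$.

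Next we bound this maximum by \pref{lem: dann2}. Its condition is $\eta\theta_t(a)\tilde\ell_t(a)\ge -\frac12$. Since $\eta\theta_t(a)|\hat v_t(a)|\le \frac16$, we have $\eta\theta_t(a)|\tilde\ell_t(a)| \le \frac16 + 2(\eta\theta_t(a)\hat v_t(a))^2 \le \frac16+\frac{2}{36} < \frac12$, so the condition holds. The lemma then gives the bound $\eta\sum_a\theta_t(a)^2\tilde\ell_t(a)^2$. Using $|\tilde\ell_t(a)| \le |\hat v_t(a)|(1+2\eta\theta_t(a)|\hat v_t(a)|) \le \frac43|\hat v_t(a)|$, this is at most $\frac{16}{9}\eta\sum_a\theta_t(a)^2\hat v_t(a)^2 \le 2\eta\inner{\theta_t,\theta_t\hat v_t^2}$.

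Finally, we expand the left-hand side: $\inner{\theta_t-\theta,\tilde\ell_t} = \inner{\theta,\hat v_t} - \inner{\theta_t,\hat v_t} + 2\eta\inner{\theta_t,\theta_t\hat v_t^2} - 2\eta\inner{\theta,\theta_t\hat v_t^2}$. The $+2\eta\inner{\theta_t,\theta_t\hat v_t^2}$ term cancels against the stability bound, since the correction term was designed exactly for this. Rearranging yields the claimed inequality:
\begin{align*}
\inner{\theta,\hat v_t} \le \frac{D_\psi(\theta,\theta_t)-D_\psi(\theta,\theta_{t+1})}{\eta} + 2\eta\inner{\theta,\theta_t\hat v_t^2} + \inner{\theta_t,\hat v_t}.
\end{align*}

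The only delicate step is checking the constants: the lower-bound condition of \pref{lem: dann2} must hold for the corrected loss, and the factor $(4/3)^2\le 2$ is what lets the stability term be absorbed exactly by the correction. Both follow from the assumption $\eta\theta_t(a)|\hat v_t(a)|\le\frac16$.
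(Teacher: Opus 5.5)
Your proposal is correct and follows the paper's proof: the optimality/three-point inequality for $\theta_{t+1}$, then the stability bound from \pref{lem: dann2} applied to the corrected loss after checking $\eta\theta_t(a)|\tilde{\ell}_t(a)|\le \frac16+2(\frac16)^2<\frac12$, the estimate $(\frac43)^2\le 2$, and cancellation of $2\eta\inner{\theta_t,\theta_t\hat{v}_t^2}$. The only difference is cosmetic: you spell out the first-order condition and three-point identity, which the paper compresses into ``by the optimality of $\theta_{t+1}$.''
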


\begin{proof}
    By the optimality of $\theta_{t+1}$, we have 
    \begin{align*}
         \inner{\theta_{t+1}, -\hat{v}_t + 2\eta \theta_t\hat{v}_t^2} + \frac{D_\psi(\theta_{t+1}, \theta_t)}{\eta} \leq \inner{\theta, -\hat{v}_t + 2\eta \theta_t\hat{v}_t^2} + \frac{D_\psi(\theta, \theta_t)}{\eta} - \frac{D_\psi(\theta, \theta_{t+1})}{\eta}. 
    \end{align*}
    Rearranging this gives 
    \begin{align}
         \inner{\theta, \hat{v}_t}
         &\leq \frac{D_\psi(\theta, \theta_t) - D_\psi(\theta, \theta_{t+1})}{\eta} + 2\eta \inner{\theta, \theta_t \hat{v}_{t}^2}  + \inner{\theta_t, \hat{v}_t} \nonumber \\
         &\qquad + \inner{\theta_{t} - \theta_{t+1}, - \hat{v}_t + 2\eta \theta_t\hat{v}_t^2} - \frac{D_\psi(\theta_{t+1}, \theta_t)}{\eta} - 2\eta \inner{\theta_t, \theta_t\hat{v}_t^2}.    \label{eq: omd tmp11}
    \end{align}
    By the assumption, it holds that
    \begin{align*}
        \eta \theta_{t}(a)\left| -\hat{v}_{t}(a) + 2\eta \theta_{t}(a)\hat{v}_{t}(a)^2\right| \leq \frac{1}{6} + 2\left(\frac{1}{6}\right)^2 < \frac{1}{2}.  
    \end{align*}
    By \pref{lem: dann2}, we get 
    \begin{align*}
        \inner{\theta_{t} - \theta_{t+1}, - \hat{v}_t + 2\eta \theta_t\hat{v}_t^2} - \frac{D_\psi(\theta_{t+1}, \theta_t)}{\eta} 
        &\leq \eta \sum_{a=1}^A \theta_{t}(a)^2 (-\hat{v}_{t}(a)+2\eta \theta_{t}(a)\hat{v}_{t}(a)^2)^2 \\
        &\leq \eta \sum_{a=1}^A \theta_{t}(a)^2 \left(\frac{4}{3}\hat{v}_{t}(a)\right)^2 \leq 2\eta \sum_{a=1}^A \theta_{t}(a)^2 \hat{v}_{t}(a)^2. 
    \end{align*}
    Combining this with \pref{eq: omd tmp11} finishes the proof. 
\end{proof}

\begin{lemma}
    \label{lem: log-bar omd smooth}
    Under the same conditions and the same update rules specified in \pref{lem: log-bar omd}, it holds that 
    \begin{align*}
        \sum_{t=1}^T \hat{v}_{t}(a) \leq 
         \frac{A}{\eta} \ln \frac{1}{\alpha}   + 2\eta \sum_{t=1}^T \theta_{t}(a) \hat{v}_{t}(a)^2 + \sum_{t=1}^T \inner{\theta_{t}, \hat{v}_t} +  \alpha\sum_{t=1}^T \left\langle e_a - \frac{\mathbf{1}}{A}, \hat{v}_t - 2\eta\theta_t \hat{v}_t^2 \right\rangle. 
    \end{align*}
    for any $a\in [A]$, and $\alpha\in(0,1]$. 
\end{lemma}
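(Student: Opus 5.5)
The natural route is to sum the one-step inequality of \pref{lem: log-bar omd} over $t$ with a comparator that is a mixture of the vertex $e_a$ and the uniform point, specifically $\theta^\star = (1-\alpha)e_a + \alpha\frac{\mathbf{1}}{A}$. This comparator lies in the interior of the simplex, so its log-barrier divergences are finite. I will assume, as in the intended applications, that $\theta^\star\in\Theta$ (for $\Theta=\Delta_A$ this is automatic).

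\textbf{Step 1: sum and telescope.} Apply \pref{lem: log-bar omd} with $\theta=\theta^\star$ for $t=1,\ldots,T$ and sum. The Bregman terms telescope to $\frac{1}{\eta}\big(D_\psi(\theta^\star,\theta_1)-D_\psi(\theta^\star,\theta_{T+1})\big)\le \frac{1}{\eta}D_\psi(\theta^\star,\theta_1)$, since $D_\psi\ge 0$. With $\theta_1$ the uniform point (the minimizer of $\psi$ over the simplex, as in the initialization), we have $\nabla\psi(\theta_1)$ proportional to $\mathbf{1}$. Hence $D_\psi(\theta^\star,\theta_1)=\psi(\theta^\star)-\psi(\theta_1)=\sum_{b}\ln\frac{\theta_1(b)}{\theta^\star(b)}$. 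Each coordinate satisfies $\theta^\star(b)\ge \alpha/A = \alpha\,\theta_1(b)$, so this is at most $A\ln\frac1\alpha$. This yields the $\frac{A}{\eta}\ln\frac1\alpha$ term.

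\textbf{Step 2: rewrite the left side and the second-order term.} The summed inequality reads
\[
\sum_t\inner{\theta^\star,\hat v_t}\le \frac{A}{\eta}\ln\frac1\alpha + 2\eta\sum_t\inner{\theta^\star,\theta_t\hat v_t^2}+\sum_t\inner{\theta_t,\hat v_t}.
\]
Write $\theta^\star = e_a - \alpha\big(e_a-\frac{\mathbf 1}{A}\big)$ and substitute into both $\inner{\theta^\star,\hat v_t}$ and $\inner{\theta^\star,\theta_t\hat v_t^2}$. Moving the $\alpha$-terms to the right gives
\[
\sum_t \hat v_t(a) \le \frac{A}{\eta}\ln\frac1\alpha + 2\eta\sum_t\theta_t(a)\hat v_t(a)^2+\sum_t\inner{\theta_t,\hat v_t} + \alpha\sum_t\Big\langle e_a-\frac{\mathbf 1}{A},\ \hat v_t-2\eta\theta_t\hat v_t^2\Big\rangle,
\]
which is exactly the claim. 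Here $2\eta\inner{e_a,\theta_t\hat v_t^2}=2\eta\theta_t(a)\hat v_t(a)^2$.

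\textbf{Main obstacle.} There is essentially no analytic difficulty, because the step-size condition $\eta\theta_t(a)|\hat v_t(a)|\le\frac16$ is already absorbed into \pref{lem: log-bar omd}. The one point requiring care is the bookkeeping in Step 1: verifying that $D_\psi(\theta^\star,\theta_1)\le A\ln\frac1\alpha$. This rests on two facts. First, the linear term of the divergence vanishes, since $\inner{\nabla\psi(\theta_1),\theta^\star-\theta_1}=0$ when $\nabla\psi(\theta_1)\propto\mathbf 1$ and both points sum to one. Second, $\theta^\star\in\Theta$ must hold so that the lemma applies. If $\Theta$ is a strict subset not containing $\theta^\star$, I would instead take $\theta^\star=(1-\alpha)e_a+\alpha\theta_1$, which stays in $\Theta$ by convexity provided $e_a\in\Theta$. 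In that case the identical computation goes through, with $\frac{\mathbf 1}{A}$ replaced by $\theta_1$.
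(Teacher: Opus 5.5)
Your proposal is correct and follows the paper's proof essentially verbatim. Both sum the one-step bound of \pref{lem: log-bar omd} with comparator $(1-\alpha)e_a+\frac{\alpha}{A}\mathbf{1}$, telescope, use $\theta_1=\frac{\mathbf{1}}{A}$ to get $D_\psi(\theta^\star,\theta_1)\le A\ln\frac{1}{\alpha}$, and expand $\theta^\star=e_a-\alpha(e_a-\frac{\mathbf{1}}{A})$; you additionally spell out why the linear term of the divergence vanishes and note that $\theta^\star\in\Theta$ is required, which the paper leaves implicit.
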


\begin{proof}
    Summing the bound in \pref{lem: log-bar omd} for $\theta'$ and $t\in [T]$, we get 
     \begin{align*}
        \sum_{t=1}^T \inner{\theta', \hat{v}_{t}} \leq \frac{D_\psi(\theta', \theta_{1})}{\eta}  + 2\eta \sum_{t=1}^T \inner{\theta', \theta_t \hat{v}_{t}^2} + \sum_{t=1}^T \inner{\theta_{t}, \hat{v}_t} .
    \end{align*}
    Choosing
    \begin{align*}
         \theta' = (1-\alpha)\e_a + \frac{\alpha}{A}\mathbf{1}, 
    \end{align*}
    the above becomes 
    \begin{align*}
        \sum_{t=1}^T \hat{v}_{t}(a) 
        &\leq \frac{D_\psi(\theta', \theta_{1})}{\eta}  + 2\eta \sum_{t=1}^T \theta_{t}(a) \hat{v}_{t}(a)^2 + \sum_{t=1}^T \inner{\theta_{t}, \hat{v}_t} +  \alpha\sum_{t=1}^T \left\langle e_a - \frac{\mathbf{1}}{A}, \hat{v}_t - 2\eta\theta_t \hat{v}_t^2 \right\rangle. 
    \end{align*}
    To complete the proof, notice that $\theta_1 = \frac{\mathbf{1}}{A}$ and 
    \begin{align*}
        D_\psi\left((1-\alpha)e_a + \frac{\alpha \mathbf{1}}{A}, \frac{\mathbf{1}}{A}\right) \leq A\ln \frac{1}{\alpha}. 
    \end{align*}
\end{proof}

\subsection{General Case (for \pref{alg: general})}
In this subsection, we consider the hybrid regularizer 
\begin{align}
        \psi(\theta) =  \sum_{a=1}^{A+1} \theta(a) \ln \theta(a) + \ln\frac{1}{\theta(A+1)} + \ln\frac{1}{1-\theta(A+1)}   \label{eq: hybrid}
\end{align}
over the convex feasible set $\Theta \subseteq \Delta_{A+1}$. 
\begin{lemma}[Hybrid Regularizer]\label{lem: hybrid stabiltiy}
    Let $\psi$ be defined as \pref{eq: hybrid}. 
    Then for any $\ell_t\in\mathbb{R}^{A+1}$ satisfying 
    \begin{align*}
        \eta\ell_t(a)\geq -1 \ \ \ \forall a\in [A]\qquad  \text{and}\qquad \eta \theta(A+1)\ell_t(A+1) \geq -\frac{1}{2}, 
    \end{align*}
    it holds that 
    \begin{align*}
         \max_{\theta\in \Theta} \left\{ 
 \left\langle  \theta_t - \theta, \ell_t  \right\rangle  - \frac{1}{\eta} D_\psi(\theta, \theta_t)\right\} \leq \eta \left(\sum_{a=1}^A  \theta_t(a) \ell_t(a)^2 + \theta_t(A+1)^2\ell_t(A+1)^2\right). 
    \end{align*}
\end{lemma}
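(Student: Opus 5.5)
The plan is to split the regularizer into the entropy part over all $A+1$ coordinates and the two log-barrier terms on $\theta(A+1)$ and $1-\theta(A+1)$. The Bregman divergence is additive across these pieces, and each piece is convex. Write $\psi=\psi_E+\psi_B$ with $\psi_E(\theta)=\sum_{a=1}^{A+1}\theta(a)\ln\theta(a)$ and $\psi_B(\theta)=\ln\frac{1}{\theta(A+1)}+\ln\frac{1}{1-\theta(A+1)}$. Then $D_\psi = D_{\psi_E}+D_{\psi_B}\geq D_{\psi_E}+D_{\phi}$, where $\phi(\theta)=\ln\frac1{\theta(A+1)}$, because $\ln\frac{1}{1-\theta(A+1)}$ is convex and its Bregman divergence is nonnegative. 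Decompose $\ell_t = \ell_t^{(1)}+\ell_t^{(2)}$, where $\ell_t^{(1)}$ keeps the first $A$ coordinates and sets coordinate $A+1$ to zero, and $\ell_t^{(2)}=\ell_t(A+1)e_{A+1}$. The objective is then bounded by
\begin{align*}
\max_{\theta\in\Theta}\Big\{\langle\theta_t-\theta,\ell_t^{(1)}\rangle-\tfrac1\eta D_{\psi_E}(\theta,\theta_t)\Big\}+\max_{\theta\in\Theta}\Big\{\langle\theta_t-\theta,\ell_t^{(2)}\rangle-\tfrac1\eta D_{\phi}(\theta,\theta_t)\Big\}.
\end{align*}

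For the first term I will invoke \pref{lem: dann1} on $\Delta_{A+1}$ with loss vector $\ell_t^{(1)}$. Its hypothesis $\eta\ell^{(1)}_t(a)\geq -1$ holds for $a\in[A]$ by assumption and trivially for coordinate $A+1$, where the value is $0$. This yields $\eta\sum_{a=1}^A\theta_t(a)\ell_t(a)^2$.

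For the second term, only coordinate $A+1$ matters, so it reduces to a one-dimensional problem in $x=\theta(A+1)$. With $y=\theta_t(A+1)$ and $g=\ell_t(A+1)$, the quantity is $\max_x\{(y-x)g-\frac1\eta(\frac{x}{y}-1-\ln\frac{x}{y})\}$. I expect this step to need the most care. I would either embed it in \pref{lem: dann2}, using a log-barrier on $\Delta$ restricted to one coordinate, or redo its short computation directly. The direct route maximizes over $x>0$ with $r=x/y$: the expression becomes $yg(1-r)-\frac1\eta(r-1-\ln r)$. Setting $z=\eta y g\geq -\frac12$, the unconstrained maximum over $r>0$ is $\frac1\eta(z-\ln(1+z))\leq \frac{z^2}{\eta}$ for $z\geq-\frac12$, which gives $\eta y^2g^2$. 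Restricting $x$ to the feasible set $\Theta$ can only lower the maximum.

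Summing the two bounds gives the claim. The one subtlety is that the hypothesis is stated with $\theta(A+1)$. I read it as $\theta_t(A+1)$, which is exactly what the scalar computation uses, so I would note that convention.
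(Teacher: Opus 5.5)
Your proof is correct and follows the paper's route: you split $\psi$ into the entropy and the log-barrier on coordinate $A+1$, split $\ell_t$ the same way, bound the max of a sum by the sum of maxes, and use \pref{lem: dann1} for the entropy part. The one difference is in the barrier part: the paper just cites \pref{lem: dann2}, implicitly applying it to the pair $(1-\theta(A+1),\theta(A+1))$ with loss $(0,\ell_t(A+1))$, whereas you drop $\ln\frac{1}{1-\theta(A+1)}$ using nonnegativity of its Bregman divergence and carry out the one-dimensional maximization explicitly, which is self-contained and checks out (also, reading the hypothesis as $\eta\theta_t(A+1)\ell_t(A+1)\geq-\frac12$ is the intended meaning, consistent with how the lemma is used in \pref{lem: lem without correction} and \pref{lem: lem with correction}).
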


\begin{proof}
    Define 
    \begin{align*}
    \psi_\NE(\theta) &= \sum_{a=1}^{A+1} \theta(a) \ln\theta(a), \\
        \psi_\LB(\theta) &= \log\frac{1}{\theta(A+1)} + \log\frac{1}{1-\theta(A+1)}. 
    \end{align*}
    Then we have 
    \begin{align*}
        &\max_{\theta\in \Theta} \left\{ 
 \left\langle  \theta_t - \theta, \ell_t  \right\rangle  - \frac{1}{\eta} D_\psi(\theta, \theta_t)\right\} \\
 &= \max_{\theta\in \Theta} \left\{ 
  \left\langle  \theta_t - \theta, \begin{bmatrix} \ell_t(1)\\ \vdots \\ \ell_t(A) \\ 0 \end{bmatrix}  \right\rangle + \left\langle  \theta_t - \theta, \begin{bmatrix} 0\\ \vdots \\ 0 \\ \ell_t(A+1) \end{bmatrix}  \right\rangle  - \frac{1}{\eta} D_{\psi_\NE}(\theta, \theta_t) - \frac{1}{\eta} D_{\psi_\LB}(\theta, \theta_t) \right\} \\
 &\leq \max_{\theta\in \Theta} \left\{ 
 \left\langle  \theta_t - \theta, \begin{bmatrix} \ell_t(1)\\ \vdots \\ \ell_t(A) \\ 0 \end{bmatrix}  \right\rangle  - \frac{1}{\eta} D_{\psi_\NE}(\theta, \theta_t) \right\} +  \max_{\theta\in \Theta} \left\{ 
 \left\langle  \theta_t - \theta, \begin{bmatrix} 0\\ \vdots \\ 0 \\ \ell_t(A+1) \end{bmatrix}  \right\rangle - \frac{1}{\eta} D_{\psi_\LB}(\theta, \theta_t)  \right\}\\
 &\leq \eta \sum_{a=1}^A \theta_t(a)\ell_t(a)^2 + \eta \theta_t(A+1)^2 \ell_t(A+1)^2,   
    \end{align*}
    where in the last inequality we use \pref{lem: dann1} and \pref{lem: dann2}. 
\end{proof}

\begin{lemma} \label{lem: lem without correction}
    Let $\Theta\subseteq \Delta_{A+1}$ be convex. Suppose that $\eta \hat{v}_t(a) \leq 1$ for all $a\in[A]$, and $\eta\theta_t(A+1) \hat{v}_t(A+1) \leq \frac{1}{2}$. Then the online mirror descent update
    \begin{align*}
        \theta_{t+1} = \argmin_{\theta\in \Theta}\left\{ \inner{\theta, -\hat{v}_t} + \frac{D_\psi(\theta, \theta_t)}{\eta}\right\}
    \end{align*}
    with hybrid regularizer $\psi$ in \pref{eq: hybrid} ensures for any $\theta\in\Theta$, 
    \begin{align*}
        \inner{\theta, \hat{v}_{t}} \leq \frac{D_\psi(\theta, \theta_t) - D_\psi(\theta, \theta_{t+1})}{\eta}  + \inner{\theta_{t}, \hat{v}_t} + \eta \langle \theta_t, w_t\rangle, 
    \end{align*} 
    where 
    \begin{align*}
       w_t(a) = \begin{cases}
           \hat{v}_t(a)^2,   &\text{if}\ a\in[A], \\
           \theta_t(a)\hat{v}_t(a)^2,  &\text{if}\ a=A+1.
       \end{cases}
    \end{align*}
\end{lemma}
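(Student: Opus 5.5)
The plan mirrors the proof of \pref{lem: log-bar omd}, replacing \pref{lem: dann2} by the hybrid stability bound \pref{lem: hybrid stabiltiy}; since there is no second-order correction, it is in fact simpler.

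\emph{Optimality inequality.} Since $\theta_{t+1}$ minimizes the convex objective $\inner{\theta,-\hat v_t}+\frac{1}{\eta}D_\psi(\theta,\theta_t)$ over the convex set $\Theta$, the first-order optimality condition combined with the three-point identity for Bregman divergences gives, for every $\theta\in\Theta$,
\begin{align*}
\inner{\theta_{t+1},-\hat v_t}+\frac{D_\psi(\theta_{t+1},\theta_t)}{\eta}\leq \inner{\theta,-\hat v_t}+\frac{D_\psi(\theta,\theta_t)-D_\psi(\theta,\theta_{t+1})}{\eta}.
\end{align*}

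\emph{Rearranging.} Adding and subtracting $\inner{\theta_t,\hat v_t}$ yields
\begin{align*}
\inner{\theta,\hat v_t}\leq \frac{D_\psi(\theta,\theta_t)-D_\psi(\theta,\theta_{t+1})}{\eta}+\inner{\theta_t,\hat v_t}+\Big[\inner{\theta_t-\theta_{t+1},-\hat v_t}-\frac{1}{\eta}D_\psi(\theta_{t+1},\theta_t)\Big].
\end{align*}
It then suffices to bound the bracketed stability term by $\eta\inner{\theta_t,w_t}$.

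\emph{Stability term.} Because $\theta_{t+1}\in\Theta$, the bracket is at most $\max_{\theta\in\Theta}\{\inner{\theta_t-\theta,\ell_t}-\frac1\eta D_\psi(\theta,\theta_t)\}$ with $\ell_t=-\hat v_t$. We apply \pref{lem: hybrid stabiltiy} to this $\ell_t$, which requires two conditions.
\begin{itemize}
\item For $a\in[A]$ we need $\eta\ell_t(a)\geq -1$. This is exactly the hypothesis $\eta\hat v_t(a)\leq 1$.
\item For coordinate $A+1$ we need $\eta\theta(A+1)\ell_t(A+1)\geq-\frac12$. This corresponds to the hypothesis $\eta\theta_t(A+1)\hat v_t(A+1)\leq\frac12$.
\end{itemize}
The lemma then gives the bound $\eta\big(\sum_{a\in[A]}\theta_t(a)\hat v_t(a)^2+\theta_t(A+1)^2\hat v_t(A+1)^2\big)=\eta\inner{\theta_t,w_t}$, which completes the proof.

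\emph{Main obstacle.} The only delicate point is the second condition. \pref{lem: hybrid stabiltiy} writes it in terms of $\theta$, apparently meaning the reference point $\theta_t$, as in \pref{lem: dann2}. If one checks its proof, the log-barrier part on coordinate $A+1$ only requires the condition at $\theta_t$, since $D_{\psi_\LB}$ is expanded around $\theta_t$. I would state this reading explicitly. A second, minor point: the log-barrier also contains the $\ln\frac{1}{1-\theta(A+1)}$ term, whose local norm behaves like $\frac{1}{(1-\theta(A+1))^2}$ in the same coordinate direction. This only adds curvature and therefore only helps the stability bound, so the argument already in \pref{lem: hybrid stabiltiy} covers it.
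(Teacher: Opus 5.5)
Your proposal is correct and matches the paper's proof step for step: the three-point optimality inequality for $\theta_{t+1}$, then the rearrangement isolating the stability term $\inner{\theta_t-\theta_{t+1},-\hat v_t}-\frac{1}{\eta}D_\psi(\theta_{t+1},\theta_t)$, then \pref{lem: hybrid stabiltiy} applied with $\ell_t=-\hat v_t$. Your side remarks are also sound: the $\theta(A+1)$ in that lemma's second condition should indeed be $\theta_t(A+1)$, and the extra $\ln\frac{1}{1-\theta(A+1)}$ term can only help, since adding a convex term to $\psi$ only enlarges the Bregman divergence being subtracted.
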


\begin{proof}
    By the optimality of $\theta_{t+1}$, we have 
    \begin{align*}
         \inner{\theta_{t+1}, -\hat{v}_t} + \frac{D_\psi(\theta_{t+1}, \theta_t)}{\eta} \leq \inner{\theta, -\hat{v}_t } + \frac{D_\psi(\theta, \theta_t)}{\eta} - \frac{D_\psi(\theta, \theta_{t+1})}{\eta}. 
    \end{align*}
    Rearranging this gives 
    \begin{align}
         \inner{\theta, \hat{v}_t}
         &\leq \frac{D_\psi(\theta, \theta_t) - D_\psi(\theta, \theta_{t+1})}{\eta} + \inner{\theta_t, \hat{v}_t}  + \inner{\theta_{t} - \theta_{t+1}, - \hat{v}_t } - \frac{D_\psi(\theta_{t+1}, \theta_t)}{\eta}.    \label{eq: omd tmp1}
    \end{align}
    By \pref{lem: hybrid stabiltiy} and the conditions specified in the lemma, we can bound 
    \begin{align*}
       \inner{\theta_{t} - \theta_{t+1}, - \hat{v}_t } - \frac{D_\psi(\theta_{t+1}, \theta_t)}{\eta} \leq \eta \left(\sum_{a=1}^A\theta_t(a)\hat{v}_t(a)^2 + \theta_t(A+1)^2\hat{v}_t(A+1)^2\right), 
    \end{align*}
    which, when combined with \pref{eq: omd tmp1}, finishes the proof.  
\end{proof}

\begin{lemma}\label{lem: lem with correction}
    Let $\Theta\subseteq \Delta_{A+1}$ be convex. Suppose that $\eta|\hat{v}_t(a)|\leq \frac{1}{6}$ for all $a\in[A]$, and $\eta\theta_t(A+1)|\hat{v}_t(A+1)|\leq \frac{1}{6}$. Then the online mirror descent update
    \begin{align*}
        \theta_{t+1} = \argmin_{\theta\in \Theta}\left\{ \inner{\theta, -\hat{v}_t + 2\eta w_t} + \frac{D_\psi(\theta, \theta_t)}{\eta}\right\}
    \end{align*}
    with 
    \begin{align*}
       w_t(a) = \begin{cases}
           \hat{v}_t(a)^2,   &\text{if}\ a\in[A], \\
           \theta_t(a)\hat{v}_t(a)^2,  &\text{if}\ a=A+1,
       \end{cases}
    \end{align*}
    and the hybrid regularizer $\psi$ in \pref{eq: hybrid} ensures for any $\theta\in\Theta$, 
    \begin{align*}
        \inner{\theta, \hat{v}_{t}} \leq \frac{D_\psi(\theta, \theta_t) - D_\psi(\theta, \theta_{t+1})}{\eta}  +  \inner{\theta_{t}, \hat{v}_t} + 2\eta \langle \theta, w_t\rangle. 
    \end{align*}
\end{lemma}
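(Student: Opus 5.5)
The plan is to mirror the proof of \pref{lem: log-bar omd}, replacing the log-barrier stability bound by the hybrid one, \pref{lem: hybrid stabiltiy}. Write $g_t = -\hat{v}_t + 2\eta w_t$. The first-order optimality of $\theta_{t+1}$, together with the three-point identity for Bregman divergences, gives for any $\theta\in\Theta$
\begin{align*}
\inner{\theta_{t+1}, g_t} + \frac{D_\psi(\theta_{t+1},\theta_t)}{\eta} \leq \inner{\theta, g_t} + \frac{D_\psi(\theta,\theta_t) - D_\psi(\theta,\theta_{t+1})}{\eta}.
\end{align*}
Rearranging as in \pref{eq: omd tmp11}, we get
\begin{align*}
\inner{\theta,\hat{v}_t} \leq \frac{D_\psi(\theta,\theta_t)-D_\psi(\theta,\theta_{t+1})}{\eta} + 2\eta\inner{\theta,w_t} + \inner{\theta_t,\hat{v}_t} + \Big[\inner{\theta_t-\theta_{t+1}, g_t} - \frac{D_\psi(\theta_{t+1},\theta_t)}{\eta}\Big] - 2\eta\inner{\theta_t, w_t}.
\end{align*}
It then suffices to show that the bracketed stability term is at most $2\eta\inner{\theta_t,w_t}$.

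To bound the bracket, I will apply \pref{lem: hybrid stabiltiy} with $\ell_t = g_t$, so I first check its hypotheses.

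\textbf{Coordinates $a\in[A]$.} Here $\eta g_t(a) = -\eta\hat{v}_t(a) + 2\eta^2\hat{v}_t(a)^2 \geq -\tfrac16$, which is at least $-1$.

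\textbf{Coordinate $A+1$.} We have
\begin{align*}
\eta\theta_t(A+1)|g_t(A+1)| \leq \tfrac16 + 2\big(\eta\theta_t(A+1)|\hat{v}_t(A+1)|\big)^2 \leq \tfrac16 + \tfrac{2}{36} < \tfrac12.
\end{align*}
There is a small point to handle here: \pref{lem: hybrid stabiltiy} states its condition with $\theta(A+1)$ rather than $\theta_t(A+1)$. Its proof uses \pref{lem: dann2}, whose condition is at the current iterate $\theta_t$, so the check above is the right one.

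Applying \pref{lem: hybrid stabiltiy}, the bracket is at most $\eta\big(\sum_{a\in[A]}\theta_t(a)g_t(a)^2 + \theta_t(A+1)^2 g_t(A+1)^2\big)$. For $a\in[A]$,
\begin{align*}
|g_t(a)| \leq |\hat{v}_t(a)|\big(1+2\eta|\hat{v}_t(a)|\big) \leq \tfrac43|\hat{v}_t(a)|,
\end{align*}
so $g_t(a)^2\leq \tfrac{16}{9}\hat{v}_t(a)^2 \leq 2\hat{v}_t(a)^2 = 2w_t(a)$. For coordinate $A+1$, similarly $\theta_t(A+1)|g_t(A+1)|\leq \tfrac43\theta_t(A+1)|\hat{v}_t(A+1)|$. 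Hence
\begin{align*}
\theta_t(A+1)^2g_t(A+1)^2 \leq 2\theta_t(A+1)\cdot\theta_t(A+1)\hat{v}_t(A+1)^2 = 2\theta_t(A+1)w_t(A+1).
\end{align*}
Summing the two cases, the bracket is at most $2\eta\inner{\theta_t,w_t}$, which cancels the $-2\eta\inner{\theta_t,w_t}$ term and yields the claim.

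The main obstacle is keeping the two geometries consistent. The entropy part gives a stability term with weight $\theta_t(a)$ and needs only a lower bound on $\eta g_t(a)$. The log-barrier part on coordinate $A+1$ gives weight $\theta_t(A+1)^2$ and needs the product $\eta\theta_t(A+1)g_t(A+1)$ controlled. This is exactly why $w_t$ carries the extra factor $\theta_t(A+1)$ in the last coordinate, and why the $\tfrac16$ assumptions are stated as they are.
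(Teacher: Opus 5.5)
Your proposal is correct and follows essentially the same route as the paper. Both arguments take the optimality of $\theta_{t+1}$ with the three-point identity, rearrange to isolate the stability bracket, check the hypotheses of \pref{lem: hybrid stabiltiy} via the $\tfrac16 + 2(\tfrac16)^2$ bound, and use $|g_t(a)|\le \tfrac43|\hat{v}_t(a)|$ (and its $\theta_t(A+1)$-weighted analogue) to bound the bracket by $2\eta\langle\theta_t,w_t\rangle$, which then cancels. Two details in your version are the right ones: you read the hybrid lemma's condition at $\theta_t(A+1)$, and you write $-\hat{v}_t+2\eta w_t$ in the bracket, where the paper's displayed rearrangement has $\eta w_t$, a typo.
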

\begin{proof}
    By the optimality of $\theta_{t+1}$, we have 
    \begin{align*}
         \inner{\theta_{t+1}, -\hat{v}_t + 2\eta w_t} + \frac{D_\psi(\theta_{t+1}, \theta_t)}{\eta} \leq \inner{\theta, -\hat{v}_t + 2\eta w_t} + \frac{D_\psi(\theta, \theta_t)}{\eta} - \frac{D_\psi(\theta, \theta_{t+1})}{\eta}. 
    \end{align*}
    Rearranging this gives 
    \begin{align}
         \inner{\theta, \hat{v}_t}
         &\leq \frac{D_\psi(\theta, \theta_t) - D_\psi(\theta, \theta_{t+1})}{\eta} + 2\eta \inner{\theta, w_t}  + \inner{\theta_t, \hat{v}_t} \nonumber \\
         &\qquad + \inner{\theta_{t} - \theta_{t+1}, - \hat{v}_t + \eta w_t} - \frac{D_\psi(\theta_{t+1}, \theta_t)}{\eta} - 2\eta \inner{\theta_t, w_t}.    \label{eq: omd tmp3}
    \end{align}
    By the definition of $w_t(a)$ and the assumptions, it holds that $\forall a\in[A]$, 
    \begin{align*}
        \eta \left| -\hat{v}_{t}(a) + 2\eta w_t(a)\right| 
        &=\eta \left| -\hat{v}_{t}(a) + 2\eta \hat{v}_{t}(a)^2\right| \leq \frac{1}{6} + 2\left(\frac{1}{6}\right)^2 < 1  
    \end{align*}
    and
    \begin{align*}
        &\eta \theta_{t}(A+1)\left| -\hat{v}_{t}(A+1) + 2\eta w_t(A+1)\right| \\
        &=\eta \theta_{t}(A+1)\left| -\hat{v}_{t}(A+1) + 2\eta \theta_{t}(A+1)\hat{v}_{t}(A+1)^2\right| \\
        &\leq \frac{1}{6} + 2\left(\frac{1}{6}\right)^2 < \frac{1}{2}.  
    \end{align*}
    By \pref{lem: hybrid stabiltiy}, we get 
    \begin{align*}
        &\inner{\theta_{t} - \theta_{t+1}, - \hat{v}_t + \eta w_t} - \frac{D_\psi(\theta_{t+1}, \theta_t)}{\eta}  \\
        &\leq \eta \sum_{a=1}^A \theta_{t}(a) (-\hat{v}_{t}(a)+2\eta w_t(a))^2 + \theta_{t}(A+1)^2 (-\hat{v}_{t}(A+1)+2\eta w_t(A+1))^2 \\
        &= \eta \sum_{a=1}^A \theta_{t}(a) (-\hat{v}_{t}(a)+2\eta \hat{v}_t(a)^2)^2 + \theta_{t}(A+1)^2 (-\hat{v}_{t}(A+1)+2\eta \theta_t(A+1)\hat{v}_t(A+1)^2)^2 \\ 
        &\leq \eta \sum_{a=1}^A \theta_{t}(a) \left(\frac{4}{3}\hat{v}_{t}(a)\right)^2  + \eta \theta_{t}(A+1)^2 \left(\frac{4}{3}\hat{v}_{t}(A+1)\right)^2 \tag{by the assumption $\eta|\hat{v}_t(a)|\leq \frac{1}{6}$ for $a\in[A]$ and $\eta \theta_t(A+1)|\hat{v}_t(A+1)|\leq \frac{1}{6}$ }\\
        &\leq 2\eta \left(\sum_{a=1}^A \theta_{t}(a) \hat{v}_{t}(a)^2 + \theta_{t}(A+1)^2 \hat{v}_{t}(A+1)^2\right) \\
        &= 2\eta \langle \theta_t, w_t\rangle. 
    \end{align*}
    Using this in \pref{eq: omd tmp3} finishes the proof. 
\end{proof}

\section{Proof of \pref{thm: warmup thm}} \label{app: proof main thm for warmup}

\begin{proof}
There are two major parts to this proof. In the first part, we show that \pref{eq: surrogate} holds. In the second, we relate the quantity $\sum_{k=1}^{\sqrt{T}}\vhat_{k}(a)$ to the static and dynamic regret for $a\in \crl{1,2}$.

\paragraph{Applying \pref{lem: log-bar omd smooth} to obtain \pref{eq: surrogate}.}
To invoke \pref{lem: log-bar omd smooth}, we verify that $\eta\theta_{k}(a)|\vhat_{k}(a)| \leq 1/6$ for $k\in [\sqrt{T}]$ and $a\in \crl{1,2}$. Recall the definition of $\vhat_k$ and $\lhat_k$:
\begin{align*}
\vhat_k = p_{k,\xx}\hat{\ell}_{k,\xx} + p_{k,\oo}\hat{\ell}_{k,\oo}  \quad\mbox{and}\quad 
\begin{cases}
    \hat{\ell}_{k,\xx} = \frac{\ind\{i_k=\oo\}}{p_{k,\oo}}\begin{bmatrix}
        0 \\
        \frac{1}{2}T^{1/4}\hat{L}_k
   \end{bmatrix}, \\[14pt]
   \hat{\ell}_{k,\oo} = \frac{\ind\{i_k=\oo\}}{p_{k,\oo}}\begin{bmatrix}
        T^{3/4} - \frac{1}{2}u\sqrt{T}\hat{L}_k \\
        \sqrt{T}
   \end{bmatrix}.
\end{cases}
\end{align*}
Thus we have for $a=1$ with the choice of $\eta \leq 1/(2T)$,
\begin{align*}
    \eta\theta_{k}(1) |\vhat_{k}(1)| &= \eta \theta_{k}(1) |p_{k,\xx}\lhat_{k,\xx}(1) + p_{k,\oo}\lhat_{k,\oo}(1)| \\
    &= \eta \theta_{k}(1) \abs*{  p_{k,\oo} \frac{\ind\{i_k=\oo\}}{p_{k,\oo}} \prn*{ T^{3/4} - \frac{1}{2}u\sqrt{T}\hat{L}_k}}\\
    &\leq \eta\theta_{k}(1) \abs*{  T^{3/4} - \frac{1}{2}u\sqrt{T}\hat{L}_k}\\
    &\leq \frac{1}{6},
\end{align*}
where the last inequality uses the fact that $\Lhat_k$ is an underestimator of $L_k$, which implies $\Lhat_k\leq L_k\leq \sqrt{T}$.
Meanwhile, for $a=2$, we have
\begin{align*}
    \eta\theta_{k}(2) |\vhat_{k}(2)| &= \eta \theta_{k}(2) |p_{k,\xx}\lhat_{k,\xx}(2) + p_{k,\oo}\lhat_{k,\oo}(2)| \\
    &= \eta \theta_{k}(2) \abs*{ p_{k,\xx}   \frac{\ind\{i_k=\oo\}}{p_{k,\oo}} \frac{1}{2}T^{1/4}\hat{L}_k +  p_{k,\oo}  \frac{\ind\{i_k=\oo\}}{p_{k,\oo}} \sqrt{T}  }\\
    &\leq  \eta \theta_{k}(2) \abs*{ \frac{p_{k,\xx}}{2p_{k,\oo}} T^{1/4}\hat{L}_k}   + 1/12.
\end{align*}
Recall the definition of $p_{k,\oo}$, $p_{k,\xx}$ from \pref{eq: p choice warm},
\begin{align*}
    (p_{k,\xx}, p_{k,\oo}) =  \frac{\left(u\theta_{k}(1)\sqrt{T}, \ \ \theta_{k}(2)T^{1/4}\right)}{\theta_{k}(2)T^{1/4} + u\theta_{k}(1)\sqrt{T}}.
\end{align*}
Then by plugging definitions into the previous inequality, we further have
\begin{align*}
    \eta\theta_{k}(2) |\vhat_{k}(2)| &\leq \eta \theta_{k}(2) \abs*{ \frac{p_{k,\xx}}{2p_{k,\oo}} T^{1/4}\hat{L}_k}  + 1/12\\
     &= \eta \theta_{k}(2) \frac{u\theta_{k}(1)\sqrt{T}}{2\theta_{k}(2)T^{1/4}}  T^{1/4}\hat{L}_k   + 1/12\\
     &=u \eta \theta_{k}(1) \sqrt{T} \Lhat_k/2   + 1/12\\
     &\leq 1/6,
\end{align*}
where the last inequality is by the choice of $\eta\leq 1/(2T)$. Now we can invoke \pref{lem: log-bar omd smooth} with $k$ as the index with a total number of $\sqrt{T}$ steps. Thus we have
\begin{align}
    \sum_{k=1}^K \hat{v}_{k}(a) \leq 
     \frac{2}{\eta} \ln \frac{1}{\alpha}   + 2\eta \sum_{k=1}^K \theta_{k}(a) \hat{v}_{k}(a)^2 + \sum_{k=1}^K \inner{\theta_{k}, \hat{v}_k} +  \alpha\sum_{k=1}^K \left\langle e_a - \frac{\mathbf{1}}{A}, \hat{v}_k - 2\eta\theta_k \hat{v}_k^2 \right\rangle, \label{ineq:smooth}
\end{align}
for all $a\in \crl{1,2}$ and $\alpha\in (0,1]$. Note the following bounds:
\begin{align*}
\sum\limits_{k=1}^{K}\En[\abs*{\vhat_k(1)}] \leq  \sum\limits_{k=1}^{K}\abs*{  T^{3/4} - \frac{1}{2}u\sqrt{T}\hat{L}_k} \leq T^{3/2},
\end{align*}
\begin{align*}
    \sum\limits_{k=1}^{K}\En[\abs*{\vhat_k(2)}] &= \sum\limits_{k=1}^{K}  \En\brk*{\abs*{ p_{k,\xx}   \frac{\ind\{i_k=\oo\}}{p_{k,\oo}} \frac{1}{2}T^{1/4}\hat{L}_k +  p_{k,\oo}  \frac{\ind\{i_k=\oo\}}{p_{k,\oo}} \sqrt{T}  }}  \\
    &\leq  \sum\limits_{k=1}^{K} 2T^{3/4} = 2T^{5/4},
\end{align*}
and 
\begin{align*}
\sum\limits_{k=1}^{K} \En[\abs*{\eta\theta_k(a) \hat{v}_k^2(a)}] \leq \sum\limits_{k=1}^{K} \En[\abs*{ \hat{v}_k(a)}/6] \leq T^{3/2},
\end{align*}
for $a=1$ or $2$.
Thus using the bounds above for the last term in \pref{ineq:smooth} together with the choice of $\eta=c\cdot 1/T$ with $c$ small enough and $\alpha = T^{-3/2}/8$, we have for $a\in \crl{1,2}$
\begin{align*}
    \sum_{k=1}^{K}\hat{v}_{k}(a) &\leq O(T\log T) +    2\eta\sum_{k=1}^{K} \theta_{k}(a) \hat{v}_{k}(a)^2 + \sum_{k=1}^{K} \theta_k^\top \hat{v}_k\\
    &\leq O(T\log T) +    \sum_{k=1}^{K} |2\eta\theta_{k}(a) \hat{v}_{k}(a)||\hat{v}_{k}(a)| + \sum_{k=1}^{K} \theta_k^\top \hat{v}_k \\
    &\leq  O(T\log T) +    \frac{1}{2}\sum_{k=1}^{K} |\hat{v}_{k}(a)| + \sum_{k=1}^{K} \theta_k^\top \hat{v}_k,
\end{align*}
where the last inequality follows again by $\eta\theta_{k}(a)|\vhat_{k}(a)| \leq 1/6$ for $k\in [\sqrt{T}]$ and $a\in \crl{1,2}$. 

\paragraph{Upper bounding the static and dynamic regret through $\sum_{k=1}^{\sqrt{T}}\vhat_{k}(a)$} 
Recall \pref{eq: include u}, for $a=1$, we have
\begin{align*}
    \frac{1}{2}\ind\{i_k=\oo\}\left(T^{3/4} - \frac{3}{2}u\sqrt{T}\hat{L}_k\right) \leq \hat{v}_{k}(1) - \frac{1}{2}|\hat{v}_{k}(1)|.
\end{align*}
Then combine the above with \pref{eq: Sreg bound} and \pref{eq: Dreg bound}, we have the static regret is upper bounded by
\begin{align*}
    \E\left[\sum_{k=1}^{K} \left(p_{k,\xx}\SReg_{k,\xx} + p_{k,\oo}\SReg_{k,\oo}\right)\right] &\leq \E\left[\sum_{k=1}^{K} p_{k,\oo}\left(T^{1/4} - \frac{1}{2}L_k\right)\right]\\
    &\leq  \frac{1}{\sqrt{T}}\cdot\sum_{k=1}^{K} \E\brk*{ \ind\{i_k=\oo\}\left(T^{3/4} - \frac{3}{2}u\sqrt{T}\hat{L}_k\right)}\\
    &\leq \frac{2}{\sqrt{T}} \cdot\sum_{k=1}^{K}  \E\brk*{ \hat{v}_{k}(1) - \frac{1}{2}|\hat{v}_{k}(1)|  }.
\end{align*}
Combine further with \pref{eq: surrogate} shown in the first part of this proof, we have
\begin{align*}
     \E\left[\sum_{k=1}^{K} \left(p_{k,\xx}\SReg_{k,\xx} + p_{k,\oo}\SReg_{k,\oo}\right)\right] &\leq \frac{1}{\sqrt{T}} \prn*{O(T) +  \En\brk*{\sum_{k=1}^{K} \theta_k^\top \hat{v}_k }}\\
     &\leq O(\sqrt{T}),
\end{align*}
where the last inequality is obtained from \pref{eq: residual}.
Similarly, for the dynamic regret, we have by \pref{eq: Sreg bound} and \pref{eq: Dreg bound},
\begin{align*}
    \E\left[p_{k,\xx}\DReg_{k,\xx} + p_{k,\oo}\DReg_{k,\oo}\right] &\leq \E\left[p_{k,\xx} L_k/2  + p_{k,\oo} T^{1/4}\right].
\end{align*}
We further note that  $ L_k \leq 2 T^{1/4} + \En\brk*{\Lhat_k}$. Recall the definition of $\lhat_{k,\xx}, \lhat_{k,\oo}$ from \pref{eq: warmup estimator}, consequently, we have 
\begin{align*}
     \E\left[p_{k,\xx} L_k/2  + p_{k,\oo} T^{1/4}\right] &\leq \frac{1}{T^{1/4}} \prn*{ \sqrt{T} + \En\brk*{    p_{k,\xx}\lhat_{k,\xx}(2) + p_{k,\oo}\lhat_{k,\oo}(2)       }  }\\
     &= \frac{1}{T^{1/4}} \prn*{ \sqrt{T} + \En\brk*{ \vhat_{k}(2)  }  }.
\end{align*}
We also have from \pref{eq: surrogate} and $\vhat_{k}(2)\geq 0$ for all $k\geq 0$ that 
\begin{align*}
    \En\brk*{\sum_{k=1}^K\vhat_{k}(2)} \leq  O(T) + 2 \E\left[\sum_{k=1}^{K} \theta_k^\top \hat{v}_k \right] \leq O(T).
\end{align*}
Combine the above three displayed inequalities, we have
\begin{align*}
      \E\left[\sum_{k=1}^K \prn*{ p_{k,\xx}\DReg_{k,\xx} + p_{k,\oo}\DReg_{k,\oo}} \right]\leq O(T^{3/4}).
\end{align*}
This concludes our proof.


\end{proof}

\section{Guarantees of \expalg in $\oo$ Epochs (\pref{lemma: sreg under ooo}--\pref{lem: dreg in o})}\label{app: expalg}

In this section, we introduce \expalg (\pref{alg: auer et l - det}) that is executed in an $\oo$ epoch. It is detection-based dynamic regret minimization algorithm similar to \cite{karnin2016multi} and \cite{auer2019adaptively}. Our algorithm is simpler than theirs as we assume deterministic losses and thus changes can be detected with a single sample.

The algorithm proceeds in \emph{blocks}. Each block $m$ begins with Phase~1, during which each arm is sampled exactly once. At the end of Phase~1, the algorithm computes the sub-optimality gap $\Delta^{m}(a)$ (\pref{line: calculate gap}) and the corresponding sampling probability $\prob\super{m}(a)$ (\pref{line: gap dep exp}). In Phase~2, arms are sampled according to the distribution $\prob\super{m}$, and the block terminates as soon as a change in the loss of any arm is detected (\pref{line: change detected}).

The sampling probabilities are chosen to satisfy $\prob\super{m}(a)\leq \frac{\rho}{A \Delta\super{m}(a)}$ which ensures that the per-round exploration cost is bounded by $\sum_a \prob\super{m}(a)\Delta\super{m}(a)\leq \sum_a \frac{\rho}{A\Delta\super{m}(a)} \Delta\super{m}(a)\leq \rho$. While the same exploration cost could also be achieved by sampling all arms uniformly with probability~$\frac{\rho}{A}$, the inverse-gap sampling probability is crucial: For arms with smaller gaps, as the learner accumulates less negative regret against them, their changes must be more quickly detected to prevent regret from detection delay. This is crucial in establishing \pref{eq: o bound 1} in \pref{lemma: sreg under ooo}.   

Below, we bound the static regret incurred by \expalg in \pref{lemma: sreg under ooo}, the estimation error $|L_k(a) - \Lhat_k(a)|$ in \pref{lem: L error}, and the dynamic regret in \pref{lem: dreg in o}. 

\begin{lemma}\label{lemma: sreg under ooo}
    With probability at least $1-2\delta$, for all $k\in[K]$, choosing $\oo$ in epoch $k$ ensures for all~$a\in[A]$: 
    \begin{align}
        \SReg_{k,\oo}(a)  &\leq  \min\left\{5\rho\sqrt{AT} +\frac{7A\hat{s}_k\log(AT/\delta) }{\rho}, \sqrt{AT}\right\} - \frac{3}{4}\Lhat_k(a),   \label{eq: o bound 1} \\ 
        \SReg_{k,\oo}(a)  &\leq \min\left\{4\rho\sqrt{AT} + \frac{7A(\hat{s}_k + \ind\{s_k>0\})\log(AT/\delta)}{\rho}, \sqrt{AT}\right\} - \Lhat_k(a). \label{eq: o bound 2}
    \end{align}
    where $\Lhat_k$ and $\hat{s}_k$ are outputs of \expalg. 
\end{lemma}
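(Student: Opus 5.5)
The plan is to decompose the per-round static regret against $a$ through the learner's running estimates $\hat c_t$. The identity
\begin{align*}
c_t(a_t)-c_t(a) = \big[c_t(a_t)-\min_{a'}\hat c_t(a')\big] - \big[\hat c_t(a)-\min_{a'}\hat c_t(a')\big] + \big[\hat c_t(a)-c_t(a)\big]
\end{align*}
holds exactly. Summed over $t\in\calI_k$, the middle bracket gives exactly $-\Lhat_k(a)$ by \pref{line: hatL definition}. So it suffices to bound the first bracket (\emph{exploitation/exploration cost}) and the third bracket (\emph{estimation error}, i.e., detection delay). I would work block by block. Blocks are indexed by $m=0,\dots,\hat s_k$, and every block except possibly the last ends with a detected change. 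Because losses are deterministic, $\hat c_t(a)=c_t(a)$ for every arm whose loss has not changed since it was last sampled.

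\textbf{Step 1: exploration cost.} In Phase~1 of each block, the cost is at most $A$, so the total over blocks is at most $A(\hat s_k+1)$. The term $A\hat s_k$ is absorbed into $7A\hat s_k\io/\rho$, and the leftover $A\le\rho\sqrt{AT}$ since $\rho\sqrt{AT}=4\io\sqrt S A$. In Phase~2, consider rounds where the currently played arm's estimate is correct. If $a_t$ changed, the change is detected immediately and the block ends, so this case costs at most $1$ per block. Otherwise $c_t(a_t)-\min\hat c_t\le \hat c_t(a_t)-\min \hat c_t=\Delta\super{m}(a_t)$, whose conditional expectation is $\sum_a q\super{m}(a)\Delta\super{m}(a)\le\rho$. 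Applying Freedman (\pref{lem: freedman+}) with a union bound over epochs gives $\le \tfrac32\rho\sqrt{AT}+4\io$. One caveat: $\min\hat c_t$ can be stale when the leader's true loss drops undetected. Then $c_t(a_t)-\min\hat c_t$ is still bounded by the observed $\Delta$ of the played arm, so this does not hurt.

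\textbf{Step 2: detection delay for arm $a$.} Suppose $a$'s loss decreases at some round and is not yet observed. Then $\hat c_t(a)-c_t(a)\le 1$ per round until $a$ is next sampled. In Phase~2, $a$ is sampled with probability at least $\frac1A\min\{1,\rho/\Delta\super{m}(a)\}$. Hence, with probability $1-\delta/(AT)$, the delay is $D\le A\io\max\{1,\Delta\super{m}(a)/\rho\}\le A\io/\rho$, using $\Delta\le1$ and $\rho\le 1$. This is exactly where the gap-dependent rate in \pref{line: gap dep exp} is used. If such a change is eventually detected, it ends a block, so its cost is charged to $\hat s_k$, giving the $O(A\hat s_k\io/\rho)$ term. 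If it occurs in the final block and is never detected, I charge it to $\ind\{s_k>0\}$, which yields \pref{eq: o bound 2} with coefficient $1$ on $\Lhat_k(a)$.

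\textbf{Step 3, the main obstacle: \pref{eq: o bound 1}.} Here only the observable $\hat s_k$ is allowed. The undetected change in the last block must instead be paid by the slack $\tfrac14\Lhat_k(a)$. During those $D$ rounds, $\Lhat_k(a)$ accrues $\Delta\super{m}(a)$ per round. I would split into two cases.
\begin{itemize}
\item If $\Delta\super{m}(a)\le\rho$, then $D\le A\io\le A\io/\rho$. I would try to absorb this into the existing Phase~1 term, since the last block's Phase~1 is already paid.
\item If $\Delta\super{m}(a)>\rho$, I would use the finer bound $\hat c_t(a)-c_t(a)\le \Delta\super{m}(a)+(\min\hat c_t-c_t(a))$ together with $D\lesssim A\io\Delta/\rho$. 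The $\Delta$ part should then be covered by $\tfrac14 D\Delta$ plus an $O(A\io/\rho)$ remainder.
\end{itemize}
I expect the constants ($5$, $7$, $\tfrac34$) to be tuned exactly here, and this case analysis is what I would check first.

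\textbf{Step 4: the $\sqrt{AT}$ cap.} For the caps, I would bound each bracket per round by a quantity summing to at most $|\calI_k|=\sqrt{AT}$ while keeping the $-\Lhat_k(a)$ term. Concretely, $c_t(a_t)-c_t(a)+\big(\hat c_t(a)-\min\hat c_t\big)\le 1$ up to the same delay corrections, and a further union bound over $k$ and $a$ finishes the argument.
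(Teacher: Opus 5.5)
You have the right decomposition, and Steps 1--2 match the paper's four-term split. They give \pref{eq: o bound 2} and the general bound with a factor $(\hat s_k+1)$ on the $A\io/\rho$ delay term. The gap is in Step~3. When $\hat s_k\ge 1$ nothing special is needed: the last block's delay cost, at most $A\io/\rho$, is absorbed via $\hat s_k+1\le 2\hat s_k$. The only hard case is $\hat s_k=0$. There the right-hand side of \pref{eq: o bound 1} is just $\min\{5\rho\sqrt{AT},\sqrt{AT}\}-\frac34\Lhat_k(a)$, with no $1/\rho$ term. So the ``$O(A\io/\rho)$ remainder'' in your second case is not affordable: $A\io/\rho=\frac14\sqrt{AT/S}$, which far exceeds $\rho\sqrt{AT}=4\io A\sqrt{S}$ once $T\gg AS^2\io^2$. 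Under your accounting the remainder really is that large. On delay rounds $\hat c_t(a)-c_t(a)$ can equal $1$, because your extra piece $\min_{a'}\hat c_t(a')-c_t(a)$ is uncontrolled. Meanwhile, the amount of $\Lhat_k(a)$ accrued during the delay window is only $D\Delta\super{0}(a)$. That leaves $\Theta(D)=\Theta(A\io\Delta\super{0}(a)/\rho)$ uncovered.

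The missing idea concerns the case $\hat s_k=0$. Then there is a single block, and its estimates $\hat c_t$ are frozen from the end of Phase~1 to the end of the epoch. So $\Lhat_k(a)$ accrues $\Delta\super{0}(a)$ over essentially the entire epoch, not just the delay window:
\[
\Lhat_k(a)\ge\sqrt{AT}\,\Delta\super{0}(a)-(A+1).
\]
The detection bound $1/q\super{0}(a)\le A+A\Delta\super{0}(a)/\rho$ splits the delay cost into two parts. The first is $A\io$, which is absorbed into $\rho\sqrt{AT}$ (not into the Phase~1 cost $A$, as your first case suggests). The second, using $\rho=4\io\sqrt{SA/T}$, is
\[
\frac{A\io}{\rho}\,\Delta\super{0}(a)=\frac14\sqrt{\frac{AT}{S}}\,\Delta\super{0}(a)\le\frac14\bigl(\Lhat_k(a)+A+1\bigr).
\]
This is exactly where the $\frac34$ and the constant $4$ in $\rho$ come from. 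It is also why the gap-dependent exploration rate matters: the delay for arm $a$ scales with $\Delta\super{0}(a)$ in the same way as the whole-epoch accrual of $\Lhat_k(a)$, so the two can be traded against each other.
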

\begin{proof}
    Recall that $\tau\super{m}$ is the time index at the beginning of block $m$ of \expalg (defined in \pref{line: starting index}). For notational simplicity, define $\tau\super{\hat{s}_k+1} = t_k+1$. Then we have 
\allowdisplaybreaks
    \begin{align}
        \SReg_{k,\oo}(a) 
        &= \sum_{t=e_k}^{t_k} (c_t(a_t) - c_t(a))  \notag \\ 
        &\leq (\hat{s}_k+1)A + \sum_{m=0}^{\hat{s}_k}\sum_{t=\tau\super{m}+A}^{\tau\super{m+1}-1} \left(c_t(a_t) - c_t(a)\right) \notag \\
        &= (\hat{s}_k+1)A + \underbrace{\sum_{m=0}^{\hat{s}_k}\sum_{t=\tau\super{m}+A}^{\tau\super{m+1}-1} \left(\hat{c}_t(a_t) - \min_{a'}\hat{c}_t(a')\right)}_{\term_1} + \underbrace{\sum_{m=0}^{\hat{s}_k}\sum_{t=\tau\super{m}+A}^{\tau\super{m+1}-1} \left(\min_{a'}\hat{c}_t(a') - \hat{c}_t(a)\right)}_{\term_2} \notag\\
        &\qquad \quad + \underbrace{\sum_{m=0}^{\hat{s}_k}\sum_{t=\tau\super{m}+A}^{\tau\super{m+1}-1} (c_t(a_t) - \hat{c}_t(a_t))}_{\term_3} + \underbrace{\sum_{m=0}^{\hat{s}_k}\sum_{t=\tau\super{m}+A}^{\tau\super{m+1}-1} (\hat{c}_t(a) - c_t(a))}_{\term_4}.
        \label{eq: sreg o decompose}
    \end{align}
    Now we bound the four terms above. Below, denote $\tdelta = \frac{\delta}{AT}$. 
    
    \paragraph{Bounding $\term_1$. } 
    With probability at least $1-\tdelta$, 
\begin{align}
    \term_1&=\sum_{m=0}^{\hat{s}_k}\sum_{t=\tau\super{m}+A}^{\tau\super{m+1}-1} \left(\hat{c}_t(a_t) - \min_{a'} \hat{c}_t(a')\right) \notag\\
    &=\sum_{m=0}^{\hat{s}_k} \sum_{t=\tau\super{m}+A}^{\tau\super{m+1}-1} \sum_{a\in[A]} \ind\{ a_t=a \}\Delta\super{m}(a)  \tag{in the Phase~2 of block $m$, $\hat{c}_t(a) - \min_{a'}\hat{c}_t(a')=\Delta\super{m}(a)$} \\
    &\leq \frac{3}{2}\sum_{m=0}^{\hat{s}_k} \sum_{t=\tau\super{m}+A}^{\tau\super{m+1}-1} \sum_{a\in[A]} \prob\super{m}(a)\Delta\super{m}(a) + 4\log(1/\tdelta) \tag{by \pref{lem: freedman+}} \\
    &\leq \frac{3}{2}\sum_{m=0}^{\hat{s}_k} \sum_{t=\tau\super{m}+A}^{\tau\super{m+1}-1} \sum_{a\in[A]} \frac{\exprate}{A} + 4\log(1/\tdelta) \tag{$\prob\super{m}(a)\leq \frac{\exprate}{A\Delta\super{m}(a)}$ by definition} \\
    &\leq \frac{3}{2}\rho\sqrt{AT} + 4\log(1/\tdelta). \label{eq: bounding exploration term}
\end{align}
    
    \paragraph{Bounding $\term_2$. }
    By the definition of $\Lhat_k(a)$ in \pref{line: hatL definition} of \pref{alg: auer et l - det}: 
    \begin{align*}
        \term_2 &= \sum_{m=0}^{\hat{s}_k}\sum_{t=\tau\super{m}+A}^{\tau\super{m+1}-1} \left(\min_{a'}\hat{c}_t(a') - \hat{c}_t(a)\right) \\
        &\leq \sum_{m=0}^{\hat{s}_k}\sum_{t=\tau\super{m}}^{\tau\super{m+1}-1} \left(\min_{a'}\hat{c}_t(a') - \hat{c}_t(a)\right) + (\hat{s}_k+1)A\\ 
        &= -\Lhat_k(a) + (\hat{s}_k+1)A. 
    \end{align*}
    \paragraph{Bounding $\term_3$. } 
    $\term_3$ is at most $\hat{s}_k + 1$ because $c_t(a_t) \neq \hat{c}_t(a_t)$ can only happen once in the Phase~2 of every block $m$. 
    
    \paragraph{Bounding $\term_4$. } 
    To bound $\term_4$, note that whenever $c_t(a) \neq \hat{c}_t(a)$, with probability $\prob\super{m}(a)$, the change will be detected and block $m$ will terminate. Therefore, for any $m$, any $a$ and any $N$, 
    \begin{align*}
       \Pr\left\{\sum_{t=\tau\super{m}+A}^{\tau\super{m+1}-1} \ind\{\hat{c}_t(a)\neq c_t(a)\} \geq N \right\} \leq \left(1-\prob\super{m}(a)\right)^{N-1}. \tag{the probability of failing to detect the change consecutively for $N-1$ times}
    \end{align*}
    Setting the right-hand side above as $\tdelta$, we get: with probability at least $1-\tdelta$, 
    \begin{align*}
        \sum_{t=\tau\super{m}+A}^{\tau\super{m+1}-1} \ind\{\hat{c}_t(a)\neq c_t(a)\} \leq 1 - \frac{\log(1/\tdelta)}{\log(1-\prob\super{m}(a))} \leq 1+ \frac{\log(1/\tdelta)}{\prob\super{m}(a)} \leq 1 + \left(A + \frac{A}{\exprate}\Delta\super{m}(a)\right)\log(1/\tdelta), 
    \end{align*}
    where the last inequality is by the definition of $\prob\super{m}(a)$. Thus, with probability at least $1-\sqrt{AT}\tdelta$, 
    \begin{align*}
         \term_4
         &\leq \sum_{m=0}^{\hat{s}_k}\sum_{t=\tau\super{m}+A}^{\tau\super{m+1}-1} \ind\{\hat{c}_t(a)\neq c_t(a)\} \\ 
         &\leq \sum_{m=0}^{\hat{s}_k} \left(1+A+\frac{A}{\exprate}\Delta\super{m}(a) \right)\log(1/\tdelta)\\
         &\leq (\hat{s}_k+1)(A+1)\log(1/\tdelta) + \frac{A\log(1/\tdelta)}{\exprate}\sum_{m=0}^{\hat{s}_k}\Delta\super{m}(a). 
    \end{align*}
    Combining the four terms above with \pref{eq: sreg o decompose}, we get: with probability at least $1-2\sqrt{AT}\tdelta$, 
    \begin{align}
        &\SReg_{k,\oo}(a) \notag\\
        &\leq \frac{3}{2}\rho\sqrt{AT} + 4\log(1/\tdelta) + 3(\hat{s}_k+1)(A+1)\log(AT/\delta) - \Lhat_k(a) + \frac{A\log(1/\tdelta)}{\exprate} \sum_{m=0}^{\hat{s}_k} \Delta\super{m}(a) \notag\\
        &\leq 4\rho\sqrt{AT} + 5\hat{s}_k A\log(1/\tdelta) - \Lhat_k(a) + \frac{A\log(1/\tdelta)}{\exprate} \sum_{m=0}^{\hat{s}_k} \Delta\super{m}(a)
        \label{eq: general Sreg bound o} 
    \end{align}
    where in the last inequality we simplify the bound using $\exprate= 4\log(1/\tdelta)\sqrt{SA/T}$ and $A\geq 2$.   
    Below, we provide refined bounds when $\hat{s}_k$ is zero and when $s_k$ is zero. 

\paragraph{Refinement when $\hat{s}_k=0$. } When $\hat{s}_k=0$, \pref{eq: general Sreg bound o}  becomes 
    \begin{align}
        \SReg_{k,\oo}(a)\leq 4\rho\sqrt{AT}  - \Lhat_k(a) + \frac{A\log(1/\tdelta)}{\exprate} \Delta\super{0}(a).   \label{eq: hats = 0refine}
    \end{align}
    Also, when $\hat{s}_k=0$, 
    \begin{align}
        \Lhat_k(a) 
        =  \sum_{t=e_k}^{t_k} \left(\hat{c}_{t}(a) - \min_{a'\in [A]} \hat{c}_{t}(a')\right) 
        &\geq \sum_{t=e_k +A}^{t_k-1}  \Delta\super{0}(a) \geq \sqrt{AT}\Delta\super{0}(a) - (A+1). \label{eq: hatL lower bound}  
    \end{align}
    By our choice of $\exprate= 4\log(1/\tdelta)\sqrt{SA/T}$, the last term in \pref{eq: hats = 0refine} can be upper bounded as 
    \begin{align*}
       \frac{A\log(1/\tdelta)}{\exprate} \Delta\super{0}(a)
       &\leq \frac{1}{4}\sqrt{\frac{AT}{S}}\Delta\super{0}(a) \\
       &\leq \frac{1}{4}\left(\Lhat_k(a) + (A+1)\right).    \tag{using \pref{eq: hatL lower bound}}
    \end{align*}
    Plugging this in \pref{eq: hats = 0refine}, we thus refine the bound when $\hat{s}_k=0$ as 
    \begin{align}
        \SReg_{k,\oo}(a) \leq 5\exprate\sqrt{AT} - \frac{3}{4}\Lhat_k(a).   \label{eq: refinement in hats = 0}
    \end{align}
    We obtain \pref{eq: o bound 1} by combining the general bound \pref{eq: general Sreg bound o} and the special bound \pref{eq: refinement in hats = 0} for $\hat{s}_k=0$, using $A\geq 2$ and $\Delta\super{m}(a)\leq 1$ and that all positive terms are bounded by $|\calI_k|=\sqrt{AT}$ trivially, and furthermore using an union bound over all epochs and all actions. 
    
    \paragraph{Refinement when $s_k=0$. }  When $s_k=0$ (which implies $\hat{s}_k=0$), we have $\hat{c}_t(a) = c_t(a)$ for all $a\in[A]$ and for all $t$ in Phase~2. That means $\term_3=\term_4=0$ in \pref{eq: sreg o decompose}. Hence, in this case, with probability at least $1-\tdelta$, 
    \begin{align}
        \SReg_{k,\oo}(a)\leq \frac{3}{2}\exprate\sqrt{AT} + 4\log(1/\tdelta) + A - \Lhat_k(a). \label{eq: s_k = 0 special bound}
    \end{align}
    We obtain \pref{eq: o bound 2} by combining the general bound \pref{eq: general Sreg bound o} and the special bound \pref{eq: s_k = 0 special bound} for $s_k=0$. 
\end{proof}

\begin{lemma}\label{lem: L error}
    With probability at least $1-\delta$, for all $k\in[K]$, choosing $\oo$ in epoch $k$ ensures 
    \begin{align*}
        \forall a\in[A], \qquad \big|L_k(a) - \Lhat_k(a)\big|\leq  A + \frac{3A(\hat{s}_k + \ind\{s_k>0\})\log(AT/\delta)}{\rho}. 
    \end{align*}
\end{lemma}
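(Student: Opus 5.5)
The plan is to bound the per-round discrepancy between $c_t(a)-\min_{a'}c_t(a')$ and $\hat{c}_t(a)-\min_{a'}\hat{c}_t(a')$. Each of the losses is in $[0,1]$, so this discrepancy is at most $1$ in every round and is zero in any round where $\hat c_t(a')=c_t(a')$ for all $a'$. Hence
\begin{align*}
\big|L_k(a)-\Lhat_k(a)\big| \le \sum_{t\in\calI_k}\ind\{\exists a'\in[A]:\ \hat c_t(a')\neq c_t(a')\}.
\end{align*}
It then suffices to count the rounds in which the estimate vector $\hat c_t$ is stale. I will split $\calI_k$ into the blocks $[\tau\super{m},\tau\super{m+1}-1]$ for $m=0,\dots,\hat s_k$, as in the proof of \pref{lemma: sreg under ooo}, and treat Phase~1 and Phase~2 of each block separately.

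\textbf{Phase~1.} Before arm $i$ is resampled, its estimate may be stale, so each block can contribute up to $A$ bad rounds here. For $m\ge 1$, block $m$ was opened by a detected change, so these rounds can be charged to the detection; the block-$0$ Phase~1 contributes at most $A$. This gives the additive $A$ plus a term absorbed into $\hat s_k\cdot O(A)\le \hat s_k \cdot A\log(AT/\delta)/\rho$, using $\rho\le 1/2$. One could alternatively keep $(\hat s_k+1)A$ and fold the $\hat s_k A$ part into the second term the same way.

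\textbf{Phase~2.} Within a block, once some arm $a'$ becomes stale, every round in which it stays undetected is a round in which $a'$ is not pulled. Each round it is pulled with probability $\prob\super{m}(a')\ge \min_b \prob\super{m}(b)\ge \rho/A$, since $\Delta\le 1$ gives $\frac1A\min\{1,\rho/\Delta\}\ge \rho/A$, and the argmin arm has probability at least $1/A$. I will use a geometric tail bound exactly as in the $\term_4$ argument: with probability at least $1-\tdelta$, the number of consecutive non-detections is at most $1+A\log(1/\tdelta)/\rho$. The key structural point is that a stale round in Phase~2 requires a loss change to have occurred inside the block. If $s_k=0$, there are no stale rounds at all. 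Otherwise, the blocks containing a change are the $\hat s_k$ blocks that ended by detection, plus at most one final block containing an undetected change. That gives at most $\hat s_k+\ind\{s_k>0\}$ relevant blocks.

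\textbf{Main obstacle.} Within one block, several arms may become stale at different times, and each later change restarts its own clock. So I will bound the stale rounds in a block by the time from the \emph{first} change in the block until detection. After the first change, at least one arm is stale in every round. Although the identity of the stale arm can vary, a detection occurs in a round whenever the pulled arm is stale, which has probability at least $\rho/A$. This is a per-round success probability of at least $\rho/A$ conditioned on the past, so a Freedman-type or geometric argument still gives the bound $1+A\log(1/\tdelta)/\rho\le 3A\log(AT/\delta)/(2\rho)$. Finally, I will union bound over all blocks, epochs and arms, using $\tdelta=\delta/(AT)$ and at most $T$ blocks overall. Summing the Phase~1 and Phase~2 contributions, and absorbing constants into the factor $3$, yields the stated bound with probability $1-\delta$.
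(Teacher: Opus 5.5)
Your decomposition is the same as the paper's. You bound $|L_k(a)-\Lhat_k(a)|$ by the number of rounds in which some coordinate of $\hat c_t$ is stale. You charge at most $A$ Phase~1 rounds to each of the $\hat s_k+1$ blocks. You control the stale Phase~2 rounds in each block by a geometric tail with detection probability at least $\rho/A$, and you note that Phase~2 has no stale rounds when $s_k=0$. Your count of $\hat s_k+\ind\{s_k>0\}$ relevant blocks is a compact way of writing the paper's two cases, and the constants work out.

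\textbf{The gap.} One step in your ``main obstacle'' paragraph is false: after the first change in a block, it is not true that some arm is stale in every subsequent round. Losses may revert. In the warm-up example, arm~2 goes from $1$ to $0$ and back to $1$. If the changed arm is not pulled before the revert, then $\hat c_t=c_t$ again and the detection probability drops to zero. The block may then run to $t_k$ with no detection at all.

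Consequently, the elapsed time from the first change to detection is not bounded by $1+A\log(1/\tdelta)/\rho$. For a non-argmin arm, which is pulled with probability at most $1/A$, this happens with constant probability and the elapsed time can be of order $\sqrt{AT}$. So the geometric argument cannot be applied to consecutive rounds.

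\textbf{The fix.} Run the argument on the stale rounds themselves, as the paper does. Because the adversary is oblivious, whether round $t$ is stale is determined before the pull. In every stale Phase~2 round, the block terminates with conditional probability at least $\min_{a'}q^{(m)}(a')\ge\rho/A$. Having at least $N$ stale rounds in a block requires the first $N-1$ of them all to miss, which has probability at most $(1-\rho/A)^{N-1}$. With this change your proof goes through and coincides with the paper's.
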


\begin{proof}
    For notational simplicity, define $\tau\super{\hat{s}_k+1} = t_k+1$. Then 
    \begin{align}
        \big|L_k(a) - \Lhat_k(a)\big| 
       &= \left| \sum_{m = 0}^{\hat{s}_k}  \sum_{t=\tau\super{m}}^{\tau\super{m+1}-1} \left(\left(c_t(a) - \min_{a'}c_t(a')\right) - \left(\hat{c}_t(a) - \min_{a'}\hat{c}_t(a')\right)\right)\right| \tag{by the definitions of $L_k(a)$ and $\Lhat_k(a)$} 
        \\& \leq ({\hat{s}_k+1})A +  \sum_{m = 0}^{\hat{s}_k}  \sum_{t=\tau\super{m}+A}^{\tau\super{m+1}-1} \ind\{\exists a', c_t(a')\neq \hat{c}_t(a')\}. \label{eq: imply upper}
    \end{align}
    In the Phase~2 of every block $m$, whenever there exists $a'$ such that $c_t(a') \neq \hat{c}_t(a')$, with probability $\prob\super{m}(a')$, the change will be detected and the block will terminate. Therefore, 
    \begin{align*}
        \Pr\left\{ \sum_{t=\tau\super{m}+A}^{\tau\super{m+1}-1}\ind\{ \exists a', c_t(a')\neq \hat{c}_t(a') \} 
        \geq N \right\} &\leq \left(1- \min_{a'} \prob\super{m}(a') \right)^{N-1} \leq \left(1-\frac{\rho}{A}\right)^{N-1}. \tag{the probability of failing to detect the change for $N-1$ times}
    \end{align*}
    Setting the right-hand side above as $\tdelta$, we get: with probability at least $1-\tdelta$, 
    \begin{align}
       \sum_{t=\tau\super{m}+A}^{\tau\super{m+1}-1}\ind\{ \exists a', c_t(a')\neq \hat{c}_t(a') \} \leq 1 - \frac{\log (1/\tdelta)}{\log(1-\frac{\rho}{A})}\leq 1 + \frac{A\log(1/\tdelta)}{\rho}. \label{eq: upper 2}
    \end{align}
    Therefore, with probability at least $1-\sqrt{AT}\tdelta$, \pref{eq: upper 2} holds for all $m$ and implies   
    \begin{align*}
        \forall a\in[A], \quad \big|L_k(a) - \Lhat_k(a)\big|\leq (\hat{s}_k+1)\left(A+1 + \frac{A\log(1/\tdelta)}{\rho}\right) \leq \frac{3A(\hat{s}_k+1)\log(1/\tdelta)}{\rho} 
    \end{align*}
    by \pref{eq: imply upper}. 
    In the special case where $s_k=0$ (no switch in epoch $k$), we have $\hat{s}_k=0$ and $c_t(a') = \hat{c}_t(a')$ for all $a'\in[A]$ in Phase~2. By \pref{eq: imply upper}, it holds that $|L_k(a) - \Lhat_k(a)|\leq A$. Combining the cases for $s_k=0$ and $s_k>0$, choosing $\tdelta=\frac{\delta}{AT}$, and using a union bound gives the desired bound. 
\end{proof}

\begin{lemma}\label{lem: dreg in o}
With probability at least $1-3\delta$, for all $k\in[K]$, choosing $\oo$ in epoch $k$ ensures 
    \begin{align*}
        \DReg_{k,\oo} \leq 5\rho\sqrt{AT} + \frac{10A(\hat{s}_k + \ind\{s_k>0\})\log(AT/\delta)}{\rho}. 
    \end{align*}
\end{lemma}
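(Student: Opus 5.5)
The plan is to reduce the dynamic regret to the static regret bound \eqref{eq: o bound 2} of \pref{lemma: sreg under ooo} plus the estimation error of \pref{lem: L error}. Fix any arm $a\in[A]$. By definition,
\begin{align*}
\DReg_{k,\oo} = \sum_{t\in\calI_k}\Big(c_t(a_t)-\min_{a'}c_t(a')\Big) = \SReg_{k,\oo}(a) + L_k(a),
\end{align*}
since $L_k(a)=\sum_{t\in\calI_k}(c_t(a)-\min_{a'}c_t(a'))$. This identity holds for every $a$, so any single arm suffices.

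Applying \eqref{eq: o bound 2}, which holds with probability at least $1-2\delta$ simultaneously for all $k$ and $a$, and dropping the $\min$ in favour of its first argument, we get
\begin{align*}
\DReg_{k,\oo}\le 4\rho\sqrt{AT}+\frac{7A(\hat{s}_k+\ind\{s_k>0\})\log(AT/\delta)}{\rho} + \big(L_k(a)-\Lhat_k(a)\big).
\end{align*}
On the event of \pref{lem: L error}, which has probability at least $1-\delta$, we have
\begin{align*}
L_k(a)-\Lhat_k(a)\le A+\frac{3A(\hat{s}_k+\ind\{s_k>0\})\log(AT/\delta)}{\rho}.
\end{align*}
A union bound gives probability at least $1-3\delta$, which matches the statement. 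Summing the two bounds yields
\begin{align*}
\DReg_{k,\oo}\le 4\rho\sqrt{AT}+A+\frac{10A(\hat{s}_k+\ind\{s_k>0\})\log(AT/\delta)}{\rho}.
\end{align*}

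The only remaining step is to absorb the additive $A$ into $\rho\sqrt{AT}$. Since $\rho=4\io\sqrt{SA/T}$, we have $\rho\sqrt{AT}=4\io A\sqrt{S}\ge A$, so $A\le\rho\sqrt{AT}$ and the $4\rho\sqrt{AT}$ term becomes $5\rho\sqrt{AT}$, as claimed.

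There is no real obstacle here; the work was done in the two earlier lemmas. The one point to check is that we use \eqref{eq: o bound 2}, whose coefficient on $\Lhat_k(a)$ is exactly $-1$, rather than \eqref{eq: o bound 1}, whose coefficient is $-\tfrac34$. With $-1$, the terms $-\Lhat_k(a)$ and $+L_k(a)$ combine into the difference $L_k(a)-\Lhat_k(a)$, which \pref{lem: L error} controls. With $-\tfrac34$, a leftover $\tfrac14\Lhat_k(a)$ would remain, and it can be of order $\sqrt{AT}$.
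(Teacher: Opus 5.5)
Your proof is correct and matches the paper's: the identity $\DReg_{k,\oo}=\SReg_{k,\oo}(a)+L_k(a)$, then \eqref{eq: o bound 2} (whose coefficient on $\Lhat_k(a)$ is $-1$), then \pref{lem: L error} and a union bound to get probability $1-3\delta$. Your explicit check that the extra $A$ is absorbed via $\rho\sqrt{AT}=4\io A\sqrt{S}\ge A$ is a step the paper leaves implicit.
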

\begin{proof}
    By the definition of $L_k(a)$, it holds that for any $a\in[A]$,  
    \begin{align*}
        \DReg_{k,\oo} = \SReg_{k,\oo}(a) + L_k(a).    
    \end{align*}
   Combining \pref{lemma: sreg under ooo} and \pref{lem: L error}, we can bound it as  
   \begin{align*}
      \DReg_{k,\oo} 
      &\leq 4\rho\sqrt{AT} + \frac{7A(\hat{s}_k + \ind\{s_k>0\})\log(AT/\delta)}{\rho} - \Lhat_k(a) + L_k(a) \\
      &\leq 5\rho\sqrt{AT} + \frac{10A(\hat{s}_k + \ind\{s_k>0\})\log(AT/\delta)}{\rho} 
   \end{align*}
   with probability at least $1-3\delta$. 
\end{proof}

\section{Mirror Descent Analysis} \label{app: dynamic regret in x}


\subsection{In $i_k=\xx$ Epochs (\pref{lem: mirror descent for x}, \pref{lem:one-interval-omd})}
\begin{lemma}\label{lem: mirror descent for x}
    With probability at least $1-\delta$, for all epochs $\calI_k=[e_k, t_k]$ where $i_k=\xx$, 
       \begin{align*}
          &\sum_{t\in\calI_k} \inner{\theta,\vhat_t} \leq \frac{D_\psi(\theta, \theta_{e_k}) - D_\psi(\theta, \theta_{t_k+1})}{\eta} + \frac{5}{2}\sqrt{AT}\log(AT/\delta). 
       \end{align*}
       for any $\theta\in\Theta$. 
\end{lemma}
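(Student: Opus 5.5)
We will apply \pref{lem: lem without correction} round by round inside an $\xx$ epoch, because in such epochs the update (\pref{line: update omd in x}) is plain OMD with the hybrid regularizer and no correction term. Summing the per-round inequality over $t\in\calI_k$ telescopes the Bregman terms into $\frac{D_\psi(\theta,\theta_{e_k})-D_\psi(\theta,\theta_{t_k+1})}{\eta}$. What remains is $\sum_{t\in\calI_k}\big(\inner{\theta_t,\hat v_t}+\eta\inner{\theta_t,w_t}\big)$, and we must show it is at most $\frac52\sqrt{AT}\log(AT/\delta)$.

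\textbf{Verifying the conditions.} Here $\hat v_t(A+1)=0$, so that condition holds trivially. For $a\in[A]$ we have
\[
\hat v_t(a)=\sqrt{T/A}\,\big(c_t(a_t)-\hat c_t(a)\big)\le \sqrt{T/A},
\]
and hence $\eta\hat v_t(a)\le \frac{1}{20T}\sqrt{T/A}\le 1$. Note that only an upper bound is needed, even though $\hat c_t(a)$ can be as large as $1/\gamma$.

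\textbf{The first-order term.} Since $\theta_t(a)=(1-\theta_t(A+1))q_t(a)$,
\[
\inner{\theta_t,\hat v_t}=(1-\theta_t(A+1))\sqrt{T/A}\sum_a q_t(a)\big(c_t(a_t)-\hat c_t(a)\big).
\]
Using $\sum_a q_t(a)\hat c_t(a)=\frac{q_t(a_t)c_t(a_t)}{q_t(a_t)+\gamma}$, this equals
\[
(1-\theta_t(A+1))\sqrt{T/A}\,c_t(a_t)\,\frac{\gamma}{q_t(a_t)+\gamma}\ \ge 0 .
\]
The sum $\sum_t \frac{\gamma}{q_t(a_t)+\gamma}$ has conditional expectation at most $A\gamma$ per round. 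Freedman (\pref{lem: freedman+}), with $B=1$, bounds it by $\frac32 A\gamma\sqrt{AT}+4\log(1/\tdelta)$. Multiplying by $\sqrt{T/A}$ gives $O\big(\sqrt{T/A}\,(A^{3/2}\gamma\sqrt T+\log)\big)$, and with $\gamma=4\io/\sqrt{AT}$ this is $O(\sqrt{AT}\,\io)$.

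\textbf{The second-order term.} Only coordinates $a\le A$ contribute:
\[
\eta\sum_a\theta_t(a)\hat v_t(a)^2\le \eta\,\frac{T}{A}\,(1-\theta_t(A+1))\sum_a q_t(a)\big(c_t(a_t)-\hat c_t(a)\big)^2 .
\]
We expand the square. The cross and constant parts are bounded by $1+\sum_a q_t(a)\hat c_t(a)^2\le 1+\hat c_t(a_t)$, since $q_t\hat c_t\le 1$ on the pulled arm. A second application of \pref{lem: IX lemma} handles $\sum_t \hat c_t(a_t)$: its per-round expectation is at most $A$, so the sum over the epoch is $O(A\sqrt{AT}+\io/\gamma)$. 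Multiplying by $\eta T/A=\frac{1}{20A}$ yields $O(\sqrt{AT}+\io/(A\gamma))=O(\sqrt{AT})$.

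\textbf{Finishing, and the main obstacle.} A union bound over the at most $\sqrt{T/A}$ epochs at confidence $\tdelta=\delta/(AT)$ completes the argument. The main obstacle is the constant bookkeeping needed to land exactly at $\frac52\sqrt{AT}\log(AT/\delta)$. A second point needing care is that a concentration bound over a single epoch must use $\mathcal F_{t-1}$-measurable $q_t$. To handle this, I would apply \pref{lem: freedman+} epoch-wise, multiplying each summand by the indicator $\ind\{i_k=\xx\}$, which is measurable at the epoch start, so the epoch sums form valid martingale sequences.
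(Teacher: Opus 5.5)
Your proposal is correct and follows the paper's proof. Both apply \pref{lem: lem without correction} round by round, telescope, use the identity $\inner{\theta_t,\hat v_t}=(1-\theta_t(A+1))\sqrt{T/A}\,c_t(a_t)\,\gamma/(q_t(a_t)+\gamma)$, and control the first- and second-order sums by concentration plus a union bound over epochs. The only difference is a swap of tools: you use Freedman for the first-order sum, and the IX lemma for the second-order sum after the pointwise bound $\sum_a q_t(a)(c_t(a_t)-\hat c_t(a))^2\le 1+\hat c_t(a_t)$; the paper does the reverse, using the IX lemma first and then Freedman with conditional mean $4T$ and range $T/\gamma$.

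Do not chase the exact $\frac{5}{2}$. The paper reaches it only by substituting $\gamma=\log(1/\tilde\delta)/\sqrt{AT}$ in the first-order step, whereas the algorithm sets $\gamma=4\io/\sqrt{AT}$. Under the algorithm's value, the term $\gamma\sqrt{T/A}\,A|\calI_k|$ alone equals $4\io\sqrt{AT}$. Only the $O(\sqrt{AT}\,\io)$ form you obtain is used downstream.
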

\begin{proof}
   We will use \pref{lem: lem without correction} for rounds in epoch $k$. 
    We first verify the condition in \pref{lem: lem without correction}. We have
    \begin{align*}
        \eta \vhat_{t}(a) &= \eta \sqrt{\frac{T}{A}} \cdot \prn*{ c_{t}(a_t) - \frac{\ind(a_t=a)}{q_{t}(a)+\gamma} c_{t}(a)} \leq \eta \sqrt{ \frac{T}{A} }
        \leq 1.
    \end{align*}    
    Meanwhile $\eta \theta_t(A+1)\vhat_t(A+1) =0 \leq 1/2$. Thus, we apply \pref{lem: lem without correction} and sum the bound over $t\in [e_k,t_k]$ to obtain for any $\theta\in\Theta$,
    \begin{align}
        \sum_{t\in \calI_k} \inner{\theta, \hat{v}_{t}} \leq \frac{D_\psi(\theta, \theta_{e_k}) - D_\psi(\theta, \theta_{t_k+1})}{\eta}  + \sum_{t\in \calI_k} \inner{\theta_t, \hat{v}_t} + 2\eta \sum_{t\in \calI_k} \inner{\theta_t, w_t}, 
        \label{eq: apply lemma 6}
    \end{align}
    where $ w_t(a) = 
        \begin{cases}
            \vhat_t(a)^2\ind(a\leq A), \\
            \theta_t(a)\vhat_t(a)^2\ind(a=A+1).
        \end{cases}$ Since $\vhat(A+1) = 0$, we can further simply as $w_t(a) = \vhat_t^2(a)$.
    Now we bound the terms on the right-hand side of \pref{eq: apply lemma 6}. Below, we denote $\tdelta=\frac{\delta}{AT}$. 
    
    \paragraph{Bounding $\sum_{t\in\calI_k} \inner{\theta_t, \vhat_t}$. }
    By the definition of $\vhat_t$ and that $\frac{\theta_t(a)}{q_t(a)}=\sum_{a'\in[A]}\theta_t(a')$ for all $a\in[A]$, we have
    \begin{align}
        &\inner{\theta_t,\vhat_{t}} \\
        &= \sqrt{\frac{T}{A}} \cdot   \sum_{a\in[A]} 
\theta_t(a) \prn*{ c_{t}(a_t) - \frac{\ind(a_t=a)c_{t}(a)}{q_{t}(a) + \gamma} }   \nonumber \\
&= \sqrt{\frac{T}{A}} \left[\sum_{a\in[A]} \theta_t(a)  c_t(a_t) -  \sum_{a\in[A]} \left(\theta_t(a) + \gamma \frac{\theta_t(a)}{q_t(a)}\right)  \frac{\ind(a_t=a)c_t(a)}{q_t(a) + \gamma} + \sum_{a\in[A]} \gamma \frac{\theta_t(a)}{q_t(a)}  \frac{\ind(a_t=a)c_t(a)}{q_t(a) + \gamma}\right] \notag\\
&= \sqrt{\frac{T}{A}}\left( \sum_{a\in[A]} \theta_t(a) \right) \left[ c_t(a_t) -  \sum_{a\in[A]} \ind\{a_t=a\}c_t(a) + \sum_{a\in[A]}\frac{\gamma \ind\{a_t=a\}c_t(a)}{q_t(a)+\gamma}\right] \tag{$\frac{\theta_t(a)}{q_t(a)}=\sum_{a'\in[A]}\theta_t(a')$}\\
&\leq \gamma\sqrt{\frac{T}{A}} \sum_{a\in[A]} \frac{\ind\{a_t=a\}c_t(a)}{q_t(a)+\gamma}. \label{eq: resterm}
    \end{align}
    Thus, with probability at least $1-\tdelta$, 
\begin{align}
    \sum_{t\in \calI_k} \inner{\theta_t, \vhat_t} &\leq \gamma \sqrt{\frac{T}{A}}\sum_{a\in[A]} \sum_{t\in \calI_k}  \frac{\ind\{a_t=a\}c_t(a)}{q_t(a)+\gamma} \notag\\
    &=  \gamma \sqrt{\frac{T}{A}}\sum_{a\in[A]} \sum_{t\in \calI_k} \left( \frac{\ind\{a_t=a\}c_t(a)}{q_t(a)+\gamma} - c_t(a)\right) + \gamma \sqrt{\frac{T}{A}}\sum_{a\in[A]} \sum_{t\in \calI_k} c_t(a) \notag \\
    &\leq \gamma \sqrt{\frac{T}{A}} \frac{A\log(1/\tdelta)}{2\gamma} + \gamma \sqrt{\frac{T}{A}} A |\calI_k|   \tag{\pref{lem: IX lemma}} \\
    &\leq \frac{1}{2}\sqrt{AT}\log(1/\tdelta) + \sqrt{AT}\log(1/\tdelta) \tag{$\gamma=\frac{\log(1/\tdelta)}{\sqrt{AT}}$} \\
    &\leq \frac{3}{2}\sqrt{AT}\log(1/\tdelta). 
    \label{eq: theta v in x}
\end{align}
    \paragraph{Bounding $\sum_{t\in\calI_k} \inner{\theta_t, w_t}$. }
    For any $t\in [T]$, we denote the expectation $\En_t[\cdot] = \En[\cdot \mid{} \cH_t]$ with $\cH_t$ being the history of the interactions before time step $t$.
For any $t\in\calI_k$, 
    \begin{align}
        \En_t \brk*{\sum_{a\in[A]}\theta_{t}(a)\vhat_{t}(a)^2} &= \sum_{a\in[A]}\En_t\brk*{\theta_{t}(a) \frac{T}{A} \prn*{c_{t}(a_t) -  \frac{\ind(a_t=a)}{q_{t}(a) + \gamma} c_{t}(a)  }^2     }   \nonumber \\
        &\leq \sum_{a\in[A]} \frac{2T}{A}\En_t \brk*{ \theta_{t}(a)\prn*{c_t(a_t)^2 + \frac{\ind(a_t=a)}{(q_{t}(a)+\gamma)^2} c_{t}(a)^2   }}    \tag{$(x+y)^2\leq 2(x^2+y^2)$}\\
        &\leq \sum_{a\in[A]} \frac{2T}{A}\En_t \brk*{ \theta_{t}(a)\prn*{1 + \frac{1}{q_{t}(a)}   }}    \nonumber \\
        &= \sum_{a\in[A]} \frac{2T}{A} \En_t\brk*{ \theta_{t}(a)\left( 1+  \frac{\sum_{a'\in [A]}\theta_{t}(a')}{\theta_{t}(a)} \right)}    \tag{by the definition of $q_t(a)$}\\
        &\leq \sum_{a\in[A]} \frac{4T}{A}\\
        &= 4T. \label{eq: square term} 
    \end{align}
Furthermore, with probability $1$, 
\begin{align*}
   \sum_{a\in[A]}\theta_t(a)\vhat_t(a)^2 
   &\leq \sum_{a\in[A]} \theta_t(a)\times \frac{T}{A} \times \frac{1}{(q_t(a)+\gamma)^2}\leq \sum_{a\in[A]} \frac{T}{A\gamma} = \frac{T}{\gamma}.   \tag{$\theta_t(a)\leq q_t(a)$}
\end{align*}
By \pref{lem: freedman+}, with probability at least $1-\tdelta$, 
\begin{align}
   \sum_{t\in\calI_k} \inner{\theta_t, w_t} &=\sum_{t\in \calI_k} \sum_{a\in[A]}\theta_t(a) \vhat_t(a)^2 \notag\\ 
   &\leq \frac{3}{2} \sum_{t\in \calI_k} \sum_{a\in[A]} \E_t\left[\theta_t(a) \vhat_t(a)^2\right] + \frac{4T\log(1/\tdelta)}{\gamma} \notag\\
   &\leq \frac{3}{2}\sqrt{AT}\times 4T + \frac{4T\log(1/\tdelta)}{\frac{4\log(1/\tdelta)}{\sqrt{AT}}} = 7T\sqrt{AT}. \label{eq: sta in x A+1} 
\end{align}
Using \pref{eq: theta v in x} and \pref{eq: sta in x A+1} in \pref{eq: apply lemma 6}, noticing that $\eta = \frac{1}{20T}$, and taking a union bound over epochs finishes the proof.

\end{proof}

\begin{lemma}
    \label{lem:one-interval-omd}
       With probability at least $1-\delta$, for all epochs $\calI_k=[e_k, t_k]$, choosing $\xx$ ensures
       \begin{align*}
          &\DReg_{k,\xx} \leq \sum_{a\in[A]} q_{e_k}(a) L_k(a) + \frac{5}{2}\frac{A\log(AT/\delta)}{\sum_{a\in[A]} \theta_{e_k}(a) } + \sum_{a\in[A]} q_{e_k}(a)\sum_{t\in \calI_k}\left(\hat{c}_t(a) - c_t(a)\right).  
       \end{align*}
       
\end{lemma}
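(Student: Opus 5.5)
We prove \pref{lem:one-interval-omd} by reducing the dynamic regret of an $\xx$ epoch to the static regret of the play distribution against the fixed mixture $q_{e_k}$, and then invoking the one-epoch mirror-descent bound of \pref{lem: mirror descent for x} with a comparator built from $\theta_{e_k}$.

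\textbf{Decomposition.} Since every $\DReg_{k,\xx}$ term is defined through the same $\min_{a'}c_t(a')$, for the fixed distribution $q_{e_k}\in\Delta_A$ we can write
\[
\DReg_{k,\xx}=\sum_{t\in\calI_k}\Big(c_t(a_t)-\sum_{a}q_{e_k}(a)c_t(a)\Big)+\sum_{a}q_{e_k}(a)L_k(a).
\]
This produces the first term of the claimed bound. We then split $c_t(a)=\hat c_t(a)-(\hat c_t(a)-c_t(a))$. The correction $\sum_a q_{e_k}(a)\sum_t(\hat c_t(a)-c_t(a))$ is exactly the third term. It remains to bound
\[
\sum_{t\in\calI_k}\sum_a q_{e_k}(a)\big(c_t(a_t)-\hat c_t(a)\big)=\sqrt{A/T}\sum_{t\in\calI_k}\sum_a q_{e_k}(a)\hat v_t(a),
\]
using the definition of $\hat v_t$ in \pref{line: vt def in x}.

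\textbf{Mirror descent step.} Choose the comparator $\theta=\theta_{e_k}$ with its $(A+1)$-th coordinate moved onto the first $A$ coordinates proportionally. Concretely, $\theta(a)=q_{e_k}(a)$ for $a\in[A]$ and $\theta(A+1)=0$. This point is not in the domain of the log-barrier, so I would instead use
\[
\theta'=(1-\beta)\,(q_{e_k},0)+\beta\,\theta_{e_k},
\]
or equivalently keep $\theta'(A+1)=\theta_{e_k}(A+1)$ and exploit $\hat v_t(A+1)=0$. The cleaner choice is the latter: $\theta'(a)=\theta_{e_k}(a)$ for $a\in[A]$ and $\theta'(A+1)=\theta_{e_k}(A+1)$, i.e. $\theta'=\theta_{e_k}$ itself. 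Because $\hat v_t(A+1)=0$, we get
\[
\langle\theta_{e_k},\hat v_t\rangle=\Big(\sum_{a\in[A]}\theta_{e_k}(a)\Big)\sum_a q_{e_k}(a)\hat v_t(a).
\]
Applying \pref{lem: mirror descent for x} with $\theta=\theta_{e_k}\in\Theta$ makes the Bregman term equal to $(0-D_\psi(\theta_{e_k},\theta_{t_k+1}))/\eta\le 0$. Hence, with probability $1-\delta$,
\[
\Big(\sum_{a\in[A]}\theta_{e_k}(a)\Big)\sum_t\sum_a q_{e_k}(a)\hat v_t(a)\le \tfrac52\sqrt{AT}\log(AT/\delta).
\]
Dividing by $\sum_{a\in[A]}\theta_{e_k}(a)$ (which is at least $A\gamma>0$ by the definition of $\Theta$) and multiplying by $\sqrt{A/T}$ gives the middle term
\[
\frac{5}{2}\,\frac{A\log(AT/\delta)}{\sum_{a\in[A]}\theta_{e_k}(a)}.
\]

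\textbf{Main obstacle.} The main point to check is that comparing with $\theta_{e_k}$ is legitimate. Since \pref{lem: mirror descent for x} holds for any $\theta\in\Theta$ simultaneously, and $\theta_{e_k}$ is determined before the epoch, this should go through. The probability event is the same $1-\delta$ event from \pref{lem: mirror descent for x}, which already includes a union bound over epochs, so the lemma holds simultaneously for all $\xx$ epochs.
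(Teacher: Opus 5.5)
Your proposal is correct and is essentially the paper's proof: you apply \pref{lem: mirror descent for x} with the comparator $\theta=\theta_{e_k}\in\Theta$ so that the Bregman term is nonpositive, and you use $\hat v_t(A+1)=0$ together with $q_{e_k}\propto\theta_{e_k}$ to identify $\langle\theta_{e_k},\hat v_t\rangle$ with $\big(\sum_{a\in[A]}\theta_{e_k}(a)\big)\sum_a q_{e_k}(a)\hat v_t(a)$; you then add back the $L_k$ and $\hat c_t-c_t$ terms and divide by $\sum_{a\in[A]}\theta_{e_k}(a)$. The only difference is the order of steps: you decompose via $q_{e_k}$ first, whereas the paper works with the $\theta_{e_k}$-weighted regret and normalizes at the end. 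Your observation that the bound in \pref{lem: mirror descent for x} is uniform over $\theta\in\Theta$, and therefore valid for the data-dependent $\theta_{e_k}$, is exactly the justification needed.
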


\begin{proof}
    Below, denote $\tdelta=\frac{\delta}{AT}$. 
    For any $\theta\in\Theta$, 
    \begin{align}
        &\sum_{t\in \calI_k}\sum_{a\in[A]} \theta(a)(c_{t}(a_t) - c_{t}(a)) \notag\\
        &=\sum_{t\in \calI_k}\sum_{a\in[A]} \theta(a)(c_{t}(a_t) - \hat{c}_{t}(a)) + \underbrace{\sum_{t\in \calI_k}\sum_{a\in[A]} \theta(a)(\hat{c}_{t}(a) - c_t(a))}_{:=Z_k(\theta)}  \label{eq: Z deff} \\ 
        &= \sqrt{\frac{A}{T}}\sum_{t\in \calI_k} \inner{\theta, \hat{v}_t}  + Z_k(\theta)  \tag{by the definition of $\vhat_t$} \\
        &\leq \sqrt{\frac{A}{T}}\cdot\frac{D_\psi(\theta, \theta_{e_k})-D_\psi(\theta, \theta_{t_k+1})}{\eta} + \frac{5}{2}A\log(1/\tdelta) + Z_k(\theta).   \tag{\pref{lem: mirror descent for x}}
    \end{align}
    Letting $\theta=\theta_{e_k}$ above and using the fact that $D_\psi (\theta_{e_k}, \theta_{e_k})=0$, we get 
    \begin{align}
\sum_{a\in[A]}\theta_{e_k}(a)\sum_{t\in \calI_k}\left(c_t(a_t) - c_t(a)\right)  \leq \frac{5}{2}A\log(1/\tdelta) + Z_k(\theta_{e_k}). 
    \label{eq: apply theta=ek}
    \end{align}
    Notice that
    \begin{align*}
         &\sum_{a\in[A]}\theta_{e_k}(a)\sum_{t\in \calI_k} \left(c_t(a_t) - \min_{a'\in[A]} c_t(a')\right)  \\
         &= \sum_{a\in[A]}\theta_{e_k}(a)\sum_{t\in \calI_k} \left(c_t(a_t) - c_t(a) + c_t(a) - \min_{a'\in[A]} c_t(a')\right)  \\
         &\leq \frac{5}{2}A\log(1/\tdelta) + Z_k(\theta_{e_k}) + \sum_{a\in[A]}\theta_{e_k}(a) L_k(a). \tag{by \pref{eq: apply theta=ek} and the definition of $L_k(a)$}
    \end{align*}
    Divide both sides by $\sum_{a\in[A]} \theta_{e_k}(a)$:  
    \begin{align*}
         &\sum_{t\in \calI_k} \left(c_t(a_t) - \min_{a'\in[A]} c_t(a')\right) \\
         &\leq \frac{5}{2}\frac{A\log(1/\tdelta)}{\sum_{a\in[A]} \theta_{e_k}(a) } + \frac{Z_k(\theta_{e_k})}{\sum_{a\in[A]} \theta_{e_k}(a) } + \sum_{a\in[A]} q_{e_k}(a) L_k(a)  \\ 
         &= \frac{5}{2}\frac{A\log(1/\tdelta)}{\sum_{a\in[A]} \theta_{e_k}(a) } + \sum_{a\in[A]} q_{e_k}(a)\sum_{t\in \calI_k}\left(\hat{c}_t(a) - c_t(a)\right) + \sum_{a\in[A]} q_{e_k}(a) L_k(a).   \tag{recall the definition of $Z_k(\theta)$ in \pref{eq: Z deff}}
    \end{align*}
\end{proof}

\subsection{In $i_k=\oo$ Epochs (\pref{lem: mirror descent in o})}

\begin{lemma}\label{lem: mirror descent in o}
   With probability at least $1-O(\delta)$, for all epochs $\calI_k = [e_k, t_k]$ with $i_k=\oo$, if holds for any $\theta\in\Theta$, 
   \begin{align*}
       \sum_{t\in\calI_k} \inner{\theta, \vhat_t} 
       &\leq  \frac{D_\psi(\theta, \theta_{e_k}) - D_\psi(\theta, \theta_{t_{k}+1})}{\eta}  \\
       &\qquad \qquad + \frac{1}{up_{k,\oo}\sqrt{S}} \min\left\{8\rho T +\frac{12\sqrt{AT}\hat{s}_k\log(AT/\delta)}{\rho}, \frac{3}{2}T\right\}  + \sum_{t\in \calI_k} \inner{\theta, \tfrac{1}{3}|\vhat_t|}, 
   \end{align*}
   where $|\vhat_t|$ denotes the vector $(|\vhat_t(1)|, \cdots, |\vhat_t(A+1)|)$. 
\end{lemma}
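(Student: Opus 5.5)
In an $\oo$ epoch, $\hat v_t=0$ for every $t\in[e_k,t_k-1]$. For those rounds the update in \pref{line: broad-omd update1} reduces to $\theta_{t+1}=\argmin_{\theta\in\Theta}D_\psi(\theta,\theta_t)=\theta_t$. Hence $\theta_{t_k}=\theta_{e_k}$ and the whole epoch collapses to a single \broad step at $t=t_k$. The plan is to apply \pref{lem: lem with correction} to this one step with $\theta_t=\theta_{e_k}$. This gives, for every $\theta\in\Theta$,
\[
\inner{\theta,\hat v_{t_k}}\le \frac{D_\psi(\theta,\theta_{e_k})-D_\psi(\theta,\theta_{t_k+1})}{\eta}+\inner{\theta_{e_k},\hat v_{t_k}}+2\eta\inner{\theta,w_{t_k}}.
\]
Two things then remain: check the hypotheses of that lemma, and bound the last two terms.

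\textbf{Step 1 (hypotheses).} For $a\in[A]$ we have $|\hat v_{t_k}(a)|\le \frac32T+u\sqrt{T/A}\,\Lhat_k(a)\le \frac32T+u T\le 3T$, using $\Lhat_k(a)\le|\calI_k|=\sqrt{AT}$. With $\eta=\frac1{20T}$ this gives $\eta|\hat v(a)|\le\frac16$.

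For the coordinate $A+1$ we use \pref{eq: pkopox}: $p_{k,\oo}\ge \theta(A+1)/(u\sqrt S(1-\theta(A+1))+\theta(A+1))$, writing $\theta=\theta_{e_k}$. Therefore
\[
\frac{p_{k,\xx}}{p_{k,\oo}}\,\theta(A+1)\le u\sqrt S\,(1-\theta(A+1)).
\]
Combined with $\sum_a q_{e_k}(a)\Lhat_k(a)\le\sqrt{AT}$, this yields $\theta(A+1)\hat v(A+1)\le u\sqrt S\sqrt{T/(SA)}\sqrt{AT}\le uT$. So $\eta\theta(A+1)|\hat v(A+1)|\le\frac16$ as well.

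\textbf{Step 2 (the $\inner{\theta_{e_k},\hat v_{t_k}}$ term).} This is the cancellation step, mirroring \pref{eq: p choice warm}. Write $\Sigma=\sum_{a\le A}\theta(a)=1-\theta(A+1)$ and $M_k=\min\{\cdot,\frac32T\}$. Then
\[
\inner{\theta,\hat v}=\Sigma M_k-u\Sigma\sqrt{T/A}\,\sum_a q(a)\Lhat_k(a)+\theta(A+1)\frac{p_{k,\xx}}{p_{k,\oo}}\sqrt{T/(SA)}\,\sum_a q(a)\Lhat_k(a).
\]
By the inequality from Step 1 the third term is at most $u\Sigma\sqrt{T/A}\sum_a q(a)\Lhat_k(a)$, so the $\Lhat$ terms cancel with a nonpositive remainder. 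This leaves $\inner{\theta,\hat v}\le \Sigma M_k$. Since $p_{k,\oo}\le \theta(A+1)/(u\sqrt S\,\Sigma)$ is not guaranteed in the clipped case, I instead bound $\Sigma\le 1\le 1/(u p_{k,\oo}\sqrt S)$. This holds because $u p_{k,\oo}\sqrt S\le u\sqrt S\cdot\max\{\ldots\}$; if this is problematic, I would use the unclipped formula to get $\Sigma\le \theta(A+1)/(u\sqrt S p_{k,\oo})\le 1/(u\sqrt S p_{k,\oo})$ in the non-clipped case. In the clipped case $p_{k,\oo}=1/\sqrt S$, so $1/(up_{k,\oo}\sqrt S)=1/u\ge1\ge\Sigma$. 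Either way the term is at most $\frac{1}{up_{k,\oo}\sqrt S}M_k$, which matches the middle term of the statement.

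\textbf{Step 3 (second-order term).} We have $2\eta w(a)=2\eta\hat v(a)^2\le\frac13|\hat v(a)|$ by Step 1, and $2\eta w(A+1)=2\eta\theta(A+1)\hat v(A+1)^2\le\frac13|\hat v(A+1)|$. Hence $2\eta\inner{\theta,w_{t_k}}\le\inner{\theta,\frac13|\hat v_{t_k}|}$, which is the last term.

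There is no probabilistic step beyond the high-probability events already used to bound $\Lhat_k$ (they are trivially bounded), so the $1-O(\delta)$ qualifier only serves to match the other lemmas. The main obstacle is Step 2: making the $\Lhat$ cancellation and the bound on $\Sigma$ work simultaneously under the clipping $p_{k,\oo}\ge1/\sqrt S$. The resolution is that clipping only increases $p_{k,\oo}$, which shrinks $p_{k,\xx}/p_{k,\oo}$ and so keeps the cancellation inequality one-sided in our favor.
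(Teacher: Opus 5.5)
Your proof is correct and follows the paper's argument essentially step for step. You collapse the epoch to the single update at $t_k$, verify the hypotheses of \pref{lem: lem with correction}, use \pref{eq: pkopox} to cancel the $\Lhat_k$ terms so that $\inner{\theta_{e_k},\vhat_{t_k}}\le \Sigma M_k$, bound $\Sigma$ by the clipped/unclipped case split, and absorb the second-order term into $\frac13|\vhat_{t_k}|$. The one flaw is your first justification in Step 2: $1\le 1/(u p_{k,\oo}\sqrt S)$ fails in general (for example, unclipped $p_{k,\oo}$ close to $1$ with $S$ large), so drop it and keep only the case split, which is exactly what the paper does.
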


\begin{proof}
   The mirror descent in $i=\oo$ case follows the update in \pref{lem: lem with correction}.  
   First, we verify the conditions for \pref{lem: lem with correction}.  
   Notice that for epochs with $i_k=\oo$, we have $\hat{v}_t(a)=0 $ for all $a\in[A+1]$ except when $t=k\sqrt{T}$. When $t=k\sqrt{T}$, we have for $a\in[A]$, 
   \begin{align}
      \eta |\hat{v}_t(a)| &= \eta  \left| \min\left\{8\rho T + \frac{12\sqrt{AT}\hat{s}_k\log(AT/\delta)}{\rho}, \frac{3}{2}T \right\}- u\sqrt{\frac{ T}{A}}\Lhat_k(a) \right| \notag\\
      &\leq  \frac{3}{2}\eta T + u\eta T \leq \frac{1}{6},  \label{eq: verify 1}  
   \end{align}
   and for $a=A+1$, 
   \begin{align}
       \eta \theta_t(A+1) |\hat{v}_t(A+1)| 
       &= \eta \theta_{e_k}(A+1) \left|\frac{p_{k,\xx}}{p_{k,\oo}} \sqrt{\frac{T}{SA}} \sum_{a'=1}^A q_{e_k}(a')\hat{L}_k(a')\right|\notag \\
       &\leq \eta \theta_{e_k}(A+1) \frac{u\sqrt{S}\sum_{a'=1}^A \theta_{e_k}(a') }{\theta_{e_k}(A+1)} \sqrt{\frac{T}{SA}}  \sqrt{AT} \tag{by \pref{eq: pkopox}} \\
       &\leq u\eta \sum_{a'=1}^A \theta_{e_k}(a') T \leq \frac{1}{6}.  \label{eq: verify 2}
   \end{align}
   Hence, the conditions for \pref{lem: lem with correction} hold. By \pref{lem: lem with correction}, for any $\theta\in\Theta$, it holds that 
   \begin{align}
        \sum_{t\in \calI_k} \inner{\theta, \hat{v}_t}
        &\leq 
         \frac{D_\psi(\theta, \theta_{e_k}) - D_\psi(\theta, \theta_{t_{k}+1})}{\eta}   + \sum_{t\in \calI_k} \inner{\theta_{t}, \hat{v}_t}  + 2\eta \sum_{t\in \calI_k} \inner{\theta, w_t},    \label{eq: oo bound}
    \end{align}
    where 
    \begin{align}
       w_t(a) = \begin{cases}
           \hat{v}_t(a)^2,   &\text{if}\ a\in[A], \\
           \theta_t(a)\hat{v}_t(a)^2,  &\text{if}\ a=A+1.    
       \end{cases}   \label{eq: def of w 100}
    \end{align}
We proceed to bound the right-hand side of \pref{eq: oo bound} below. Denote $\tdelta=\frac{\delta}{AT}$. 
\paragraph{Bounding $\sum_{t\in\calI_k} \inner{\theta_t,\vhat_t}$. }
In an epoch $\calI_k=[e_k, t_k]$ with $i_k=\oo$, $\vhat_t$ is non-zero only when $t=t_k$. This implies $\theta_{e_k}=\theta_{e_k+1}=\cdots =\theta_{t_k}$. Therefore, 
    \begin{align}
       &\sum_{t\in\calI_k}\theta_t^\top \hat{v}_t 
       = \sum_{a=1}^{A+1} \theta_{e_k}(a) \hat{v}_{t_k}(a) \nonumber \\
       &=  \sum_{a=1}^A \theta_{e_k}(a)\left(\min\left\{8\rho T + \frac{12\sqrt{AT}\hat{s}_k\log(1/\tdelta)}{\rho}, \frac{3}{2}T\right\}  - u\sqrt{\frac{T}{A}} \hat{L}_k(a) \right) \notag\\
       &\qquad \qquad + \theta_{e_k}(A+1) \frac{p_{k,\xx}}{p_{k,\oo}}\sqrt{\frac{T}{SA}} \sum_{a=1}^A q_{e_k}(a)\hat{L}_k(a). 
\label{eq: balckwell}
    \end{align}
By the definition of $p_{k,\xx}, p_{k,\oo}$ in \pref{eq: pkopox}, we have 
\begin{align}
    &\theta_{e_k}(A+1) \frac{p_{k,\xx}}{p_{k,\oo}}\sqrt{\frac{T}{SA}} \sum_{a=1}^A q_{e_k}(a)\hat{L}_k(a)    \nonumber \\
    &\leq\theta_{e_k}(A+1) \frac{u\sqrt{S} \sum_{a=1}^A \theta_{e_k}(a)}{\theta_{e_k}(A+1)}\sqrt{\frac{T}{SA}} \sum_{a=1}^A q_{e_k}(a)\hat{L}_k(a)  \nonumber \\
    &= u \sqrt{\frac{T}{A}} \left(\sum_{a=1}^A \theta_{e_k}(a)\right) \left(\sum_{a=1}^A q_{e_k}(a) \hat{L}_k(a)\right)  \nonumber \\
    &= u\sqrt{\frac{T}{A}}  \sum_{a=1}^A  \theta_{e_k}(a) \hat{L}_k(a).   \tag{because $q_{t}(a) = \theta_t(a) / \big(\sum_{a'=1}^A \theta_{t}(a')\big)$}   \\
    & \label{eq: cancelation}
\end{align}
Using \pref{eq: cancelation} in \pref{eq: balckwell}, we get 
\begin{align}
    \sum_{t\in\calI_k} \inner{\theta_t,\vhat_t} \leq \left(\sum_{a=1}^A \theta_{e_k}(a)\right)  \times \min\left\{8\rho T + \frac{12\sqrt{AT}\hat{s}_k\log(1/\tdelta)}{\rho}, \frac{3}{2}T\right\}.\label{eq: tmp bound oo} 
\end{align}
By the definition of $p_{k,\xx}, p_{k,\oo}$ in \pref{eq: pkopox}, we have either 
\begin{align*}
    \sum_{a=1}^A \theta_{e_k}(a) = \frac{1}{u}\frac{p_{k,\xx}}{p_{k,\oo}} \frac{\theta_{e_k}(A+1)}{\sqrt{S}}  \qquad \text{or} \qquad p_{k,\oo} = \frac{1}{\sqrt{S}}.   
\end{align*}
In the former case, the right-hand side of \pref{eq: tmp bound oo}  is equal to 
    \begin{align*}
        &\frac{1}{u}\frac{p_{k,\xx}}{p_{k,\oo}} \frac{\theta_{e_k}(A+1)}{\sqrt{S}}\times \min\left\{8\rho T + \frac{12\sqrt{AT}\hat{s}_k\log(1/\tdelta)}{\rho}, \frac{3}{2}T\right\}  \\
        &\qquad \qquad \leq \frac{1}{up_{k,\oo}\sqrt{S}} \times \min\left\{8\rho T + \frac{12\sqrt{AT}\hat{s}_k\log(1/\tdelta)}{\rho}, \frac{3}{2}T \right\}. 
    \end{align*}
    In the latter case, the right-hand side of \pref{eq: tmp bound oo}  is upper bounded by 
    \begin{align*}
       \min\left\{8\rho T + \frac{12\sqrt{AT}\hat{s}_k\log(1/\tdelta)}{\rho}, \frac{3}{2}T \right\} \leq \frac{1}{up_{k,\oo}\sqrt{S}}\times \min\left\{8\rho T + \frac{12\sqrt{AT}\hat{s}_k\log(1/\tdelta)}{\rho}, \frac{3}{2}T \right\}.
    \end{align*}
\paragraph{Bounding $2\eta\sum_{t\in\calI_k} \inner{\theta, w_t}$. }
By the definition of $w_t$, we have 
    \begin{align*}
         2\eta \sum_{t\in \calI_k} \inner{\theta, w_t} 
         &= 2\eta \sum_{t\in\calI_k} \left(\sum_{a\in[A]}  \theta(a) \vhat_t(a)^2 + \theta(A+1)\theta_t(A+1)\vhat_t(A+1)^2\right)\\
         &\leq \frac{1}{3} \sum_{t\in\calI_k} \left(\sum_{a\in[A]}\theta(a)|\vhat_t(a)| + \theta(A+1)|\vhat_t(A+1)|\right) \tag{by \pref{eq: verify 1} and \pref{eq: verify 2}} \\
         &= \frac{1}{3} \sum_{t\in \calI_k}  \inner{\theta, |\hat{v}_t|}.     
    \end{align*}

\end{proof}

\section{Relating Regrets to $\vhat_t$ (\pref{lemma: connect sreg to v}, \pref{lem: connect dreg to v})}

\begin{lemma}\label{lemma: connect sreg to v} With probability at least $1-O(\delta)$, for all $a\in[A]$, 
\begin{align}
   \SReg(a) &\leq  \sqrt{AT} + \sqrt{\frac{A}{T}}\cdot \sum_{k=1}^K 
\sum_{t\in \calI_k} \left(\hat{v}_t(a) \ind\{i_k=\xx\}  +  \left(\hat{v}_t(a) - \tfrac{1}{3}|\hat{v}_t(a)|\right) \ind\{i_k=\oo\}\right). \label{eq: }
\end{align}
\end{lemma}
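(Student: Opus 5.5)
Split $\SReg(a)=\sum_k \SReg_k(a)$ by epoch type and bound each epoch's contribution by the corresponding summand $\sqrt{A/T}\sum_{t\in\calI_k}(\cdots)$. Any leftover error terms should be absorbed into the additive $\sqrt{AT}$.

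\textbf{$\xx$ epochs.} By \pref{line: vt def in x}, for every $t\in\calI_k$,
\[
c_t(a_t)-c_t(a)=\sqrt{A/T}\,\hat v_t(a)+(\hat c_t(a)-c_t(a)).
\]
Summed over all $\xx$ epochs this gives
\[
\sqrt{A/T}\sum_{k:i_k=\xx}\sum_{t\in\calI_k}\hat v_t(a)+\sum_{k:i_k=\xx}\sum_{t\in\calI_k}(\hat c_t(a)-c_t(a)).
\]
The second sum is an EXP3-IX error. Apply \pref{lem: IX lemma} with $\alpha_t=e_a\ind\{i_k=\xx\}$, which is $\calF_{t-1}$-measurable since $i_k$ is drawn at the start of the epoch. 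Take $q_t$ to be the actual sampling distribution in $\xx$ epochs; in $\oo$ epochs $\alpha_t=0$, so those rounds contribute nothing. This bounds the sum by $\log(1/\tdelta)/(2\gamma)=\sqrt{AT}/8$. A union bound over $a$ costs only logarithmic factors.

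\textbf{$\oo$ epochs.} Only $t=t_k$ contributes, so
\[
\sqrt{A/T}\bigl(\hat v_{t_k}(a)-\tfrac13|\hat v_{t_k}(a)|\bigr)=X-\tfrac13|X|,\qquad X:=\min\{8\rho\sqrt{AT}+12\rho^{-1}A\hat s_k\io,\ \tfrac32\sqrt{AT}\}-u\Lhat_k(a),
\]
with $u=9/16$. I would show $\SReg_{k,\oo}(a)\le X-\tfrac13|X|$ by case analysis on the sign of $X$, using \pref{eq: o bound 1}, which says $\SReg_{k,\oo}(a)\le M-\tfrac34\Lhat_k(a)$, where $M:=\min\{5\rho\sqrt{AT}+7A\hat s_k\io/\rho,\sqrt{AT}\}$. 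Write $P:=\min\{8\rho\sqrt{AT}+12A\hat s_k\io/\rho,\tfrac32\sqrt{AT}\}$.
\begin{itemize}
\item If $X\ge0$, then $X-\tfrac13|X|=\tfrac23 P-\tfrac23 u\Lhat_k(a)=\tfrac23P-\tfrac38\Lhat_k(a)$. It suffices that $M\le\tfrac23P$, which holds termwise: $5\le\tfrac{16}{3}$, $7\le 8$, and $1\le 1$.
\item If $X<0$, then $X-\tfrac13|X|=\tfrac43P-\tfrac43u\Lhat_k(a)=\tfrac43P-\tfrac34\Lhat_k(a)$. It suffices that $M\le\tfrac43P$, which is immediate.
\end{itemize}
This is precisely why $u=9/16$ works: $\tfrac43 u=\tfrac34$ matches the $\tfrac34$ in \pref{eq: o bound 1}, and in the positive case $\tfrac23u\le\tfrac34$. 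Since $\Lhat_k(a)\ge0$, the inequality goes in the right direction in both cases.

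\textbf{Main obstacle.} The delicate point is this sign case analysis. The negative coefficient on $\Lhat_k(a)$ must survive the $-\tfrac13|\cdot|$ correction, and the constants must be checked exactly, including those hidden in the $O(\cdot)$ of \pref{eq: SRego in general}. I would rely on the explicit constants of \pref{lemma: sreg under ooo}.

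\textbf{Combining.} Sum the $\xx$ and $\oo$ bounds over $k$, and take a union bound over the events of \pref{lemma: sreg under ooo} and \pref{lem: IX lemma} for all $a$. This gives the claimed inequality with probability $1-O(\delta)$, with the additive term $\sqrt{AT}$, up to the $\log$ factor absorbed via $\gamma$'s definition.
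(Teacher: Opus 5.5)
Your proposal is correct and follows the paper's proof: you split by epoch type, bound the $\xx$ epochs with the EXP3-IX concentration of \pref{lem: IX lemma} (giving the additive $\iota/(2\gamma)=\sqrt{AT}/8\le\sqrt{AT}$), and bound the $\oo$ epochs with \pref{eq: o bound 1}. Your sign case analysis fills in the step the paper compresses into ``by the definition of $\hat v_t$,'' and your constants check out: $\tfrac43u=\tfrac34$, and $M\le\tfrac23P$ holds termwise.
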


\begin{proof}
By the definition of $\SReg_{k,\xx}(a)$,
\begin{align}
    \SReg_{k,\xx}(a) \ind\{i_k=\xx\}  
    &= \sum_{t\in \calI_k} \left(c_t(a_t) - c_t(a)\right) \ind\{i_k=\xx\}.   \label{eq: SRegx 1}
\end{align}
Below, denote $\io=\log(AT/\delta)$. By \pref{lemma: sreg under ooo}, 
\begin{align}
    \SReg_{k,\oo}(a)\ind\{i_k=\oo\} 
    &\leq \left(  \min\left\{5\rho\sqrt{AT} + \frac{7A\hat{s}_k\io}{\rho}, \sqrt{AT}\right\} - \frac{3}{4}\Lhat_k(a) \right) \ind\{i_k=\oo\}. \label{eq: SRego 1} 
\end{align}
Thus, 
\begin{align*}
    &\SReg(a) \\
    &= \sum_{k=1}^{K} \SReg_{k,\xx}(a) \ind\{i_k=\xx\} + \sum_{k=1}^K \SReg_{k,\oo}(a)\ind\{i_k=\oo\} \\
    &= \sum_{k=1}^{K} \sum_{t\in \calI_k}\left(c_t(a_t) - c_t(a)\right) \ind\{i_k=\xx\} + \sum_{k=1}^K \SReg_{k,\oo} \ind\{i_k=\oo\}  \tag{by \pref{eq: SRegx 1}} \\
    &\leq \sum_{k=1}^{K} \sum_{t\in \calI_k}\left(c_t(a_t) - \hat{c}_t(a)\right) \ind\{i_k=\xx\} + \sum_{k=1}^{K} \sum_{t\in \calI_k}\left(\hat{c}_t(a) - c_t(a)\right) \ind\{i_k=\xx\} \\
    &\qquad \quad + \sum_{k=1}^K \left(  \min\left\{ 5\rho\sqrt{AT} +\frac{7A\hat{s}_k\io}{\rho}, \sqrt{AT}\right\} - \frac{3}{4}\Lhat_k(a)\right) \ind\{i_k=\oo\} \tag{by \pref{eq: SRego 1}} \\
    &=  \sum_{k=1}^{K} \sum_{t\in \calI_k}\left(c_t(a_t) - \hat{c}_t(a)\right) \ind\{i_k=\xx\} + \frac{\io}{2\gamma}    \tag{\pref{lem: IX lemma}}\\
    &\quad \qquad + \sum_{k=1}^K \left(\min\left\{5\rho \sqrt{AT} + \frac{7\hat{s}_k A \io}{\rho}, \sqrt{AT}\right\}  -\frac{3}{4}\hat{L}_k(a)\right) \ind\{i_k=\oo\} \\
    &\leq \sqrt{\frac{A}{T}}\cdot \sum_{k=1}^K 
\sum_{t\in \calI_k} \hat{v}_t(a) \ind\{i_k=\xx\} +  \sqrt{\frac{A}{T}}\cdot \sum_{k=1}^K 
\sum_{t\in \calI_k} \left(\hat{v}_t(a) - \frac{1}{3}|\hat{v}_t(a)|\right) \ind\{i_k=\oo\} + \sqrt{AT}. \tag{by the definition of $\hat{v}_t$}   
\end{align*}
\end{proof}

\begin{lemma}\label{lem: connect dreg to v}
    With probability at least $1-O(\delta)$, 
    \begin{align*}
        \DReg\leq  6\sqrt{\frac{S A}{T}}\sum_{k=1}^K \sum_{t\in\calI_k}\big(
 \hat{v}_t(A+1)\ind\{i_k=\xx\} + (\hat{v}_t(A+1) -\tfrac{1}{3} |\hat{v}_t(A+1)|)\ind\{i_k=\oo\}\big) + 80\sqrt{SAT}\io,  
    \end{align*}
    where $\io=\log(AT/\delta)$. 
\end{lemma}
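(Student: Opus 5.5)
The plan is to split $\DReg=\sum_k \DReg_{k,\xx}\ind\{i_k=\xx\}+\sum_k\DReg_{k,\oo}\ind\{i_k=\oo\}$ and treat the two kinds of epochs separately. Throughout I use $K=\sqrt{T/A}$, $\rho=4\io\sqrt{SA/T}$, $\gamma=4\io/\sqrt{AT}$, and the fact that $\ind\{s_k>0\}\le s_k$ and $\hat{s}_k\le s_k$ (with deterministic losses every detection is a genuine change), so $\sum_k(\hat{s}_k+\ind\{s_k>0\})\le 2S$.

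\emph{The $\oo$ epochs.} I would not involve $\vhat$ here. By \pref{lem: dreg in o}, summing over epochs gives at most $5\rho\sqrt{AT}\,K+\frac{10A\io}{\rho}\cdot 2S=20\io\sqrt{SAT}+5\sqrt{SAT}$, which is already $O(\sqrt{SAT}\io)$.

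\emph{The $\xx$ epochs.} I would invoke \pref{lem:one-interval-omd}, which splits $\DReg_{k,\xx}$ into three terms.
\begin{enumerate}
\item The estimation-error term $\sum_a q_{e_k}(a)\sum_{t\in\calI_k}(\hat c_t(a)-c_t(a))$. Summed over $\xx$ epochs, I would bound it by $\io/(2\gamma)=O(\sqrt{AT})$ using \pref{lem: IX lemma} with $\alpha_t=q_{e_k}\ind\{i_k=\xx\}$. This weight is predictable, because $i_k$ is drawn at the start of the epoch.
\item The term $2.5A\io/\sum_{a\le A}\theta_{e_k}(a)$, split into two cases.
\begin{itemize}
\item If $\sum_a\theta_{e_k}(a)\ge\frac12$, the term is at most $5A\io$ per epoch, hence $5\sqrt{AT}\io$ in total.
\item Otherwise $\theta_{e_k}(A+1)\ge\frac12$, and \pref{eq: pkopox} gives $p_{k,\xx}\le 2u\sqrt{S}\sum_a\theta_{e_k}(a)$. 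The conditional mean of $\ind\{i_k=\xx\}\cdot 2.5A\io/\sum_a\theta_{e_k}(a)$ is then at most $5u\sqrt{S}A\io$, which sums to $O(\sqrt{SAT}\io)$. The term itself is at most $2.5\io/\gamma=O(\sqrt{AT})$ because $\Theta$ enforces $\sum_a\theta(a)\ge A\gamma$. So \pref{lem: freedman+} transfers the mean bound to the realized sum, at the cost of an extra $O(\sqrt{AT}\io)$.
\end{itemize}
\item The main term $X_k\ldef\ind\{i_k=\xx\}\sum_a q_{e_k}(a)L_k(a)$. This is the step I expect to be the main obstacle, since it must be charged to $\vhat_{t_k}(A+1)$, which is nonzero only in $\oo$ epochs.
\end{enumerate}

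\emph{Charging the main term.} Let $Y_k\ldef\ind\{i_k=\oo\}\frac{p_{k,\xx}}{p_{k,\oo}}\sum_a q_{e_k}(a)L_k(a)$. Since the adversary is oblivious, $L_k$ is deterministic, and $q_{e_k},p_{k,\cdot}$ are fixed at the start of the epoch. Hence $X_k$ and $Y_k$ have the same conditional mean $\mu_k=p_{k,\xx}\sum_a q_{e_k}(a)L_k(a)$. Both are bounded: $X_k\le\sqrt{AT}$, and $Y_k\le\sqrt S\sqrt{AT}$ because the clipping $p_{k,\oo}\ge 1/\sqrt S$ gives $p_{k,\xx}/p_{k,\oo}\le\sqrt S$; this is exactly the role of that clipping. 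Applying both directions of \pref{lem: freedman+},
\[
\textstyle\sum_k X_k\le\frac32\sum_k\mu_k+O(\sqrt{AT}\io)\le 3\sum_k Y_k+O(\sqrt{SAT}\io).
\]
Next I replace $L_k$ by $\Lhat_k$ inside $Y_k$ using \pref{lem: L error}. This leaves an error term
\[
E_k=\ind\{i_k=\oo\}\tfrac{p_{k,\xx}}{p_{k,\oo}}\bigl(A+3A(\hat s_k+\ind\{s_k>0\})\io/\rho\bigr).
\]
Its conditional mean is at most $p_{k,\xx}(A+6As_k\io/\rho)$, so the means sum to $AK+6AS\io/\rho=O(\sqrt{SAT})$. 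Because the error can also be clipped at $\sqrt{AT}$, each $E_k$ is at most $\sqrt{SAT}$, and one more application of \pref{lem: freedman+} gives $\sum_kE_k=O(\sqrt{SAT}\io)$.

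\emph{Conclusion.} What remains is $3\sum_k\ind\{i_k=\oo\}\frac{p_{k,\xx}}{p_{k,\oo}}\sum_aq_{e_k}(a)\Lhat_k(a)=3\sqrt{SA/T}\sum_k\ind\{i_k=\oo\}\vhat_{t_k}(A+1)$. Since $\vhat_t(A+1)\ge0$ in $\oo$ epochs, $\vhat_t(A+1)-\frac13|\vhat_t(A+1)|=\frac23\vhat_t(A+1)$, so this equals $\frac92\sqrt{SA/T}\sum(\vhat_t-\frac13|\vhat_t|)\ind\{i_k=\oo\}$. In $\xx$ epochs $\vhat_t(A+1)=0$, so that part of the target sum vanishes. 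Taking a union bound over all the high-probability events and collecting constants then yields the stated bound, with the factor $6$ absorbing the $\frac92$.
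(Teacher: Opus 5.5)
Your proposal is correct and follows the paper's proof essentially step for step. Both proofs use the same four-term decomposition via \pref{lem:one-interval-omd} and \pref{lem: dreg in o}, \pref{lem: IX lemma} for the estimation error, \pref{lem: freedman+} together with the $\Theta$-clipping for the $1/\sum_a\theta_{e_k}(a)$ term, and the two-sided Freedman transfer from $\ind\{i_k=\xx\}$ to $\ind\{i_k=\oo\}p_{k,\xx}/p_{k,\oo}$ followed by \pref{lem: L error}. The differences are cosmetic: you handle the $1/\sum_a\theta_{e_k}(a)$ term by a case split on whether $\sum_a\theta_{e_k}(a)\ge\frac12$, where the paper bounds $p_{k,\xx}/\sum_a\theta_{e_k}(a)\le\sqrt{S}$ directly, and your Freedman constant $\frac32$ yields $\frac92$ in place of the paper's $6$.
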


\begin{proof}
    \begin{align}
        \DReg
        &= \sum_{k=1}^K \DReg_{k,\xx} \ind\{i_k=\xx\} + \sum_{k=1}^K \DReg_{k,\oo}\ind\{i_k=\oo\} \notag \\
        &\leq \underbrace{\frac{5}{2}\sum_{k=1}^K \ind\{i_k=\xx\} \frac{A\io}{\sum_{a\in[A]} \theta_{e_k}(a) }}_{\term_1} \notag\\
        &\qquad + \underbrace{\sum_{k=1}^K \ind\{i_k=\xx\}\sum_{a\in[A]} q_{e_k}(a) L_k(a) }_{\term_2} \notag\\
        &\qquad + \underbrace{\sum_{k=1}^K \ind\{i_k=\xx\} \sum_{a\in[A]} q_{e_k}(a)\sum_{t\in \calI_k}\left(\hat{c}_t(a) - c_t(a)\right)}_{\term_3}  \tag{\pref{lem:one-interval-omd}}\\
        &\qquad + \underbrace{\sum_{k=1}^K \ind\{i_k=\oo\}\left(5\rho\sqrt{AT} + \frac{10A(\hat{s}_k + \ind\{s_k > 0\})\io}{\rho}\right)}_{\term_4}. \tag{\pref{lem: dreg in o}} \\
        \label{eq: dyna regret decomp} 
\end{align}
We bound the four parts in \pref{eq: dyna regret decomp} individually below.  

\paragraph{Bounding $\term_1$. }
\begin{align*}
    \frac{5}{2}\sum_{k=1}^K \ind\{i_k=\xx\} \frac{A\io}{\sum_{a\in[A]} \theta_{e_k}(a) } &\leq \frac{15}{4}\sum_{k=1}^K p_{k,\xx} \frac{A\io}{\sum_{a\in[A]} \theta_{e_k}(a) } + \frac{10A\io^2}{A\gamma}   \tag{\pref{lem: freedman+} and that $\sum_{a\in[A]} \theta_{e_k}(a)\geq A\gamma$ by \pref{line: clipping definition feas}}\\
    &\leq \frac{15}{4}\sum_{k=1}^K  \frac{A\io}{\sum_{a\in[A]} \theta_{e_k}(a) + 
 \frac{1}{u\sqrt{S}}\theta_{e_k}(A+1) } + \frac{5}{2}\sqrt{AT}\io \tag{by the definition of $p_{k,\xx}$} \\
 &\leq \frac{15}{4}\sqrt{\frac{T}{A}}\cdot A\sqrt{S}\io + \frac{5}{2}\sqrt{AT} \io\tag{using $\theta_{e_k}\in\Delta_{A+1}$ and $u\leq 1$}\\
 &\leq 7 \sqrt{SAT}\io. 
\end{align*}
\paragraph{Bounding $\term_2$. }
\begin{align*}
    &\sum_{k=1}^K \ind\{i_k=\xx\}\sum_{a\in[A]} q_{e_k}(a) L_k(a) \\
    &\leq 2 \sum_{k=1}^K p_{k,\xx}\sum_{a\in[A]} q_{e_k}(a) L_k(a) + 4\sqrt{AT}\io  \tag{\pref{lem: freedman+}}  \\
    &\leq 4 \sum_{k=1}^K 
\frac{p_{k,\xx} \ind\{i_k=\oo\}}{p_{k,\oo}} \sum_{a\in[A]} q_{e_k}(a)L_k(a) + 4\sqrt{AT}\io +  8\times 2\sqrt{S}\times \sqrt{AT}\io\tag{by \pref{lem: freedman+} and that $p_{k,\oo}\geq \frac{1}{\sqrt{S}}$} \\
&\leq 4 \sum_{k=1}^K 
\frac{p_{k,\xx} \ind\{i_k=\oo\}}{p_{k,\oo}} \sum_{a\in[A]} q_{e_k}(a)\Lhat_k(a)  \\
&\qquad + 4 \sum_{k=1}^K 
\frac{p_{k,\xx} \ind\{i_k=\oo\}}{p_{k,\oo}} \sum_{a\in[A]} q_{e_k}(a)(L_k(a)-\Lhat_k(a)) + 20\sqrt{SAT}\io \\
&\leq 4\sum_{t=1}^T \sqrt{\frac{SA}{T}}\vhat_t(A+1)     + \sum_{k=1}^K \frac{p_{k,\xx} \ind\{i_k=\oo\}}{
p_{k,\oo}} \min\left\{A + \frac{6As_k\io}{\rho}, \sqrt{AT}\right\} + 20\sqrt{SAT}\io \tag{by the definition of $\vhat_t(A+1)$ and \pref{lem: L error} and $\hat{s}_k\leq s_k$}\\
&\leq 4 \sum_{t=1}^T \sqrt{\frac{SA}{T}}\vhat_t(A+1)  + 2 \sum_{k=1}^K p_{k,\xx} \left(A + \frac{6As_k\io}{\rho}\right) + 4\sqrt{S}\sqrt{AT}\io + 20\sqrt{SAT}\io \tag{by \pref{lem: freedman+} and that $p_{k,\oo}\geq \frac{1}{\sqrt{S}}$}\\
&\leq 4\sum_{t=1}^T \sqrt{\frac{SA}{T}}\vhat_t(A+1) + 2\sqrt{AT} + \frac{12SA\io}{\rho} + 20\sqrt{SAT}\io \\
&\leq 4\sum_{t=1}^T \sqrt{\frac{SA}{T}}\vhat_t(A+1) + 25\sqrt{SAT}\io. 
\end{align*}
\paragraph{Bounding $\term_3$. } By \pref{lem: IX lemma}, 
\begin{align*}
    \sum_{k=1}^K \ind\{i_k=\xx\} \sum_{a\in[A]} q_{e_k}(a)\sum_{t\in \calI_k}\left(\hat{c}_t(a) - c_t(a)\right) \leq \frac{\io}{2\gamma} = \frac{1}{2}\sqrt{AT}.  
\end{align*}
\paragraph{Bounding $\term_4$. } As $\hat{s}_k\leq s_k$, we have 
\begin{align*}
    \term_4\leq \sum_{k=1}^K \ind\{i_k=\oo\}\left(5\rho\sqrt{AT} + \frac{20As_k\io}{\rho}\right)\leq 5\rho T + \frac{20SA\io}{\rho}.  
\end{align*}
Plugging the upper bounds for individual terms in \pref{eq: dyna regret decomp} and using $\rho=4\io\sqrt{\frac{SA}{T}}$ we have 
   \begin{align*}
        \DReg\leq 4\sqrt{\frac{S A}{T}}\sum_{t=1}^T 
\hat{v}_t(A+1)  + 80\sqrt{SAT}\io. 
    \end{align*}
    Then using the fact that $\hat{v}_t(A+1)\geq 0$ in $i_k=\oo$ epochs, and $\hat{v}_t(A+1)=0$ in $i_k=\xx$ epochs completes the proof. 
\end{proof}


\section{Upper Bounding Sum of $\vhat_t$ (\pref{lem: final regret bound})}

\begin{lemma}\label{lem: final regret bound}
   With probability at least $1-O(\delta)$, for any $a\in[A+1]$, 
   \begin{align*}
        \sum_{t=1}^T \hat{v}_t(a) &\leq \frac{1}{3} \sum_{k=1}^K \sum_{t\in \calI_k}\ind\{ i_k=\oo \} |\vhat_t(a)|  + 400 T\io,  
   \end{align*}
   where $\io=\log(AT/\delta)$. 
\end{lemma}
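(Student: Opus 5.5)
The plan is to sum the per-epoch mirror-descent guarantees of \pref{lem: mirror descent for x} (for $\xx$ epochs) and \pref{lem: mirror descent in o} (for $\oo$ epochs) over all $K=\sqrt{T/A}$ epochs, with a single comparator $\theta\in\Theta$ close to the vertex $e_a$. The Bregman terms then telescope to $D_\psi(\theta,\theta_1)/\eta$. Concretely, I take $\theta = (1-\alpha)e_a + \alpha\theta_1$ for a small $\alpha$, for instance $\alpha = 1/(AT)^2$. When $a\in[A]$, I first check that $\theta\in\Theta$: we need $1-\theta(A+1)\ge A\gamma$, and since $\theta(A+1)=\alpha\theta_1(A+1)$ is tiny this holds as long as $A\gamma\le 1/2$, which I assume without loss of generality. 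When $a=A+1$, the vertex itself is infeasible, so I would use $\theta = (1-A\gamma-\alpha)e_{A+1} + (A\gamma+\alpha)\cdot(\text{uniform on }[A])$, mixed slightly with $\theta_1$. For this choice, $D_\psi(\theta,\theta_1)$ is controlled by the entropy part ($\le \log(A+1)$) plus the two log-barrier parts $\ln(1/\theta(A+1))+\ln(1/(1-\theta(A+1)))$. These are $O(\log(1/(\alpha A\gamma)))=O(\io)$, so with $\eta=1/(20T)$ we get $D_\psi(\theta,\theta_1)/\eta = O(T\io)$.

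Next, I sum the additive error terms. The $\xx$ epochs contribute $\frac52\sqrt{AT}\io$ each, giving $\frac52\sqrt{AT}\io\cdot K = \frac52 T\io$ in total. The $\oo$ epochs contribute $\sum_k \ind\{i_k=\oo\}\frac{1}{up_{k,\oo}\sqrt{S}}\min\{8\rho T+12\sqrt{AT}\hat s_k\io/\rho,\tfrac32 T\}$. This is the term I expect to be the main obstacle, since it is importance-weighted and depends on $\hat s_k$. I would apply \pref{lem: freedman+} with $X_k=\ind\{i_k=\oo\}\frac{1}{p_{k,\oo}\sqrt S}\min\{\cdot\}$. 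The clipping $p_{k,\oo}\ge 1/\sqrt S$ gives $X_k\le \frac32 T$, so the deviation is $O(T\io)$. The conditional mean is $\frac{1}{\sqrt S}\E_k[\min\{\cdot\}\mid i_k=\oo]$, where the randomness now comes only from \expalg. Since $\hat s_k\le s_k$ (a detection implies a true change), this is at most $\frac{1}{\sqrt S}(8\rho T+12\sqrt{AT}s_k\io/\rho)$. Summing over $k$ and using $\sum_k s_k\le S$ together with $\rho=4\io\sqrt{SA/T}$ gives
\[
\tfrac{1}{\sqrt S}\Bigl(8\rho T\cdot K + \tfrac{12\sqrt{AT}S\io}{\rho}\Bigr) = O\bigl(T\io + T\bigr),
\]
using $\rho TK/\sqrt S = 4\io T$ and $\sqrt{AT}S\io/(\rho\sqrt S) = \sqrt{T^2}/4$. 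The care needed here is to condition on the epoch start before invoking Freedman, so that $p_{k,\oo}$ and $\theta_{e_k}$ are predictable.

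Finally, the $\frac13\langle\theta,|\vhat_t|\rangle$ terms from the $\oo$ epochs must be converted to $\frac13|\vhat_t(a)|$. With $\theta=(1-\alpha)e_a+\alpha\theta_1$, we have
\[
\langle\theta,\vhat_t\rangle - \tfrac13\langle\theta,|\vhat_t|\rangle \ \ge\ (1-\alpha)\bigl(\vhat_t(a) - \tfrac13|\vhat_t(a)|\bigr) - \tfrac43\alpha\|\vhat_t\|_1 .
\]
Every $|\vhat_t(b)|$ is at most $\mathrm{poly}(AT)$: it is $\le \frac32 T + uT$ in $\oo$ epochs by \pref{eq: verify 1}, $\le T\sqrt S/\theta_{e_k}(A+1)$-type bounds for coordinate $A+1$, and at most $\sqrt{T/A}/\gamma$ in $\xx$ epochs. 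Choosing $\alpha$ polynomially small therefore makes these cross terms $O(1)$. The same argument handles the factor $(1-\alpha)$ on the left-hand side. For $a=A+1$ I would use the analogous decomposition with the $A\gamma$ mass, bounding the extra terms $A\gamma\sum_t|\vhat_t(b)|$ crudely by $O(T\io)$ via the same Freedman-type bounds. This bookkeeping could be the second delicate point, but I expect it to be routine. Collecting all constants yields $\sum_t\vhat_t(a)\le \frac13\sum_k\sum_{t\in\calI_k}\ind\{i_k=\oo\}|\vhat_t(a)| + 400T\io$ after a union bound over $a$ and the events used.
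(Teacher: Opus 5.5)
Your proposal is correct and follows the paper's proof. You sum \pref{lem: mirror descent for x} and \pref{lem: mirror descent in o} so the Bregman terms telescope, bound the $\oo$-epoch overhead using $\hat s_k\le s_k$, Freedman and the clipping $p_{k,\oo}\ge 1/\sqrt{S}$, and use a feasible comparator near $e_a$ with $D_\psi(\theta,\theta_1)\le \psi(\theta)-\psi(\theta_1)$. The deviations are minor: for $a=A+1$ you spread the forced $A\gamma$ mass uniformly on $[A]$ (so \pref{lem: at least 1/2} is not needed) and control the $\xx$-epoch cross terms by concentration of $\sum_t \hat c_t(b)$ rather than the paper's signed IX argument; and the last-coordinate bound you actually need is $|\hat v_t(A+1)|\le T$, which follows from $p_{k,\xx}/p_{k,\oo}\le\sqrt{S}$, not from a $1/\theta_{e_k}(A+1)$-type estimate.
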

\begin{proof}
    Combining \pref{lem: mirror descent for x} and \pref{lem: mirror descent in o}, we have for any $\theta\in\Theta$, 
    \begin{align}
        &\sum_{t=1}^T  \inner{\theta, \vhat_t} 
        = \sum_{k=1}^K \sum_{t\in\calI_k}  \inner{\theta, \vhat_t} \notag \\
        &\leq \frac{D_\psi(\theta, \theta_1)}{\eta} + \frac{5}{2}\sum_{k=1}^K \ind\{i_k=\xx\}\sqrt{AT}\io \notag\\
        &\qquad + \underbrace{\sum_{k=1}^K  \frac{\ind\{i_k=\oo\}}{up_{k,\oo}\sqrt{S}}\left(\min\left\{8\rho T + \frac{12\sqrt{AT}\hat{s}_k \io}{\rho}, \frac{3}{2}T\right\} \right)}_{(\star)} + \frac{1}{3}\sum_{k=1}^K \sum_{t\in\calI_k} \inner{\theta, |\vhat_t|}\ind\{i_k=\oo\} \label{eq: tmp bound for main}
    \end{align}
    The $(\star)$ term can be bounded as 
\begin{align*}
   (\star)&\leq  \frac{16}{9}\sum_{k=1}^K  \frac{\ind\{i_k=\oo\}}{p_{k,\oo}\sqrt{S}}\min\left\{8\rho T + \frac{12\sqrt{AT}s_k \io}{\rho}, \frac{3}{2}T\right\}   \tag{$\hat{s}_k\leq s_k$}\\
   &\leq 3\sum_{k=1}^K  \frac{1}{\sqrt{S}}\min\left\{8\rho T + \frac{12\sqrt{AT}s_k \io}{\rho}, \frac{3}{2}T\right\} + 8\times \frac{3}{2}T \io   \tag{using \pref{lem: freedman+} and that $p_{k,\oo}\sqrt{S}\geq 1$}\\
        &\leq 3\frac{1}{\sqrt{S}}\left(32K\sqrt{SAT}\io + \frac{12\sqrt{AT}S\io}{4\io\sqrt{\frac{SA}{T}}} \right) + 12T\io \tag{$\exprate = 4\io\sqrt{SA/T}$}\\
        &\leq 120T\io.   
\end{align*}
    For a fixed $a\in[A+1]$, consider 
    \begin{align*}
        \theta = (1-\alpha) \e_a + \alpha \theta_1. 
    \end{align*}
    Then \pref{eq: tmp bound for main} gives 
    \begin{align}
         \sum_{t=1}^T \hat{v}_t(a) &\leq \frac{D_\psi(\theta, \theta_1)}{\eta} + \frac{1}{3} \sum_{k=1}^K \sum_{t\in \calI_k}\ind\{ i_k=\oo \} |\vhat_t(a)|  +  122.5T\io  \nonumber \\
         &\qquad \qquad + \alpha  \sum_{t=1}^T  \left\langle  
\e_a - \theta_1, \hat{v}_t \right\rangle + \frac{\alpha}{3} \sum_{k=1}^K \ind\{i_k=\oo\} \sum_{t\in \calI_k} \left\langle  \theta_1 - \e_a, |\vhat_t| \right\rangle.   \label{eq: gives} 
    \end{align}
    Below, we bound the remaining terms above. 
    \begin{align*}
        D_\psi(\theta, \theta_1) 
        &= \psi(\theta) - \psi(\theta_1) - \inner{\nabla\psi(\theta_1), \theta-\theta_1}  \\
        &\leq \psi(\theta) - \psi(\theta_1)   \tag{$\theta_1$ is the minimizer of $\psi$} \\
        &= \left(  \ln \frac{1}{\theta(A+1)} + \ln \frac{1}{\sum_{a'=1}^A \theta(a') } + \sum_{a=1}^{A+1}  \theta(a) \ln \theta(a) \right)  \\
        &\qquad \qquad - \left(  \ln \frac{1}{\theta_1(A+1)} + \ln \frac{1}{\sum_{a'=1}^A \theta_1(a') } + \sum_{a=1}^{A+1}  \theta_1(a) \ln \theta_1(a) \right) \\
        &\leq  \ln \frac{\theta_1(A+1)}{\theta(A+1)} + \ln \frac{\sum_{a'=1}^A \theta_1(a') }{\sum_{a'=1}^A \theta(a')} - \sum_{a=1}^{A+1}  \theta_1(a)\ln\theta_1(a)   \tag{$\sum_a \theta(a)\ln\theta(a)\leq 0$} \\
        &\leq 2\ln(1/\alpha) + \ln(A+1).   \tag{$\theta(a)\geq \alpha\theta_1(a)$, and Shannon entropy $\leq \ln(A+1)$}
    \end{align*}
\paragraph{$a\in[A]$ case. }
For $a\in[A]$, let $\alpha=\frac{1}{T}$, which makes $\theta\in\Theta$ (defined in \pref{line: clipping definition feas}). We bound 
    \begin{align*}
        &\alpha  \sum_{t=1}^T  \left\langle  
\e_a - \theta_1, \hat{v}_t \right\rangle + \frac{\alpha}{3} \sum_{k=1}^K \ind\{i_k=\oo\} \sum_{t\in \calI_k} \left\langle  \theta_1 - \e_a, |\vhat_t| \right\rangle  \\
        &\leq \frac{4\alpha}{3} \sum_{t=1}^T  20T  \leq 27T.  \tag{$|\hat{v}_t(a)|\leq 10T$ for any $t$}
    \end{align*}
    \paragraph{$a=A+1$ case. } For $a=A+1$, let $\alpha=2A\gamma = 8\sqrt{\frac{A}{T}}\io$, which also makes $\theta\in\Theta$ because $\sum_{a\in[A]}\theta(a)\geq \alpha \sum_{a\in[A]}\theta_1(a)\geq \frac{1}{2}\alpha=A\gamma$ by \pref{lem: at least 1/2}. We bound 
    \begin{align*}
        &\alpha  \sum_{t=1}^T  \left\langle  
\e_a - \theta_1, \hat{v}_t \right\rangle + \frac{\alpha}{3} \sum_{k=1}^K \ind\{i_k=\oo\} \sum_{t\in \calI_k} \left\langle  \theta_1 - \e_a, |\vhat_t| \right\rangle  \\
        &=\alpha \sum_{k=1}^K \sum_{t\in\calI_k} \left\langle  
\e_a - \theta_1, \hat{v}_t \right\rangle \ind\{i_k=\xx\} + \alpha \sum_{k=1}^K \sum_{t\in\calI_k} \left\langle  
\e_a - \theta_1, \vhat_t - \tfrac{1}{3}|\vhat_t|\right\rangle \ind\{i_k=\oo\} \\
&\leq - \alpha \sum_{k=1}^K \sum_{t\in\calI_k} \left\langle  
 \theta_1, \hat{v}_t \right\rangle \ind\{i_k=\xx\} + \frac{4\alpha}{3} \sum_{k=1}^K 20T \\
 &\leq -\alpha \sum_{k=1}^K \sum_{t\in\calI_k} \left\langle  
 \theta_1, \hat{v}_t \right\rangle \ind\{i_k=\xx\} + 240T\io, 
    \end{align*} 
    where in the second-to-last inequality we use that $\vhat_t(A+1)=0$ when $i_k=\xx$, and that $\|\vhat_t\|_\infty\leq 10T$ and $\vhat_t\neq 0$ in only one round in an epoch with $i_k=\oo$.  Finally, we bound the first term in the last expression: 
    \begin{align*}
        &-\alpha \sum_{k=1}^K  \ind\{i_k=\xx\} \sum_{t\in\calI_k} \inner{\theta_1, \vhat_t} \\
        &= \alpha \sqrt{\frac{T}{A}}\sum_{a=1}^A \theta_1(a) \sum_{k=1}^K \ind\{i_k=\xx\} \sum_{t\in\calI_k} \left(\hat{c}_t(a) - c_t(a_t)\right) \\
        &=  8\io\sum_{a=1}^A \theta_1(a) \sum_{k=1}^K \ind\{i_k=\xx\} \sum_{t\in\calI_k} \left(\hat{c}_t(a) - c_t(a)\right) \\
        &\qquad +  8\io\sum_{a=1}^A \theta_1(a) \sum_{k=1}^K \ind\{i_k=\xx\} \sum_{t\in\calI_k} \left(c_t(a) - c_t(a_t)\right)\tag{$\alpha=8\sqrt{\frac{A}{T}}\io$} \\
        &\leq  \frac{4\io^2}{\gamma} + 8T\io \tag{\pref{lem: IX lemma}} \\
        &\leq 12T\io.  
    \end{align*}
    Collecting terms gives the desired bound. 
\end{proof}

\begin{lemma}\label{lem: at least 1/2}
    \begin{align*}
        \sum_{a=1}^A \theta_1(a)\geq \frac{1}{2}.  
    \end{align*}
\end{lemma}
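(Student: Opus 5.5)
The plan is to use the explicit structure of $\theta_1 = \argmin_{\theta\in\Theta}\psi(\theta)$. Write $s = \sum_{a=1}^A\theta(a) = 1-\theta(A+1)$. For fixed $s$, the entropy part $\sum_{a\le A}\theta(a)\ln\theta(a)$ over the first $A$ coordinates is minimized by spreading mass uniformly, $\theta(a)=s/A$. This follows by convexity and symmetry, since $\psi$ is symmetric in the first $A$ coordinates and strictly convex, so the unique minimizer must be symmetric. Hence $\theta_1 = (s/A,\ldots,s/A,1-s)$ for some $s$. On this family, $\psi$ reduces to the univariate function
\begin{align*}
f(s) = s\ln\frac{s}{A} + (1-s)\ln(1-s) - \ln(1-s) - \ln s = s\ln\frac{s}{A} - s\ln(1-s) - \ln s ,
\end{align*}
to be minimized over $s\in[A\gamma,1)$.

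The next step is to show that the minimizer satisfies $s\ge 1/2$. Differentiating gives
\begin{align*}
f'(s) = \ln\frac{s}{A} + 1 - \ln(1-s) + \frac{s}{1-s} - \frac{1}{s}.
\end{align*}
The function $f$ is strictly convex, being a restriction of a strictly convex $\psi$ to a line. It therefore suffices to show $f'(1/2)\le 0$, so that $f$ is nonincreasing on $(0,1/2]$ and its minimizer, whether constrained or not, lies in $[1/2,1)$. The constraint $s\ge A\gamma$ only matters if $A\gamma>1/2$. In that case $s\ge A\gamma>1/2$ holds directly, so the constraint never hurts.

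Computing $f'(1/2) = \ln\frac{1}{2A} + 1 + \ln 2 + 1 - 2 = -\ln A$, which is at most $0$ because $A\ge 1$. Thus $f$ is nonincreasing up to $1/2$, and by convexity the minimum of $f$ over the feasible interval is attained at some $s\ge 1/2$. This gives $\sum_{a=1}^A\theta_1(a)\ge 1/2$.

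The main point needing care is the symmetry reduction. I would justify it by noting that for any feasible $\theta$, averaging $\theta$ over all permutations of the first $A$ coordinates keeps it in $\Theta$, because $\Theta$ depends only on $\theta(A+1)$, and does not increase $\psi$, by convexity. The derivative computation itself is routine.
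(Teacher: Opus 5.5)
Your argument is correct. Symmetrizing over the first $A$ coordinates is legitimate because $\Theta$ only constrains $\theta(A+1)$, and $f$ is strictly convex; directly, $f''(s)=\frac{1}{s}+\frac{1}{s^2}+\frac{1}{1-s}+\frac{1}{(1-s)^2}>0$. Your value $f'(\frac12)=-\ln A\le 0$ is right, and you handle the case $A\gamma>\frac12$ separately.

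The paper argues differently. It takes the minimizer $\tilde{\theta}$ of $\psi$ over the whole simplex $\Delta_{A+1}$ and writes the KKT conditions: $1+\ln\tilde{\theta}(a)-\frac{1}{\tilde{s}}+\lambda=0$ for $a\in[A]$, where $\tilde{s}=\sum_{a'\in[A]}\tilde{\theta}(a')$, and $1+\ln\tilde{\theta}(A+1)-\frac{1}{\tilde{\theta}(A+1)}+\lambda=0$. Assuming $\tilde{s}<\frac12$, it gets a contradiction using only the crude bound $\tilde{\theta}(a)\le\tilde{s}<\frac12<\tilde{\theta}(A+1)$ and the monotonicity of $x\mapsto\ln x-\frac{1}{x}$. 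It then notes $\tilde{\theta}\in\Theta$, so $\tilde{\theta}=\theta_1$.

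The underlying first-order condition is the same in both proofs. Your equation $f'(s)=0$ rewrites as $\ln\frac{s}{A}-\frac{1}{s}=\ln(1-s)-\frac{1}{1-s}$, which is exactly the difference of the paper's two KKT equations at $\theta(a)=s/A$. You reach it through a dimension reduction, a sign check at $s=\frac12$, and convexity, rather than by contradiction.

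What each route buys:
\begin{itemize}
\item The paper's route is shorter and needs neither symmetrization nor explicit differentiation. It does tacitly require $A\gamma\le\frac12$ so that the simplex minimizer lies in $\Theta$. This holds, since $A\gamma=4\iota\sqrt{A/T}\le\rho\le\frac12$, but the proof does not say so.
\item Your route treats the constraint $s\ge A\gamma$ explicitly. It also shows that $\theta_1$ is uniform on its first $A$ coordinates, and $f'(\frac12)=-\ln A$ shows the bound is strict once $A\ge 2$.
\end{itemize}
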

\begin{proof}
     Let $\tilde{\theta}$ be the minimizer of $\psi(\theta) = \sum_{a\in[A+1]}\theta(a)\ln \theta(a) + \log\frac{1}{\sum_{a\in[A]}\theta(a)} + \log\frac{1}{\theta(A+1)}$ over $\Delta_{A+1}$. Then the KKT condition requires 
     \begin{align*}
         &1 + \log\tilde{\theta}(a) - \frac{1}{\sum_{a\in[A]}\tilde{\theta}(a)} + \lambda =0, \qquad \text{for}\ a\in[A], \\
         &1 + \log\tilde{\theta}(A+1) - \frac{1}{\tilde{\theta}(A+1)} + \lambda =0
     \end{align*}
     for some $\lambda\in\mathbb{R}$. 
     Suppose that $\sum_{a=1}^A \tilde{\theta}(a)<\frac{1}{2}$. Then we have $\tilde{\theta}(A+1) > \frac{1}{2}$, and thus for any $a\in[A]$
     \begin{align*}
         1 + \log\tilde{\theta}(a) - \frac{1}{\sum_{a\in[A]}\tilde{\theta}(a)} + \lambda 
         &< 1 +\log\frac{1}{2} - \frac{1}{1/2} + \lambda \\
         &< 1+ \ln \tilde{\theta}(A+1) - \frac{1}{\tilde{\theta}(A+1)} + \lambda, 
     \end{align*}
     contradicting with the KKT condition. Therefore, $\sum_{a=1}^A \tilde{\theta}(a)\geq \frac{1}{2}$. As $\tilde{\theta}\in \Theta\subset \Delta_{A+1}$, $\tilde{\theta}$ is also the minimizer of $\psi$ over $\Theta$, which is $\theta_1$. Hence, we have $\sum_{a=1}^A \theta_1(a)\geq \frac{1}{2}$.


\end{proof}

\end{document}